\newcommand\reallywidehat[1]{%
\savestack{\tmpbox}{\stretchto{%
  \scaleto{%
    \scalerel*[\widthof{\ensuremath{#1}}]{\kern-.6pt\bigwedge\kern-.6pt}%
    {\rule[-\textheight/2]{1ex}{\textheight}}%WIDTH-LIMITED BIG WEDGE
  }{\textheight}%
}{0.5ex}}%
\stackon[1pt]{#1}{\tmpbox}%
}
\newenvironment{talign*}
 {\csname align*\endcsname}
 {\endalign}
\newtheorem{theorem}{Theorem}
\newtheorem{connection}{Connection}
\crefname{connection}{Connection}{Connections}
\newtheorem{conjecture}{Conjecture}
\crefname{conjecture}{Conjecture}{Conjectures}
\newtheorem{lemma}{Lemma}
\newtheorem{proposition}{Proposition}
\newtheorem{assumption}{Assumption}
\crefname{assumption}{}{}
\theoremstyle{definition}
\newtheorem{definition}{Definition}
\def\Re{\mathbb{R}}
\def\bE{\mathbb{E}}
\def\bU{\mathbb{U}}
\def\calX{\mathcal{X}}
\def\calH{\mathcal{H}}
\def\calP{\mathcal{P}}
\def\calF{\mathcal{F}}
\def\calN{\mathcal{N}}
\def\calB{\mathcal{B}}
\def\Id{\mathrm{Id}}
\def\MMD{\mathrm{MMD}}
\def\ekqd{\text{e-KQD}}
\def\supkqd{\text{sup-KQD}}
\def\argmin{\mathop{\text{argmin}}}
\def\argmax{\mathop{\text{argmax}}}
\def\d{\text{d}}
\newcommand{\bigo}{\mathcal{O}}
\newcommand{\masha}[1]{\textcolor{teal}{}}
\crefname{enumi}{}{}
\crefname{enumii}{}{}
\crefname{corollary}{Corollary}{Corollaries}
\newcommand*{\addFileDependency}[1]{% argument=file name and extension
\typeout{(#1)}% latexmk will find this if $recorder=0
% however, in that case, it will ignore #1 if it is a .aux or
% .pdf file etc and it exists! If it doesn't exist, it will appear
% in the list of dependents regardless)
%
% Write the following if you want it to appear in \listfiles
% --- although not really necessary and latexmk doesn't use this
%
\@addtofilelist{#1}
%
% latexmk will find this message if #1 doesn't exist (yet)
\IfFileExists{#1}{}{\typeout{No file #1.}}
}\makeatother
\newcommand{\eqcolon}{\mathrel{\resizebox{\widthof{$\mathord{=}$}}{\height}{ $\!\!=\!\!\resizebox{1.2\width}{0.8\height}{\raisebox{0.23ex}{$\mathop{:}$}}\!\!$ }}}
\begin{document}
\twocolumn[

\icmltitle{Kernel Quantile Embeddings and Associated Probability Metrics}

\icmlsetsymbol{equal}{*}

\begin{icmlauthorlist}
\icmlauthor{Masha Naslidnyk}{uclcs}
\icmlauthor{Siu Lun Chau}{ntu}
\icmlauthor{Fran\c{c}ois-Xavier Briol}{uclstats}
\icmlauthor{Krikamol Muandet}{cispa}
\end{icmlauthorlist}

\icmlaffiliation{ntu}{College of Computing \& Data Science, Nanyang Technological University, Singapore}
\icmlaffiliation{uclcs}{Department of Computer Science, University College London, London, UK}
\icmlaffiliation{cispa}{CISPA Helmholtz Center for Information Security, Saarbr\"{u}cken, Germany}
\icmlaffiliation{uclstats}{Department of Statistical Science, University College London, London, UK}

\icmlcorrespondingauthor{Masha Naslidnyk}{masha.naslidnyk.21@ucl.ac.uk}

\icmlkeywords{Machine Learning, ICML}

\vskip 0.3in
]
\pagestyle{fancy}

\printAffiliationsAndNotice{}

\begin{abstract}
Embedding probability distributions into reproducing kernel Hilbert spaces (RKHS) has enabled powerful nonparametric methods such as the maximum mean discrepancy (MMD), a statistical distance with strong theoretical and computational properties. At its core, the MMD relies on kernel mean embeddings to represent distributions as mean functions in RKHS. However, it remains unclear if the mean function is the only meaningful RKHS representation.
Inspired by generalised quantiles, we introduce the notion of \emph{kernel quantile embeddings (KQEs)}. We then use KQEs to construct a family of distances that:
(i) are probability metrics under weaker kernel conditions than MMD;
(ii) recover a kernelised form of the sliced Wasserstein distance; and
(iii) can be efficiently estimated with near-linear cost.
Through hypothesis testing, we show that these distances offer a competitive alternative to MMD and its fast approximations.
\end{abstract}

\section{Introduction}
Many machine learning and statistical methods rely on representing, comparing, and measuring the distance between probability distributions.
Kernel mean embeddings (KMEs) have been shown to be a mathematically and computationally convenient approach for this task \citep{Berlinet2004,Smola07Hilbert,Muandet2016}. At its core, a KME represents a distribution as a mean function in a reproducing kernel Hilbert space (RKHS). When the kernel function is sufficiently regular and satisfies a condition called `characteristic' \citep{Sriperumbudur2009}, the representation of a distribution as a KME is unique, capturing all information about the distribution.
The probability metric constructed by comparing KMEs, called maximum mean discrepancy (MMD) \citep{Borgwardt06:MMD,gretton2012kernel}, has received significant attention due to its computational tractability. Its most common estimator has cost $\bigo(n^2)$ and can be estimated with error $\bigo(n^{-\nicefrac{1}{2}})$ in the number of data points $n$, but cheaper alternatives have also been proposed \cite{gretton2012kernel,Chwialkowski2015,Bodenham2023,Schrab2022}.
For this reason, KMEs and the MMD have been used to tackle a broad range of tasks from hypothesis testing~\citep{gretton2012kernel} to parameter estimation~\citep{briol2019statistical,Cherief-Abdellatif2020_MMDBayes}, causal inference~\citep{muandet2021counterfactual,Sejdinovic2024}, feature attribution~\citep{chau2022rkhs,chau2023explaining}, and learning on distributions~\citep{Muandet12:SMM,szabo2016learning}.

Nevertheless, the question of whether alternative kernel-based embeddings, particularly nonlinear counterparts, could exhibit desirable properties has long remained under-explored, in part due to the associated computational challenges. Recently, this gap has begun to be addressed, with works investigating kernelised medians \citep{Nienkotter2023}, cumulants \citep{Bonnier2023}, and variances \citep{Makigusa2024}. In this paper, we consider an alternative based on the concept of quantiles in an RKHS, which we term \emph{kernel quantile embeddings (KQEs)}. Similarly to the construction of KMEs, KQEs are obtained by considering the directional quantiles of a feature map obtained from a reproducing kernel. KQEs also lead naturally to a family of distances which we call \emph{kernel quantile discrepancies (KQDs)}. This approach is motivated from the statistics and econometrics literature~\citep{Kosorok1999,dominicy2013method,ranger2020minimum,stolfi2022sparse}, where matching quantiles has been shown effective in constructing statistical estimators and hypothesis tests.

Our paper identifies several desirable properties of KQEs.
Firstly, from a theoretical point of view, we show in \Cref{res:cramer-wold} and \Cref{res:if_meanchar_then_quantchar} that KQEs can represent distributions on any space for which we can define a kernel, and that the conditions to make a kernel \emph{quantile-characteristic}, that is for KQEs to be a one-to-one representation of a probability distribution, are weaker than for the classical notion of characteristic, which we now call \emph{mean-characteristic}. We then show in \Cref{res:consistency_KQE} that KQEs can be estimated at a rate of $\bigo(n^{-\nicefrac{1}{2}})$ in the number of samples $n$; the same rate as that of the empirical estimator of KMEs~\citep{tolstikhin2017minimax}. As a result, KQDs are probability metrics under much weaker conditions than the MMD (see \Cref{res:KQE_characterise_dists}), while maintaining comparable computational guarantees, including a finite-sample consistency with rate $\bigo(n^{-\nicefrac{1}{2}})$ (up to log terms) for their empirical estimators (see \Cref{res:consistency_KQD}).

Secondly, we establish a number of connections between KQDs, Wasserstein distances \citep{Kantorovich1942,villani2009optimal}, and generalisations or approximations thereof. In particular, special cases of our KQDs recover existing sliced Wasserstein (SW) distances~\citep{bonneel2015sliced,wang_two-sample_2022,Wang2024} and can interpolate between the Wasserstein distance and MMD similarly to Sinkhorn divergences \citep{cuturi2013sinkhorn,Genevay2019}. These results are presented in Connections \ref{res:connections_slicedwasserstein}, \ref{res:connections_maxslicedwasserstein}, and \ref{res:connections_sinkhorn}.

Finally, we consider a specific instance of KQDs based on Gaussian averaging over kernelised quantile directions, which we name the \emph{Gaussian expected kernel quantile discrepancy (e-KQD)}. Beyond the desirable theoretical properties described above, we show that the Gaussian e-KQD also has attractive computational properties. In particular, we show that it has a natural estimator which only requires sampling from a Gaussian measure on the RKHS, and which can be computed with complexity $\bigo(n\log^2(n) )$. It is studied empirically in \Cref{sec:experiments} with experiments on two-sample hypothesis testing, where we show that it is competitive with the MMD: it often outperforms estimators of the MMD of the same asymptotic complexity, and in some cases even outperforms MMD at higher computational costs.

\section{Background}

Let $\calP_\calX$ denote the set of Borel probability measures on a Borel space $\calX$. We begin by reviewing existing definitions of quantiles, followed by a summary of relevant work on probability metrics, including the MMD and SW distances.

\subsection{Quantiles}
\label{sec:background_quantiles}
\textbf{Univariate quantiles.} Let $\calX \subseteq \mathbb{R}$. For $\alpha \in [0,1]$, the \emph{$\alpha$-quantile of $P \in \calP_{\calX}$} is defined as $\rho^{\alpha}_P = \inf \{y \in \calX : \text{Pr}_{Y\sim P}[Y \leq y] \geq \alpha\}$.
When $P$ has a continuous and strictly monotonic cumulative distribution function $F_P$, quantiles can also be defined through the inverse of that function $\rho^{\alpha}_P \coloneqq F^{-1}_P(\alpha)$. Notable special cases include $\alpha = 0.5$, corresponding to the median, and $\alpha = 0.25,0.75$, corresponding to lower- and upper-quartiles respectively. Importantly, $P$ is fully characterised by its quantiles $\{\rho^{\alpha}_P\}_{\alpha \in [0,1]}$.

From a computational viewpoint, univariate quantiles can be straightforwardly estimated using order statistics. Suppose $y_{1:n} = [y_1 \dots y_n]^\top \sim P$, and denote by $[y_{1:n}]_j$ the $j^\text{th}$ order statistic of $y_{1:n}$ (i.e. the $j^{\text{th}}$ largest value in the vector $[y_1 \dots y_n]^\top$). The $\alpha$-quantile of $P$, denoted $\rho^\alpha_P$, can be estimated using $[y_{1:n}]_{\lceil \alpha n \rceil}$ where $\lceil \cdot \rceil$ denotes the ceiling function. This estimator is known to converge at a rate of $\bigo(n^{-\nicefrac{1}{2}})$~\citep[Section 2.3.2]{serfling2009approximation}.

\textbf{Multivariate quantiles.} Suppose now that $\calX \subseteq \mathbb{R}^d$ for $d > 1$. The previous definition of quantiles depends on the existence of an ordering in $\calX$, and its natural generalisation to $d>1$ is therefore not unique \citep{Serfling2002}. In this paper, we will focus on the notion of \textit{$\alpha$-directional quantile of $P$ along some direction $u$ in the unit sphere $S^{d-1}$}~\citep{Kong2012},
\begin{equation*}
    \rho^{\alpha,u}_P \coloneqq \rho^{\alpha}_{\phi_u \#P} u,\qquad \phi_u(y)= \langle u, y \rangle.
\end{equation*}
Here, $\phi_u: \calX \to \Re$ is the projection map onto $u$, and $\rho^{\alpha}_{\phi_u\#P}$ is the standard one-dimensional $\alpha-$quantile of $\phi_u \# P$---the law of $\phi_u(X)$ for $X \sim P$. We note that this quantile is now a $d$-dimensional vector as opposed to a scalar.
The $\alpha$-directional quantiles for $d=2$ are illustrated in \Cref{fig:2d-quantiles}, in which the probability measure $P$ is projected onto some line; see the left and middle plots. Once again, we can use quantiles to characterise $P$, although we must now consider all $\alpha-$quantiles over a sufficiently rich family of projections $\{\rho_P^{\alpha, u} : \alpha \in [0,1], u \in S^{d-1}\}$; see Theorem 5 of \cite{Kong2012} for sufficient regularity conditions.

\begin{figure}
    \centering
    \resizebox{\columnwidth}{!}{
    \includegraphics[height=1.1in]{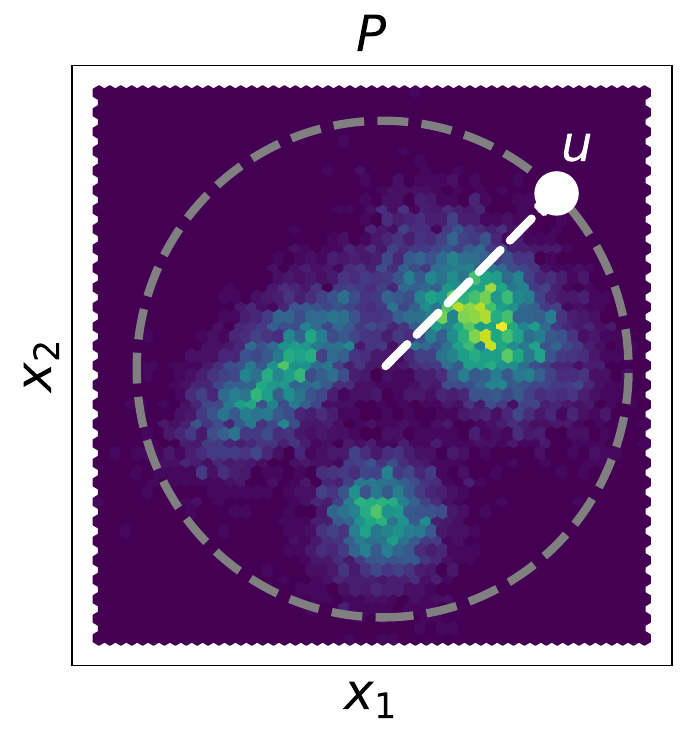}
    \includegraphics[height=1.1in]{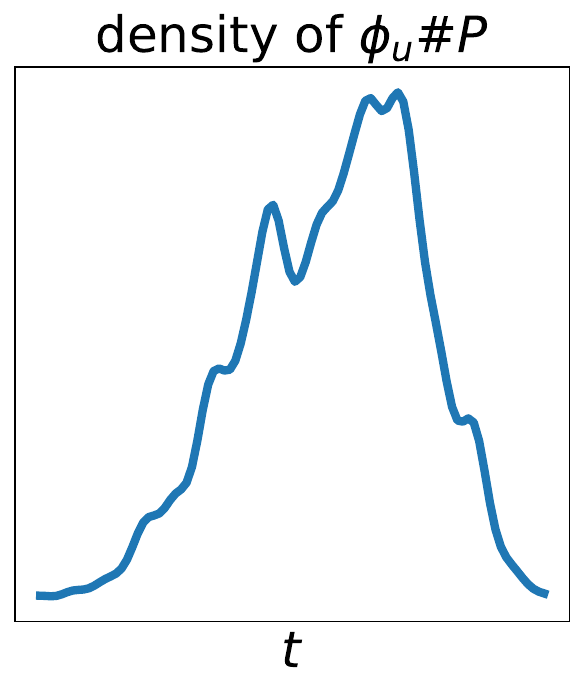}
    \includegraphics[height=1.1in]{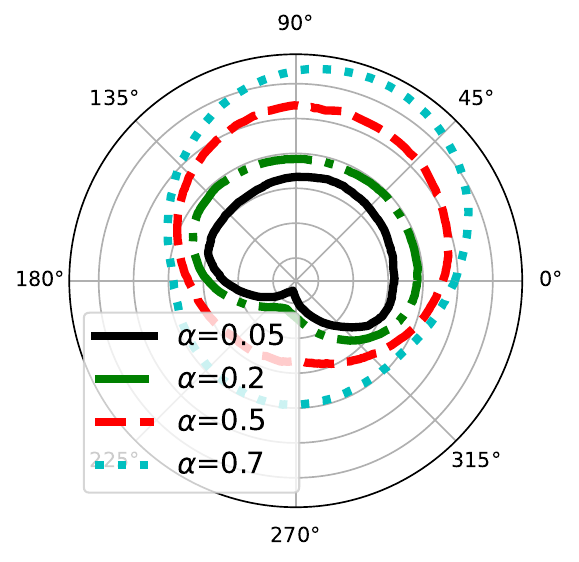}}
    \vspace{-5mm}
    \caption{\textit{Illustration of bivariate quantiles.} \textit{Left:} Bivariate distribution $P$. \textit{Center:} Density of the projection of $P$ onto direction $u$ on the unit circle, with $\phi_u(x)=\langle u, x \rangle$. \textit{Right:} different quantiles for all possible directions $u$.}
    \label{fig:2d-quantiles}
    \vspace{-5mm}
\end{figure}

Although these multivariate quantiles
satisfy scale equivariance and rotation equivariance, they do not satisfy location equivariance.
To remedy this issue,~\citet{fraiman2012quantiles} introduced a related notion, the \emph{centered} $\alpha$-directional quantile:
\begin{equation}
\label{eq:direct-q}
    \tilde \rho^{\alpha, u}_P := \left(\rho^\alpha_{\phi_u \# P} - \phi_u(\mathbb{E}_{X\sim P}[X])\right)u + \bE_{X \sim P} [X],
\end{equation}
Further details are provided in \Cref{appendix:centered_quantiles}.

\subsection{Probability Metrics}

\textbf{Kernel mean embeddings and MMD.} Let $\calX$ be some Borel space, and $(\calH, \langle \cdot, \cdot \rangle_\calH)$ be a reproducing kernel Hilbert space (RKHS) induced by a real-valued kernel $k: \calX \times \calX \to \Re$ \citep{Scholkopf01:LKS,Berlinet2004},
the \emph{kernel mean embedding} (KME) $\mu_P: \calX \to \Re$ of any $P \in \calP_\calX$ is defined as the Bochner integral $\mu_P(\cdot)=\bE_{X\sim P}[k(X, \cdot)] \in \calH$.
The integral can be shown to exist provided $\bE_{X \sim P}[\sqrt{k(X, X)}]<\infty$; this, in turn, holds for all $P \in \calP_\calX$ if and only if $k$ is bounded \citep{Smola07Hilbert}. If the mapping $P \to \mu_P$ is injective, the kernel $k$ is said to be \emph{mean-characteristic}. Many standard kernels---the Mat\'ern family, Gaussian, Laplacian---have been shown to be characteristic on sufficiently regular spaces~\citep{sriperumbudur2011universality,ziegel2022characteristic}.
KMEs with mean-characteristic kernels lead to the squared \textit{maximum mean discrepancy (MMD)} defined for any $P,Q \in \calP_\calX$ as
\begin{align*}
    &\text{MMD}^2(P,Q) \coloneqq \|\mu_P - \mu_Q \|_{\mathcal{H}}^2 = \bE_{X,X' \sim P}[k(X,X')] \\
    &\hspace{1cm}
    - 2 \bE_{X \sim P,X' \sim Q}[k(X,X')] + \bE_{X,X' \sim Q}[k(X,X')].
\end{align*}
This can be computed in rare cases \citep{Briol2025}, but typically needs to be estimated.
Given $n$ i.i.d. realisations from $P$ and $Q$, MMD$^2$ is most commonly estimated with a U-statistic, which converges to $\text{MMD}^2(P,Q)$ as $\bigo(n^{-\nicefrac{1}{2}})$ and has computational complexity of $\bigo(n^2)$---although linear-cost alternatives are also available~\citep[Lemma 14.]{gretton2012kernel}. These are discussed in \Cref{sec:experiments} and \Cref{appendix_sec:estimators}.

\textbf{Wasserstein distances.} Let $c: \calX \times \calX \to \Re$ be a metric on $\calX$, and $\Gamma(P,Q) \subseteq \calP_{\calX \times \calX}$ denote the space of joint distributions on $\cal{X} \times \cal{X}$ with first and second marginals $P$ and $Q$, respectively. The \textit{$p$-Wasserstein distance}~\citep{Kantorovich1942,villani2009optimal} quantifies the cost of optimally transporting one distribution to another under ``cost'' $c: \calX \times \calX \rightarrow \Re$. It is a probability metric under mild conditions~\citep[Section 6]{villani2009optimal}, and is defined as
\begin{align*}
    W_p(P,Q) = \left(\inf_{\pi \in \Gamma(P,Q)} \bE_{(X,Y)\sim \pi}\left[c(X,Y)^p \right]\right)^{\nicefrac{1}{p}}.
\end{align*}
When $\calX \subseteq \mathbb{R}^d$, the metric $c$ is typically taken to be the Euclidean distance $c(x,y)=\|x-y\|_2$.
The Wasserstein distance can then be estimated by solving an optimal transport problem using empirical measures constructed through samples of $P$ and $Q$, an approach that suffers from a high computational cost of $\bigo(n^3)$ and, when $P,Q$ have at least $2p$ moments, slow convergence of $\bigo(n^{-1 / \max(d, 2p)})$ when $\calX \subseteq \Re^d$ for $d>1$ \citep{Fournier2015}.

However, when $d=1$, $W_p$ can be computed at lower cost of $\bigo(n \log n)$ with convergence of $\bigo(n^{-\nicefrac{1}{2p}})$ when $P,Q$ have at least $2p$ moments.
This motivated the introduction of the \textit{sliced Wasserstein} (SW) distance~\citep{bonneel2015sliced}. Recall that $\phi_u(x) = u^\top x$. The SW distance projects high-dimensional distributions $P, Q$ onto elements on the unit sphere $u \in S^{d-1}$ sampled uniformly, computes the Wasserstein distance between the projected distributions, now in $\mathbb{R}$, and averages over the projections:
\begin{equation*}
    \text{SW}_p(P, Q) =\left( \bE_{u \sim \bU(S^{d-1})}\left[W_p^p(\phi_u\#P, \phi_u\#Q)\right] \right)^{\nicefrac{1}{p}}.
\end{equation*}
A further refinement, the \textit{max-sliced Wasserstein (max-SW)} distance~\citep{Deshpande2018}, aims to identify the optimal projection that maximises the 1D Wasserstein distance:
\begin{equation*}
    \text{max-SW}_p(P, Q) =\left( \sup_{u \in S^{d-1}} W_p^p(\phi_u\#P, \phi_u\#Q) \right)^{\nicefrac{1}{p}}.
\end{equation*}
Both slicing distances reduce the computational complexity to $\bigo(l n \log n)$ and the convergence rate to $\bigo(l^{-\nicefrac{1}{2}} + n^{-\nicefrac{1}{2p}} )$, where $l$ is either the number of projections, or the number of iterations of the optimiser. A further extension is the \textit{Generalised Sliced Wasserstein} (GSW,~\citet{Kolouri2019}), which replaces the linear projection $\phi_u$ with a non-linear mapping. While the conditions for GSW to be a probability metric are highly non-trivial to verify, the authors showed that they hold for polynomials of odd degree.

Another approximation of the Wasserstein distance involves the introduction of an entropic regularisation term~\citep{cuturi2013sinkhorn}, which reduces the cost to $\bigo(n^2)$ and can be estimated with sample complexity $\bigo(n^{-\nicefrac{1}{2}})$ \citep{Genevay2019}. The solution to this regularised problem is referred to as the \emph{Sinkhorn divergence}. Interestingly,~\citet{ramdas2017wasserstein,Feydy2019} demonstrated that by varying the strength of the regularisation, the Sinkhorn divergence interpolates between the Wasserstein distance and the MMD with a kernel corresponding to the energy distance.

\section{Kernel Quantile Embeddings and Discrepancies}
\label{sec:KQE_KQD}

We introduce directional quantiles in the RKHS and the corresponding discrepancies. Unlike in~\Cref{sec:background_quantiles}, the measures and their quantiles now live in different spaces: the measures are on $\calX$, and the quantiles are in the RKHS $\calH$ induced by a kernel on $\calX$.
This leads to greater flexibility: the approach works for any space a kernel can be defined on. Throughout, we assume the kernel $k$ is measurable.

\subsection{Kernel Quantile Embeddings}

Let $S_\calH = \{u \in \calH : \|u\|_\calH = 1\}$ be the unit sphere of an RKHS $\calH$ induced by the kernel $k$. For $P \in \calP_\calX$, we define its \textit{$\alpha$-quantile along RKHS direction $u \in S_\calH$} as a function $\rho_P^{\alpha, u}:\calX \rightarrow \mathbb{R}$ in $ \calH$ with
\begin{equation}
\label{eq:quemb_for_kernels}
    \rho_P^{\alpha,u}(x) \coloneqq \rho^\alpha_{u \# P} u(x)
\end{equation}
By the reproducing property, it holds that $\rho^\alpha_{u \# P} u(x) = \rho^\alpha_{\phi_u \# [\psi \# P]} u(x)$, where $\psi(x) = k(x, \cdot)$ is the canonical feature map $\calX \to \calH$, and $\phi_u(h) = \langle u, h\rangle_{\calH}$ is the $\calH \to \calH$ equivalent of the projection operator onto $u$ defined in~\Cref{sec:background_quantiles}. Thus, when $\dim(\calH)<\infty$, the RKHS quantiles of $P$ on $\calX$ are exactly the multivariate quantiles of the measure of $k(X, \cdot)$, $X \sim P$, on $\calH$.
In other words, KQEs can be thought of as two-step embeddings: we first embed $X \sim P \in \calP_\calX$ as an RKHS element and then compute its directional quantiles to obtain the KQEs.

\textbf{Centered vs uncentered quantiles.}
Just as done for multivariate quantiles in~\Cref{eq:direct-q}, a centered version of RKHS quantiles can be defined as
\begin{equation*}
    \tilde \rho_P^{\alpha,u}(x) \coloneqq \left(\rho^\alpha_{u \# P} - \langle u, \mu_P \rangle_\calH \right)u(x) + \mu_P(x), 
\end{equation*}
where $\mu_P$ is the KME of $P$. This coincides with~\Cref{eq:direct-q} for the measure being
the law of $k(X, \cdot)$ with $X \sim P$. The impact of centering is examined in detail in~\Cref{appendix:centered_quantiles}, but two key observations are relevant here: (1) omitting centering eliminates the computational overhead of calculating means; (2)
the only equivariance violated for the uncentered directional quantile is location equivariance: shifting $k(X, \cdot)$ by $h$ shifts the quantile by $\langle h, u \rangle_\calH u$, rather than by $h$ itself.
However, when KQEs are used to compare two distributions, the additional term $\langle h, u \rangle_\calH u$ cancels out as it does not depend on the measure. For these reasons, we primarily work with the uncentered RKHS quantiles.

\textbf{Quantile-characteristic kernels.} The kernel $k$ is said to be \emph{quantile-characteristic} if the mapping $P \mapsto \{\rho_P^{\alpha,u} : \alpha \in [0, 1], u \in S_\calH\}$ is injective for $P \in \calP_\calX$. In $\Re^d$, the Cram\'er-Wold theorem~\citep{cramer1936some} states that the set of all one-dimensional projections (or, equivalently, all quantiles of all one-dimensional projections) determines the measure. One may therefore recognise our next theorem as an RKHS-specific extension of the Cram\'er-Wold theorem. Earlier Hilbert space extensions required higher-dimensional projections and imposed restrictive moment assumptions~\citep{cuesta2007sharp}. Being concerned with the RKHS case specifically allows us to prove the result under mild assumptions, as stated below.
\begin{assumption}
\label{as:input_space}
    \hspace{-0.1cm}$\calX$ is Hausdorff, separable, and $\sigma$-compact.
\end{assumption}
Being Hausdorff ensures points in $\calX$ can be separated, and separability says $\calX$ has a countable dense subset. $\sigma$-compactness means $\calX$ is a union of countably many compact sets. These are mild conditions, notably satisfied by Polish spaces---including discrete topological spaces with at most countably many elements and topological manifolds. 

It is possible to drop the $\sigma$-compactness and separability. When $\calX$ is Hausdorff and completely regular, one can still get quantile-characteristic properties on Radon probability measures---the "non-pathological" Borel probability measures. We discuss this in~\Cref{sec:proof_projections_determine_distribution} and refer to~\citet{Willard1970} for a review of general topological properties.
\begin{assumption}
\label{as:kernel}
    The kernel $k$ is continuous, and separating on $\calX$: for any $x \neq y \in \calX$, it holds that $k(x, \cdot) \neq k(y, \cdot)$.
\end{assumption}
This is a mild condition: most commonly used kernels such as the Mat\'ern, Gaussian, and Laplacian kernels are separating. The constant kernel $k(x, x')=c$ is an example of a non-separating kernel. Trivially, a non-separating kernel for which $k(x, \cdot) = k(y, \cdot)$ will not be able to distinguish between Dirac measures $\delta_x$ and $\delta_y$.
The proof of the following result uses~\emph{characteristic functionals}, an extension of characteristic functions to measures on spaces beyond $\mathbb{R}^d$. Unlike moments, characteristic functionals are defined for any probability measure---which is the key to generality of KQEs. Further discussion and proof are in~\Cref{sec:proof_projections_determine_distribution}.
\begin{theorem}[\textbf{Cram\'er-Wold Theorem in RKHS}]
\label{res:cramer-wold}
    Under~\Cref{as:input_space,as:kernel}, the kernel $k$ is quantile-characteristic, meaning the mapping $P \mapsto \{\rho_P^{\alpha,u} : \alpha \in [0, 1], u \in S_\calH\}$ is injective.
\end{theorem}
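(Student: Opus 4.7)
The plan is to reduce the quantile-characteristic property to injectivity of the characteristic functional of the measure pushed forward by the canonical feature map $\psi(x) = k(x,\cdot)$, and then to descend from $\calH$ back to $\calX$ using the topological assumptions on $\calX$ and the separating property of $k$.

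First I would unpack the definition: suppose $P, Q \in \calP_{\calX}$ satisfy $\rho_P^{\alpha,u} = \rho_Q^{\alpha,u}$ for all $\alpha \in [0,1]$ and $u \in S_\calH$. Since each KQE equals $\rho^\alpha_{u\#P}\, u$ as an element of $\calH$, and $u \neq 0$, this forces equality of the scalar quantiles $\rho^\alpha_{u\#P} = \rho^\alpha_{u\#Q}$ for every $\alpha$ and every $u \in S_\calH$. Because univariate Borel measures on $\Re$ are fully determined by their quantiles, we obtain $u\#P = u\#Q$ as measures on $\Re$ for every $u \in S_\calH$, and by positive-homogeneous rescaling for every $v \in \calH$ (including $v=0$, trivially).

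Next I would translate this into equality of characteristic functionals on $\calH$. By the reproducing property, $v(X) = \langle v, \psi(X)\rangle_\calH$, so the scalar characteristic function of $v\#P$ evaluated at $t=1$ is exactly $\mathbb{E}_{X \sim P}[e^{i\langle v, \psi(X)\rangle_\calH}]$, which is the characteristic functional $\Phi_{\psi\#P}(v)$ of the pushforward measure $\psi\#P$ on $\calH$. The equality $v\#P = v\#Q$ for all $v \in \calH$ therefore yields $\Phi_{\psi\#P} \equiv \Phi_{\psi\#Q}$ on $\calH$. Under \Cref{as:input_space,as:kernel}, continuity of $k$ and separability of $\calX$ imply that $\calH$ is separable, so the standard uniqueness theorem for characteristic functionals on a separable Hilbert space (Radon measures are uniquely determined by their Fourier transform) gives $\psi\#P = \psi\#Q$ as Borel measures on $\calH$.

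The hard step will be the final descent: inferring $P = Q$ from $\psi\#P = \psi\#Q$. The separating hypothesis makes $\psi: \calX \to \calH$ injective, and continuity of $k$ makes it continuous. The obstacle is that continuous injections need not be Borel isomorphisms onto their image. This is precisely where $\sigma$-compactness and the Hausdorff property enter: writing $\calX = \bigcup_{n} K_n$ with each $K_n$ compact, the restriction $\psi|_{K_n}: K_n \to \psi(K_n)$ is a continuous bijection from a compact space to a Hausdorff space, hence a homeomorphism, so $\psi(K_n)$ is a Borel (in fact compact) subset of $\calH$ and $(\psi|_{K_n})^{-1}$ is Borel. Patching these together shows $\psi$ is a Borel isomorphism onto its image, and in particular every Borel set $B \subseteq \calX$ has the form $\psi^{-1}(A)$ for some Borel $A \subseteq \calH$. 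Therefore $P(B) = (\psi\#P)(A) = (\psi\#Q)(A) = Q(B)$ for every Borel $B$, yielding $P = Q$. I would also note in passing that if one drops $\sigma$-compactness and works with Radon probability measures on a Hausdorff completely regular space, the same argument goes through by inner-regular approximation by compacts, which explains the remark in the paper about weakening \Cref{as:input_space}.
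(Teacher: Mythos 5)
Your proof is correct, and it reaches the same destination as the paper's by a noticeably different final step. Both arguments share the same opening moves: equal KQEs force equal scalar quantiles, hence $u\#P=u\#Q$ for all $u\in S_\calH$, and the quantity $\int_\calX e^{if(x)}P(\d x)$ (which is simultaneously the characteristic function of $f\#P$ at $t=1$ and the characteristic functional of $\psi\#P$ at $f$) is then seen to agree for $P$ and $Q$ on all of $\calH$. Where you diverge is in how you conclude $P=Q$. The paper stays on $\calX$: it invokes Vakhania's theorem that a characteristic functional over a function class $\calF$ determines the measure on the $\sigma$-algebra generated by $\calF$, and its key lemma shows that under \Cref{as:input_space,as:kernel} the $\sigma$-algebra generated by $\calH$ is all of $\calB(\calX)$ (via a countable dense separating subspace, Vakhania's result that compact sets lie in that $\sigma$-algebra, and $\sigma$-compactness). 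You instead push the measures forward to $\calH$, apply the standard Fourier-uniqueness theorem for Borel measures on a separable Hilbert space, and then descend via the observation that $\psi$ restricted to each compact piece $K_n$ is a homeomorphism onto its (compact, hence Borel) image, so that $\psi$ is a Borel isomorphism onto $\psi(\calX)$. These two technical lemmas are essentially equivalent in content --- injectivity of $\psi$ plus $\psi^{-1}(\calB(\calH))=\calB(\calX)$ is exactly what the paper's lemma asserts in different language --- but your version is more self-contained, resting only on the compact-to-Hausdorff homeomorphism fact and the well-known Hilbert-space uniqueness theorem, rather than on the more specialised machinery of \citet{vakhania1987probability}. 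The paper's formulation has the advantage of generalising cleanly to the Tychonoff/Radon setting mentioned in \Cref{sec:proof_projections_determine_distribution}; your closing remark about inner regularity shows you see how to recover that generalisation too. One small point worth making explicit if you write this up: you should note that $\psi$ is continuous (hence $\psi\#P$ is a well-defined Borel measure on $\calH$) because $\|\psi(x)-\psi(x_0)\|_\calH^2 = k(x,x)-2k(x,x_0)+k(x_0,x_0)$, which tends to zero by joint continuity of $k$.
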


The mildness of the assumptions in Theorem~\ref{res:cramer-wold} naturally raises the question: \emph{is being quantile-characteristic a less restrictive condition than being mean-characteristic?}
This indeed holds, as shown in the result below.
\begin{theorem}
\label{res:if_meanchar_then_quantchar}
    Every mean-characteristic kernel $k$ is also quantile-characteristic. The converse does not hold.
\end{theorem}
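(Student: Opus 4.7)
The plan is to prove each direction separately. For the forward implication, I would show that equal KQEs force equal KMEs, reducing matters to mean-characteristicness. Suppose $P, Q \in \calP_\calX$ satisfy $\rho_P^{\alpha,u} = \rho_Q^{\alpha,u}$ for every $\alpha \in [0,1]$ and $u \in S_\calH$. Since $\{k(x,\cdot) : x \in \calX\}$ has dense span in $\calH$, any nonzero $u \in \calH$ is nonzero as a function on $\calX$, so one can pick $x_0$ with $u(x_0) \neq 0$. Evaluating $\rho_P^{\alpha,u}(x_0) = \rho_Q^{\alpha,u}(x_0)$ and dividing by $u(x_0)$ yields $\rho^\alpha_{u\#P} = \rho^\alpha_{u\#Q}$ for every $\alpha$; since a univariate measure is determined by its quantile function, this forces the equality of pushforwards $u\#P = u\#Q$ for every $u \in S_\calH$.

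Consequently $\bE_{X\sim P}[u(X)] = \bE_{X\sim Q}[u(X)]$ for every such $u$, provided these means exist. Under the standard setting where the KME is well-defined (i.e.\ $\bE_{X\sim P}[\sqrt{k(X,X)}] < \infty$), Cauchy-Schwarz gives $|u(X)| \leq \|u\|_\calH \sqrt{k(X,X)}$, so the relevant means are finite. By the reproducing property, the identity rewrites as $\langle u, \mu_P\rangle_\calH = \langle u, \mu_Q\rangle_\calH$; extending by homogeneity to all of $\calH$ yields $\mu_P = \mu_Q$, and mean-characteristicness then forces $P = Q$.

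For strictness of the implication, I would exhibit the linear kernel $k(x,y) = \langle x, y\rangle$ on $\calX = \Re^d$. This kernel is continuous and separating, and $\Re^d$ satisfies \Cref{as:input_space}, so \Cref{res:cramer-wold} already shows it is quantile-characteristic. On the other hand, its RKHS is $\Re^d$ itself with $\mu_P$ corresponding to $\bE_{X \sim P}[X]$, so any two distributions sharing a first moment but differing in higher moments (for example, a standard Gaussian and a Dirac at the origin on $\Re$) yield identical KMEs; thus $k$ fails to be mean-characteristic.

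The technically light but conceptually central observation, rather than any real obstacle, is the asymmetry between the two representations: KQEs record the full law of each projection $u\#P$, while the KME records only its first moment. Making this precise just requires handling the scalar factor $u(x)$ appearing in the definition of $\rho_P^{\alpha,u}$, and noting that the one-dimensional means exist under the same moment conditions that guarantee existence of the KME.
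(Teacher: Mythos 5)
Your proof is correct and follows essentially the same route as the paper's: both directions hinge on the observation that the quantiles of $u\#P$ determine its mean $\langle u,\mu_P\rangle_\calH$, together with an appeal to \Cref{res:cramer-wold} to certify a polynomial-type kernel as quantile- but not mean-characteristic. The only cosmetic differences are that you argue the forward implication directly (equal KQEs $\Rightarrow$ equal pushforwards $\Rightarrow$ equal KMEs) where the paper argues the contrapositive via the witness direction $u^*=(\mu_P-\mu_Q)/\|\mu_P-\mu_Q\|_\calH$, and that your counterexample is the linear kernel with a mean-matched pair rather than the paper's degree-$T$ polynomial kernel with a moment-matched pair.
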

This result, proven in \Cref{appendix:proof_if_meanchar_then_quantchar}, has a powerful implication.
For any discrepancy $D(P, Q)$ that aggregates the KQEs injectively (i.e $D(P, Q)=0 \iff \rho_P^{\alpha,u}=\rho_Q^{\alpha,u}$ for all $\alpha, u$), it holds that $\MMD(P, Q) > 0 \Rightarrow D(P, Q) > 0$, but $D(P, Q) > 0 \not\Rightarrow \MMD(P, Q) > 0$. This means $D$ can tell apart every pair of measures MMD can, and sometimes more (see the proof for examples). This is intuitive: MMD is an injective aggregation of means ($\MMD(P, Q)=0 \iff \bE_P[u]=\bE_Q[u]$ for all $u$), and the set of all quantiles captures all the information in the mean, but not vice versa.
Before introducing a specific family of quantile discrepancies, we discuss sample versions of KQEs.
\begin{figure}
    \centering
    \resizebox{\columnwidth}{!}{
    \includegraphics[height=1.1in]{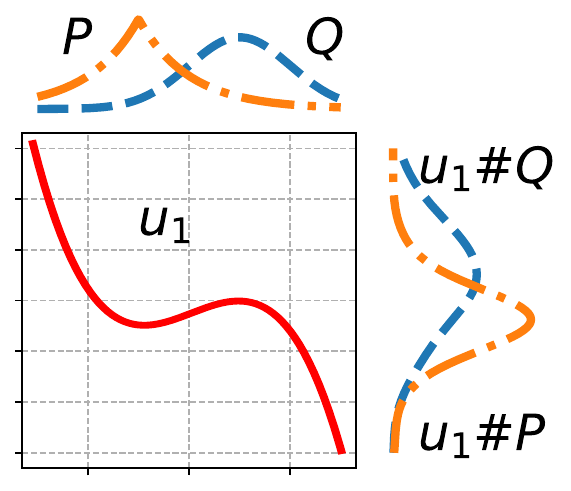}
    \includegraphics[height=1.1in]{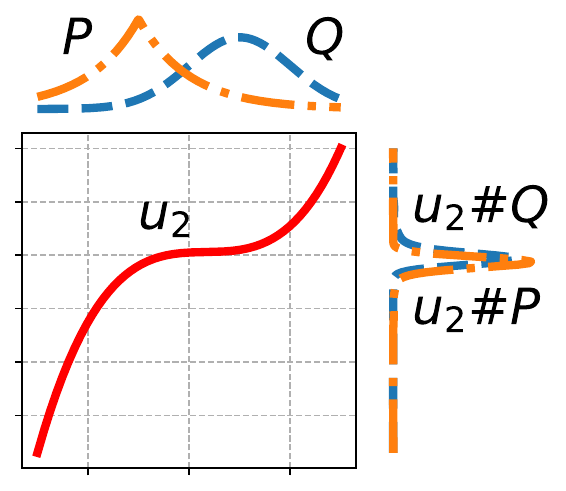}
    \includegraphics[height=1.1in]{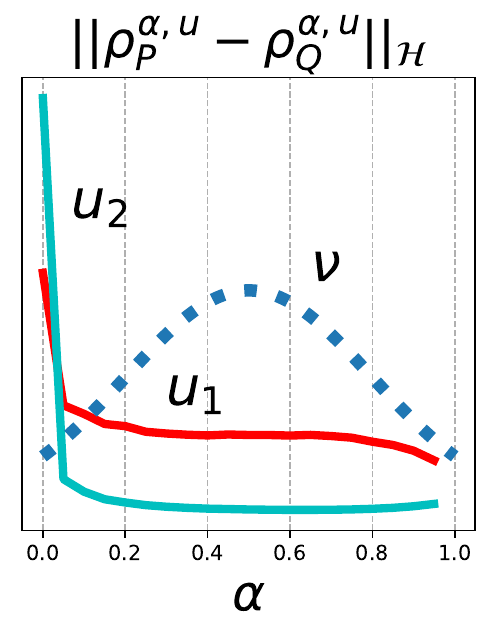}}
    \vspace{-5mm}
    \caption{\textit{Illustration of the impact of the slicing direction on KQEs.} Suppose $X \sim P$, the KQEs $\rho_P^{\alpha, u}(x) \coloneqq \rho_{u \#P}^{\alpha} u(x)$ are obtained by considering the $\alpha^\text{th}$ quantile of $u(X)$. Clearly, these quantiles might vary significantly depending on the slicing direction used.
    }
    \label{fig:kernelised-quantiles}
    \vspace{-5mm}
\end{figure}

\textbf{Estimating KQEs.}
For fixed $\alpha \in [0, 1]$ and $u \in S_\calH$, estimating the directional quantile $\rho_P^{\alpha,u}$ with samples $x_{1:n} \sim X$ boils down to estimating the $\Re$-quantile $\rho_{u \# P}^{\alpha}$ using samples $u(x_{1:n})$. We employ the classic, model-free approach to estimate a quantile by using the order statistic estimator:
\begin{equation}
\label{eq:dq_estimator}
    \rho^{\alpha,u}_{P_n}(x) := \rho_{u \# P_n}^{\alpha}u(x) = [ u(x_{1:n}) ]_{\lceil \alpha n \rceil}u(x),
\end{equation}
where $P_n = \nicefrac{1}{n}\sum_{i=1}^n \delta_{x_i}$. In other words, \Cref{eq:dq_estimator} uses the $\alpha$-quantile of the set $u(x_{1:n})$---meaning, the $\lceil \alpha n \rceil$-th largest element of $u(x_{1:n})$. We now state an RKHS version of a classic result on convergence of quantile estimators; the proof is provided in~\Cref{appendix:proof_consistency_KQE}.

\begin{theorem}[\textbf{Finite-Sample Consistency for Empirical KQEs}]\label{res:consistency_KQE}
    Suppose the PDF of $u\#P$ is bounded away from zero, $f_{u\#P}(x) \geq c_u > 0$, and $x_{1:n} \sim P$. Then, with probability at least $1-\delta$, and $C(\delta, u) = \bigo(\sqrt {\log(2/\delta)})$,
    \begin{equation*}
        \| \rho^{\alpha, u}_{P_n} - \rho^{\alpha, u}_{P} \|_\calH \leq C(\delta, u) n^{-\nicefrac{1}{2}}.
    \end{equation*}
\end{theorem}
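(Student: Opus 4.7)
The key observation is that the RKHS norm collapses to a scalar quantity. Writing out the definitions,
\begin{equation*}
\rho^{\alpha,u}_{P_n} - \rho^{\alpha,u}_{P} = \bigl(\rho^{\alpha}_{u\# P_n} - \rho^{\alpha}_{u\# P}\bigr)\, u,
\end{equation*}
and since $u \in S_\calH$ has $\|u\|_\calH = 1$, we get
\begin{equation*}
\bigl\|\rho^{\alpha,u}_{P_n} - \rho^{\alpha,u}_{P}\bigr\|_\calH = \bigl|\rho^{\alpha}_{u\# P_n} - \rho^{\alpha}_{u\# P}\bigr|.
\end{equation*}
So the RKHS-valued claim reduces to the classical one-dimensional statement about convergence of empirical quantiles of the pushforward $u\# P$ on $\Re$, and we may work entirely with scalars.

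Let $F$ denote the CDF of $u\# P$ and $F_n$ the empirical CDF built from $u(x_1),\ldots,u(x_n)$. Write $q = \rho^{\alpha}_{u\# P}$ and $q_n = \rho^{\alpha}_{u\# P_n} = [u(x_{1:n})]_{\lceil \alpha n\rceil}$. The plan is to use the Dvoretzky--Kiefer--Wolfowitz (DKW) inequality (with the Massart constant), which gives, with probability at least $1-\delta$,
\begin{equation*}
\sup_{t\in\Re}|F_n(t)-F(t)| \;\leq\; \sqrt{\tfrac{\log(2/\delta)}{2n}}.
\end{equation*}
By definition of the empirical quantile as an order statistic, $F_n(q_n)$ differs from $\alpha$ by at most $1/n$, so on the DKW event,
\begin{equation*}
|F(q_n)-\alpha| \;\leq\; |F(q_n)-F_n(q_n)| + |F_n(q_n)-\alpha| \;\leq\; \sqrt{\tfrac{\log(2/\delta)}{2n}} + \tfrac{1}{n}.
\end{equation*}

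To turn this CDF deviation into a quantile deviation, I would invoke the lower bound $f_{u\# P} \geq c_u > 0$. Because $F$ is then strictly increasing with derivative at least $c_u$ on the relevant interval, the mean value theorem gives $|F(q_n)-F(q)| \geq c_u|q_n - q|$, and since $F(q)=\alpha$ (using continuity of $F$, which the density assumption ensures),
\begin{equation*}
|q_n - q| \;\leq\; \tfrac{1}{c_u}|F(q_n)-\alpha| \;\leq\; \tfrac{1}{c_u}\Bigl(\sqrt{\tfrac{\log(2/\delta)}{2n}} + \tfrac{1}{n}\Bigr).
\end{equation*}
Absorbing the $1/n$ term into the $n^{-1/2}$ term for $n\geq 1$, we obtain the claimed bound with $C(\delta,u) = \mathcal{O}\bigl(c_u^{-1}\sqrt{\log(2/\delta)}\bigr)$.

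\textbf{Main obstacle.} The argument is essentially routine given the density lower bound; the only subtlety is handling the discrete-vs-continuous mismatch between $F_n(q_n)$ and $\alpha$ (the order statistic exactly inverts $F_n$ only up to a $1/n$ correction). A cleaner alternative would be to apply a standard Bahadur-type representation for sample quantiles, but the two-line DKW argument above already delivers the stated rate and explicit constant, so I would prefer it for simplicity.
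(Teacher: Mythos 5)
Your proposal is correct and arrives at the same bound, with the same reduction to scalars: since $\|u\|_\calH=1$, the RKHS distance equals $|\rho^\alpha_{u\#P_n}-\rho^\alpha_{u\#P}|$, which is exactly the last line of the paper's proof. The only difference is in how the one-dimensional quantile concentration is obtained. The paper cites the ready-made inequality from Serfling (Section 2.3.2), $P(|\rho^\alpha_{u\#P_n}-\rho^\alpha_{u\#P}|>\varepsilon)\leq 2e^{-2n\delta_\varepsilon^2}$ with $\delta_\varepsilon$ the minimal mass of the density over $[\rho^\alpha\pm\varepsilon]$, and then lower-bounds $\delta_\varepsilon\geq c_u\varepsilon$ to solve for $\varepsilon$ in terms of $\delta$; this gives $C(\delta,u)=\sqrt{\log(2/\delta)/(2c_u^2)}$ with no additive $1/n$ correction. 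You instead rederive the concentration from scratch via DKW plus the order-statistic inversion $|F_n(q_n)-\alpha|\leq 1/n$ and the mean value theorem, which is self-contained and yields the same rate and the same $\bigo(c_u^{-1}\sqrt{\log(2/\delta)})$ constant after absorbing the $1/n$ term. Both arguments share the same minor imprecision: the lower bound $f_{u\#P}\geq c_u$ is only needed on a neighbourhood of the quantile (it cannot hold on all of $\Re$ for a probability density), and both use it only locally, so this is not a gap relative to the paper. Your route is marginally less sharp in the constant but requires no external citation; the paper's is a two-line consequence of the cited theorem.
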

We do not need to assume~\Cref{as:input_space,as:kernel} to prove consistency; this was only needed to establish that $k$ is quantile-characteristic, and we may still have a consistent estimator when the kernel is not quantile-characteristic. The condition $f_{u\#P}(x) \geq c_u > 0$ lets us avoid making any assumptions on $\calX$, other than the existence of a kernel $k$ on $\calX$.

\subsection{Kernel Quantile Discrepancies}

We propose to quantify the difference between $P,Q \in \calP_\calX$ in unit-norm direction $u$ through a $\nu$-weighted expectation of power-$p$ distance (in the RKHS) between KQEs,
\begin{align*}
    \tau_p(P, Q;\nu, u) = \Big(\int_0^1 \big\| \rho_P^{\alpha,u} - \rho_Q^{\alpha,u} \big\|_\calH^p \nu(\d \alpha) \Big)^{\nicefrac{1}{p}}.
\end{align*}
\Cref{fig:kernelised-quantiles} illustrates how $u\#P$ and $u\#Q$ vary depending on direction $u$---and the impact it has on $\tau_p$. The weighting measure $\nu$ on $[0, 1]$ assigns importance to each $\alpha$-quantile. For example, the Lebesgue measure $\nu \equiv \mu$ treats all quantiles as equally important, whereas a partial-supported measure would allow us to ignore certain quantiles.

Based on $\tau_p(P,Q; \nu, u)$, we introduce a novel family of \textit{Kernel Quantile Discrepancies (KQDs)} that aggregate the directional differences $\tau_p(P, Q; \nu, u)$ over $u \in S_\calH$: the $L^p$-type distance \textit{expected KQD ($\ekqd$)} that uses the average as the aggregate function, and the $L^\infty$-type distance \textit{supremum KQD ($\supkqd$)} that aggregates with the supremum:
\begin{equation}
\label{eq:general_distances}
\begin{split}
    \ekqd_p(P, Q; \nu, \gamma) &= \left(\bE_{u \sim \gamma} \left[\tau^p_p\left(P, Q; \nu, u\right)\right]\right)^{\nicefrac{1}{p}},\\
    \supkqd_p(P, Q; \nu) &= \big(\sup_{u \in S_\calH} \tau^p_p\left(P,Q; \nu,u \right)\big)^{\nicefrac{1}{p}},
\end{split}
\end{equation}
where $\gamma$ is a measure on the unit sphere $S_\calH$ of the RKHS.

Next, we demonstrate that under mild conditions $\ekqd$ and $\supkqd$ are indeed distances, and establish connections with existing methods. The proof is in \Cref{appendix:proof_quantile_characteristic}.
\begin{theorem}[\textbf{KQDs as Probability Metrics}]\label{res:KQE_characterise_dists}
    Under~\Cref{as:input_space},~\Cref{as:kernel}, and if $\nu$ has full support on $[0, 1]$, $\supkqd_p$ is a distance. Further, if $\gamma$ has full support on $S_\calH$, $\ekqd_p$ is a distance.
\end{theorem}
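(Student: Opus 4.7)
The plan is to verify each distance axiom in turn; non-negativity and symmetry are immediate from the definitions.

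\textbf{Triangle inequality.} Since $\|u\|_\calH = 1$, we have $\|\rho_P^{\alpha,u} - \rho_Q^{\alpha,u}\|_\calH = |\rho^\alpha_{u\#P} - \rho^\alpha_{u\#Q}|$, so $\tau_p(P, Q; \nu, u)$ is the $L^p(\nu)$-norm of $\alpha \mapsto |\rho^\alpha_{u\#P} - \rho^\alpha_{u\#Q}|$. Minkowski's inequality in $L^p(\nu)$ gives the triangle inequality for $\tau_p(\cdot, \cdot; \nu, u)$ at each fixed $u$. Taking the supremum over $u$ preserves this for $\supkqd_p$. For $\ekqd_p$, observe that it equals the $L^p(\gamma)$-norm of $u \mapsto \tau_p(P, Q; \nu, u)$, so a second application of Minkowski in $L^p(\gamma)$ yields the triangle inequality.

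\textbf{Identity of indiscernibles.} The forward implication $P = Q \Rightarrow D(P,Q) = 0$ is trivial. For the converse in the $\supkqd_p$ case, assume $\supkqd_p(P, Q; \nu) = 0$. Then for every $u \in S_\calH$ the non-negative function $\alpha \mapsto |\rho^\alpha_{u\#P} - \rho^\alpha_{u\#Q}|$ vanishes $\nu$-almost everywhere. Since univariate quantile functions are left-continuous and monotone, disagreement at any $\alpha_0 \in (0, 1]$ would force disagreement on an open interval $(\alpha_0 - \varepsilon, \alpha_0)$, which has strictly positive $\nu$-measure by the full-support hypothesis---a contradiction. Hence $\rho^\alpha_{u\#P} = \rho^\alpha_{u\#Q}$ for all $\alpha \in [0, 1]$ and all $u \in S_\calH$, and \Cref{res:cramer-wold} gives $P = Q$.

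For $\ekqd_p$, zero distance yields $\tau_p(P, Q; \nu, u) = 0$ for $\gamma$-a.e. $u$, and the preceding argument then gives $u\#P = u\#Q$ for $\gamma$-a.e. $u$. To extend to all $u \in S_\calH$, note that the set $A = \{u \in S_\calH : u\#P = u\#Q\}$ is closed in the $\calH$-norm topology: if $u_n \to u$ in $\calH$, the reproducing property gives $u_n(x) = \langle u_n, k(x, \cdot) \rangle_\calH \to u(x)$ pointwise in $x$, and dominated convergence (against bounded continuous test functions) yields $u_n\#P \to u\#P$ and $u_n\#Q \to u\#Q$ weakly, so equality of pushforwards is preserved in the limit. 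Since $\gamma(A) = 1$ and $\gamma$ has full support, the open set $A^c$ has zero $\gamma$-measure and must therefore be empty; thus $u\#P = u\#Q$ for all $u$, and \Cref{res:cramer-wold} again gives $P = Q$.

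\textbf{Main obstacle.} The crux is extending almost-everywhere agreement to pointwise agreement so that \Cref{res:cramer-wold} can be invoked. This requires two distinct regularity-plus-full-support arguments that must be stitched together: left-continuity of real-valued quantile functions at the $\alpha$-level, and continuity of the slicing map $u \mapsto u\#P$ into the space of probability measures with the weak topology at the $u$-level. \Cref{as:input_space,as:kernel} enter only through \Cref{res:cramer-wold}; the rest of the argument uses only the full-support assumptions on $\nu$ and $\gamma$ together with basic RKHS continuity.
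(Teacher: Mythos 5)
Your proposal is correct, and the triangle inequality (iterated Minkowski) and the $\supkqd_p$ identity-of-indiscernibles step (left-continuity of quantile functions plus full support of $\nu$) coincide with the paper's argument. Where you genuinely diverge is the $\ekqd_p$ case. The paper re-invokes the characteristic-functional machinery from the proof of \Cref{res:cramer-wold}: it shows the set of directions $u$ on which $\varphi_{u\#P}$ and $\varphi_{u\#Q}$ differ is open in $S_\calH$ (continuity of $f \mapsto \varphi_P(f)-\varphi_Q(f)$ plus openness of the radial projection $f \mapsto f/\|f\|_\calH$), notes $\tau_p^p>0$ there, and derives a contradiction with $\bE_{u\sim\gamma}[\tau_p^p]=0$ unless that set is empty. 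You instead work with the complementary ``good'' set $A=\{u : u\#P=u\#Q\}$ and prove it is closed via sequential continuity of $u\mapsto u\#P$ into the weak topology (norm convergence in $\calH$ gives pointwise convergence of $u_n(x)$ by Cauchy--Schwarz against $k(x,\cdot)$, then dominated convergence against bounded continuous test functions). Both arguments ultimately rest on the same topological fact---a $\gamma$-null open subset of $S_\calH$ is empty when $\gamma$ has full support---but your route is somewhat more self-contained at this step, since characteristic functionals enter only through the black-box appeal to \Cref{res:cramer-wold}, whereas the paper's route reuses the continuity of characteristic functionals and, as a by-product, lands directly on equality of $\varphi_P$ and $\varphi_Q$ without needing to pass back through the quantile representation. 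Either way the conclusion and the role of the full-support hypotheses are the same; the only caveat common to both proofs is the unaddressed (but standard) measurability of $u\mapsto\tau_p^p(P,Q;\nu,u)$.
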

As discussed in~\Cref{sec:KQE_KQD}, ~\Cref{as:input_space,as:kernel} are minor. The assumptions on the support of $\nu$ and $\gamma$ ensure that no quantile level in $[0,1]$ and no parts of $S_\calH$ are missed entirely. This is satisfied, for example, for the uniform $\nu$ (that considers all quantiles to be equally important), and when $\calH$ is separable, for any centered Gaussian $\gamma=\calN(0, S)$ with a non-degenerate $S$ by~\citep[Corollary 5.3]{kukush2020gaussian}. For example, an $\calH \mapsto \calH$ covariance operator $S[f](x) = \int_\calX k(x, y) f(y) \beta(\d y)$ is non-degenerate and well-defined provided (1) $\beta$ on $\calX$ has full support, and (2) $\int_\calX \sqrt {k(x, x)} \beta(\d x) < \infty$. This choice of $\gamma$ also happens to be computationally convenient, as discussed in~\Cref{sec:estimator}.

In contrast, while conditions under which MMD is a distance are well-understood for bounded translation-invariant kernels on Euclidean spaces~\citep{sriperumbudur2011universality}, they are challenging to establish beyond this setting. For instance, it is known that commonly used graph kernels are not characteristic~\citep{kriege2020survey}.

When $\nu$ is chosen as the Lebesgue measure $\mu$, an important connection emerges between $\ekqd$, $\supkqd$, and sliced Wasserstein distances. This connection is formalised in the next result, with a proof provided in \Cref{appendix:proof_connections_slicedwasserstein}.

\begin{connection}[\textbf{SW}]
\label{res:connections_slicedwasserstein}
Suppose $P, Q$ have $p$-finite moments. Then, $\ekqd_p(P, Q; \nu, \gamma)$ for $\nu \equiv \mu$ corresponds to a kernel expected sliced $p$-Wasserstein distance, which has not been introduced in the literature. For $\calX \subseteq \mathbb{R}^d$, linear $k(x, y)=x^\top y$, and uniform $\gamma$, this recovers the expected sliced $p$-Wasserstein distance~\citep{bonneel2015sliced}.
\end{connection}
\begin{connection}[\textbf{Max-SW}]
\label{res:connections_maxslicedwasserstein}
Suppose $P, Q$ have $p$-finite moments. Then, $\supkqd_p(P, Q; \nu)$ for $\nu \equiv \mu$ is the kernel max-sliced $p$-Wasserstein distance~\citep{wang_two-sample_2022}. For $\calX \subseteq \mathbb{R}^d$, linear $k(x, y)=x^\top y$, and uniform $\gamma$, it recovers the max-sliced $p$-Wasserstein~\citep{Deshpande2018}.
\end{connection}

For $d=1$, we recover standard Wasserstein. When $k$ is non-linear but induces a finite-dimensional RKHS, $\ekqd$ is connected to the Generalised Sliced Wasserstein distances of~\citet{kolouri_generalized_2022}---we explore this in~\Cref{appendix:proof_connections_slicedwasserstein}.

Lastly, we establish a connection to Sinkhorn divergence.

\begin{connection}[\textbf{Sinkhorn}]
\label{res:connections_sinkhorn}
    Sinkhorn divergence~\citep{cuturi2013sinkhorn}, like $\ekqd$ and $\supkqd$, combines the strengths of kernel embeddings and Wasserstein distances. Furthermore, for $p=2$ and $\nu \equiv \mu$, the centered version of $\ekqd$ and $\supkqd$ developed in~\Cref{appendix:centered_quantiles} can be represented as a sum of MMD and kernelised expected or max-sliced Wasserstein distances, thus positioning these measures as mid-point interpolants between MMD and SW distances.
\end{connection}
It is important to note that the MMD term within the Sinkhorn divergence is restricted to a specific kernel tied to the energy distance---in contrast, $\ekqd$ and $\supkqd$ offer much greater flexibility in the choice of kernel. Moreover, as will be shown empirically in~\Cref{sec:experiments}, the computational complexity of $\ekqd$ for a particular choice of $\gamma$ can be made significantly lower than that of Sinkhorn divergences, which have a cost of $\bigo(n^2)$.

\paragraph{Estimating $\ekqd$.} We propose a Monte-Carlo estimator for $\ekqd$, and refer to~\citet{wang_two-sample_2022} for an optimisation-based, $\bigo(n^3 \log(n))$ estimator for $\supkqd$.
Let $x_{1:n} \sim P$, $y_{1:n} \sim Q$, the $u_1, \dots, u_l \in S_\calH$ to be $l$ unit-norm functions sampled from $\gamma$, and $f_\nu$ to be the density of $\nu$. Denote $P_n = \nicefrac{1}{n}\sum_{i=1}^n \delta_{x_i}, Q_n = \nicefrac{1}{n} \sum_{i=1}^n \delta_{y_i}$. Then, similarly to the order statistic estimator of the quantiles in~\Cref{eq:dq_estimator}, $\ekqd^p_p(P_n, Q_n; \nu, \gamma_l)$ is the estimator of $\ekqd^p_p(P, Q; \nu, \gamma)$, where
\begin{align}
\label{eq:estimator_ekqd}
    &\ekqd^p_p(P_n, Q_n; \nu, \gamma_l) \\
    &\hspace{0.2cm}= \frac{1}{ln} \sum_{i=1}^l \sum_{j=1}^{n} \left(\big[u_i(x_{1:n})\big]_j - \big[u_i(y_{1:n})\big]_j \right)^p f_\nu\left(\left\lceil \nicefrac{j}{n} \right\rceil \right) \nonumber
\end{align}
Here, $[u_i(x_{1:n})]_j$ is the $j$-th order statistics, meaning the $j$-th smallest element of $u_i(x_{1:n})=[u_i(x_1), \dots, u_i(x_n)]^\top$. For $p=1$, we get the following result, proven in~\Cref{appendix:proof_consistency_KQD}.
\begin{theorem}[\textbf{Finite-Sample Consistency for Empirical KQDs}]
\label{res:consistency_KQD}
    Let $\nu$ have a density, $P, Q$ be measures on $\calX$ s.t.
    $\bE_{X \sim P} \sqrt{k(X, X)}<\infty$ and $\bE_{X \sim Q} \sqrt{k(X, X)}<\infty$, and $x_{1:n} \sim P, y_{1:n} \sim Q$. Then, with probability at least $1-\delta$, and $C(\delta) = \bigo(\sqrt{\log(1/\delta)})$ that depends only on $\delta, k, \nu$,
    \begin{align*}
        &| \ekqd_1(P_n, Q_n;\nu, \gamma_l) - \ekqd_1(P, Q;\nu, \gamma) | \\
        &\hspace{4cm}\leq C(\delta)(l^{-\nicefrac{1}{2}} + n^{-\nicefrac{1}{2}}).
    \end{align*}
\end{theorem}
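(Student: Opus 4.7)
My plan is to split the total error into a Monte Carlo component (over the sliced directions $u_1,\ldots,u_l \sim \gamma$) and a sample component (over $x_{1:n}, y_{1:n}$), and apply concentration to each. By the triangle inequality,
\[
|\ekqd_1(P_n, Q_n; \nu, \gamma_l) - \ekqd_1(P, Q; \nu, \gamma)| \leq E_l + E_n,
\]
where $E_l := |\ekqd_1(P_n, Q_n; \nu, \gamma_l) - \ekqd_1(P_n, Q_n; \nu, \gamma)|$ is a pure Monte Carlo error over slicing directions for fixed samples, and $E_n := |\ekqd_1(P_n, Q_n; \nu, \gamma) - \ekqd_1(P, Q; \nu, \gamma)|$ is a pure sample error for the full aggregation over $\gamma$. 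Each is controlled on an event of probability at least $1 - \delta/2$, and the result follows by a union bound.

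For $E_l$, conditional on the samples, $\ekqd_1(P_n, Q_n; \nu, \gamma_l) = \tfrac{1}{l}\sum_{i=1}^l \tau_1(P_n, Q_n; \nu, u_i)$ is an empirical mean of $l$ conditionally i.i.d.\ draws with conditional mean $\ekqd_1(P_n, Q_n; \nu, \gamma)$. The reproducing property and $\|u\|_\calH = 1$ give $|u(x)| \leq \sqrt{k(x,x)}$ on $S_\calH$, so using $\int_0^1 |F_\mu^{-1}(\alpha)|\,d\alpha = \bE_{Z \sim \mu}|Z|$ applied to the 1D empirical projections,
\[
\tau_1(P_n, Q_n; \nu, u) \leq \|f_\nu\|_\infty \cdot \tfrac{1}{n}\sum_{j=1}^n \big(\sqrt{k(x_j,x_j)} + \sqrt{k(y_j,y_j)}\big)
\]
uniformly in $u \in S_\calH$. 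The expectation of the right-hand side is finite by the moment assumption, so Markov bounds it by a constant $M = M(k, \nu, P, Q, \delta)$ with probability at least $1 - \delta/4$. On this event, Hoeffding applied conditionally on the samples yields $E_l \leq M \sqrt{2\log(4/\delta)/l}$ with probability at least $1 - \delta/4$.

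For $E_n$, triangle and Jensen give $E_n \leq \bE_{u \sim \gamma}|\tau_1(P_n, Q_n; \nu, u) - \tau_1(P, Q; \nu, u)|$. Writing $\tau_1$ via inverse CDFs and using the 1D identity $W_1(\mu, \mu') = \int_0^1 |F_\mu^{-1} - F_{\mu'}^{-1}|\,d\alpha$,
\[
|\tau_1(P_n, Q_n; \nu, u) - \tau_1(P, Q; \nu, u)| \leq \|f_\nu\|_\infty \big[W_1(u\#P_n, u\#P) + W_1(u\#Q_n, u\#Q)\big].
\]
Each projected measure $u\#P$ is scalar with $\bE|u(X)| \leq \bE\sqrt{k(X,X)} < \infty$, so classical 1D empirical $W_1$ concentration~\citep{Fournier2015}, integrated over $u \sim \gamma$ by Fubini and combined with McDiarmid applied to the sample-only functional $E_n$, produces $E_n \leq C'(\delta, k, \nu)\, n^{-1/2}$ with $C'(\delta, k, \nu) = \bigo(\sqrt{\log(1/\delta)})$ on an event of probability at least $1 - \delta/2$.

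The hard part will be the sample-side bound: the 1D $W_1$ empirical rate is $\bigo(n^{-1/2})$ only up to log factors and under moment conditions, and its constant depends on a moment of $|u(X)|$ that must be integrated against $\gamma$. Verifying McDiarmid's bounded-differences property for $E_n$—perturbing one sample shifts each sliced empirical order statistic by at most $|u(x_i) - u(x_i')|$, and hence $E_n$ by $\bigo(1/n)$ times a factor bounded through $k$—requires either boundedness of $k$ or a truncation argument under only $\bE\sqrt{k(X,X)} < \infty$, and this is the source of the logarithmic factors hidden in $C(\delta)$.
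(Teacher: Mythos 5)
Your decomposition into $E_l + E_n$ is exactly the one the paper uses, and the estimate $|u(x)| \leq \sqrt{k(x,x)}$ for $u \in S_\calH$ is the right starting point, but both halves of your concentration argument have gaps that prevent you from reaching the stated bound. For $E_l$, you control the random conditional range of $\tau_1(P_n,Q_n;\nu,u)$ by a constant $M$ obtained from Markov's inequality; Markov at level $\delta/4$ forces $M = \bigo(1/\delta)$, so your final constant scales as $\bigo\big(\delta^{-1}\sqrt{\log(1/\delta)}\big)$ rather than the claimed $\bigo\big(\sqrt{\log(1/\delta)}\big)$. The paper instead applies McDiarmid's inequality to $\ekqd_1(\cdot;\nu,\gamma_l)$ as a function of the sampled directions $u_1,\dots,u_l$, bounding the change when a single $u_i$ is swapped by $2C_\nu\, \bE_{X\sim P}\bE_{Y\sim Q}\sqrt{k(X,X)-2k(X,Y)+k(Y,Y)}$ (via the coupling definition of $W_1$, the reproducing property, and Cauchy--Schwarz), which keeps the $\delta$-dependence logarithmic.

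For $E_n$, your plan --- classical 1D empirical $W_1$ rates integrated over $\gamma$, combined with McDiarmid over the samples --- does not go through under the stated hypotheses, as you partly concede: the $n^{-\nicefrac{1}{2}}$ rate for $W_1(u\#P_n,u\#P)$ from \citet{Fournier2015} requires more than a first moment of $|u(X)|$, and McDiarmid's bounded-differences condition fails outright for unbounded $k$ when only $\bE\sqrt{k(X,X)}<\infty$; flagging that ``a truncation argument'' would be needed is not the same as supplying one. The paper takes a route that avoids both obstacles: it first reduces $E_n$ to $\ekqd_1(P_n,P;\nu,\gamma)+\ekqd_1(Q_n,Q;\nu,\gamma)$ by the reverse-triangle and triangle inequalities, writes $\int_0^1|\rho^\alpha_{u\#P_n}-\rho^\alpha_{u\#P}|\,\d\alpha$ as an integrated CDF difference bounded by $\sup_t|F_{u\#P_n}(t)-F_{u\#P}(t)|$, applies the distribution-free Dvoretzky--Kiefer--Wolfowitz inequality to obtain a sub-Gaussian tail for each fixed $u$, and then transfers sub-Gaussianity through $\bE_{u\sim\gamma}$ using the moment characterization of sub-Gaussian variables together with Jensen's inequality and Fubini. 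You would need to replace both of your concentration steps with arguments of this kind to obtain the theorem with the stated dependence on $\delta$ and under the stated moment assumptions.
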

The rate does not depend on $\mathrm{dim}(\calX)$---this is a major advantage of projection/slicing-based discrepancies~\citep{Nadjahi2020}, which comes at the cost of dependence on the number or projections $l$. Setting $l=n/\log n$ recovers the MMD rate (up to log-terms), at matching complexity (see~\Cref{sec:estimator}). Here, we do not need $\ekqd$ to be a distance---indeed, we did not assume~\Cref{as:input_space,as:kernel}.
The condition of square root integrability of $k(X, X)$ under $P, Q$ is immediately satisfied when $k$ is bounded, and can in fact be further weakened to $\bE_{X \sim P} \bE_{Y \sim Q} \sqrt{k(X, X) - 2 k(X, Y) + k(Y, Y)} < \infty$.
Requiring that $\nu$ has a density is mild and necessary to reduce the problem to CDF convergence---which, by the classic Dvoretzky-Kiefer-Wolfowitz inequality of~\citet{dvoretzky1956asymptotic} has rate $n^{-1/2}$ under no assumptions on the underlying distributions. The strength of this inequality allows us to assume nothing more of $\calX$ than the fact that it is possible to define a kernel on it.

Further, for any integer $p>1$, the $n^{-1/2}$ rate still holds---if and only if it holds that for $J_p(R) \coloneqq \left(F_{u\#R}(t) (1 - F_{u\#R}(t))\right)^{p/2} / f^{p-1}_{u\#R}(t)$, both $J_p(P)$ and $J_p(Q)$ are integrable over $u \sim \gamma$ and Lebesgue measure on $u(\calX)$.
In turn, this may be reduced to a problem of controlling $d-1$ volumes of level sets of $u$. We discuss this extension further in~\Cref{res:consistency_KQD_p} in~\Cref{appendix:proof_consistency_KQD}.

\section{Gaussian Kernel Quantile Discrepancy}\label{sec:estimator}

\begin{algorithm}[tb]
    \caption{Gaussian $\ekqd$}
    \label{alg:gaussian_ekqd}
\begin{algorithmic}
    \STATE {\bfseries Input:} Data $x_{1:n} \sim P, y_{1:n} \sim Q$, samples from the reference measure $z_{1:m} \sim \xi$, kernel $k$, density $f_\nu$, number of projections $l$, power $p$.
    \STATE Initialise $\ekqd^p \leftarrow 0$ and $\tau_{p,i}^p \leftarrow 0$ for $i=1\dots l$.
    \FOR{$i=1$ {\bfseries to} $l$}
    \STATE Sample $\lambda_{1:m} \sim \calN(0, \Id_m)$
    \STATE Compute $f_i(x_{1:n}) \leftarrow \lambda_{1:m}^\top k(z_{1:m}, x_{1:n})/\sqrt{m},$
    \STATE \hspace{1.3cm} $f_i(y_{1:n}) \leftarrow \lambda_{1:m}^\top k(z_{1:m}, y_{1:n}) /\sqrt{m}$
    \STATE Compute $\|f_i\|_\calH \leftarrow \sqrt{\lambda_{1:m}^\top k(z_{1:m}, z_{1:m}) \lambda_{1:m}/m}$
    \STATE Compute $u_i(x_{1:n}) \leftarrow f_i(x_{1:n})/\|f_i\|_\calH$, \\ \hspace{1.3cm} $u_i(y_{1:n}) \leftarrow f_i(y_{1:n})/\|f_i\|_\calH$
    \STATE Sort $u_i(x_{1:n})$ and $u_i(y_{1:n})$
    \FOR{$j=1$ {\bfseries to} $n$}
    \STATE $\tau_{p,i}^p \gets \tau_{p, i}^p + \big([u_i(x_{1:n})]_j - [u_i(y_{1:n})]_j\big)^p f_\nu(\lceil j/n \rceil)$
    \ENDFOR
    \STATE $\ekqd^p \gets \ekqd^p + \tau_{p,i}^p/l$
    \ENDFOR
    \STATE Return $\ekqd^p$
\end{algorithmic}
\end{algorithm}

We now conduct further empirical study of the squared kernel distance $\ekqd_p$. Unlike its supremum-based counterpart $\supkqd$, $\ekqd$ can be approximated simply by drawing samples from $\gamma$ on $S_\calH$, avoiding the challenges associated with optimising for the supremum. Although a uniform $\gamma$ is a natural choice, no such measure exists when $\dim(\calH)$ is infinite~\citep[Section 1.3]{kukush2020gaussian}. Instead, we follow a well-established strategy from the inverse problems literature~\citep{stuart2010inverse} and take $\gamma$ to be the projection onto $S_\calH$ of a Gaussian measure on $\calH$. Using established techniques for sampling Gaussian measures, we then build an efficient estimator for $\ekqd_p(P,Q;\nu,\gamma)$. Gaussian measures on Hilbert spaces are a natural extension of the familiar Gaussian measures on $\Re^d$: a measure $\calN(0, C)$ on $\calH$
is said to be a \emph{centered Gaussian measure} with covariance operator $C:\calH \to \calH$ if, for every $f \in \calH$, the pushforward of $\calN(0, C)$ under the $\calH \to \Re$ projection map $\phi_f(\cdot)= \langle f, \cdot \rangle_\calH$ is the Gaussian measure $\calN(0, \langle C[f], f \rangle_\calH)$ on $\Re$. For further details on Gaussian measures in Hilbert spaces, we refer to \citet{kukush2020gaussian}.

Let $\gamma'$ be a centered Gaussian measure on $\calH$ whose covariance function $C: \calH \to \calH$ is an integral operator with some reference measure $\xi$ on $\calX$,
\begin{align*}
    \gamma' = \calN(0, C),\quad C[f](x) = \int_\calX k(x, y) f(y) \xi(\d y),
\end{align*}
and let $\gamma$ be the pushforward of $\gamma'$ by the projection $\calH \to S_\calH$ that maps any $f \in \calH$ to $f/\|f\|_\calH \in S_\calH$. By the change of variables formula for pushforward measures~\citep[Theorem 3.6.1]{bogachev2007measure}, it holds that
\begin{align*}
    \ekqd_p^p(P, Q; \nu, \gamma)
    &= \bE_{u \sim \gamma}\left[ \tau_p^p\left(P, Q; \nu, u \right) \right]\\
    &= \bE_{f \sim \gamma'} \left[ \tau_p^p\left(P, Q; \nu, \nicefrac{f}{\|f\|_\calH}\right)\right].
\end{align*}
This equality reduces sampling from $\gamma$ to sampling from a centered Gaussian measure with an integral operator covariance function. The next proposition reduces sampling from (a finite-sample approximation of) $\gamma$ to sampling from the standard Gaussian on the real line; proof is in~\Cref{sec:proof_sampling_from_gm}.
\begin{proposition}[\textbf{Sampling from a Gaussian measure}]
\label{res:sampling_from_gm}
    Let $z_{1:m} \sim \xi$, and $\gamma'_m$ to be the estimate of $\gamma'$ based on the Monte Carlo estimate $C_m$ of the covariance operator $C$,
    \begin{align*}
        \gamma'_m = \calN(0, C_m),\quad C_m[g](x) = \frac{1}{m}\sum_{j=1}^m k(x, z_j) g(z_j).
    \end{align*}
    Suppose $f(x) = m^{-1/2}\sum_{j=1}^m \lambda_j k(x, z_j)$ with $\lambda_{1:m} \sim \calN(0, 1)$. Then, $f \sim \gamma'_m$.
\end{proposition}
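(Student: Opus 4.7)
The plan is to verify directly, conditionally on $z_{1:m}$, that the random function $f$ satisfies the defining property of a centered Gaussian measure with covariance operator $C_m$ recalled in the main text: namely, that for every $g \in \calH$, the pushforward of the law of $f$ under $\phi_g(\cdot) = \langle g, \cdot \rangle_\calH$ equals $\calN(0, \langle C_m[g], g\rangle_\calH)$. Since $f$ is a finite linear combination of elements $k(\cdot, z_j) \in \calH$, it clearly defines an element of $\calH$, so the projection $\phi_g(f)$ is well defined for every $g$.

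First, I would expand $\langle g, f \rangle_\calH$ using the reproducing property termwise, obtaining
\[
\langle g, f \rangle_\calH \;=\; m^{-1/2}\sum_{j=1}^m \lambda_j \langle g, k(\cdot, z_j) \rangle_\calH \;=\; m^{-1/2}\sum_{j=1}^m \lambda_j\, g(z_j).
\]
Conditional on $z_{1:m}$, the coefficients $g(z_j)$ are fixed, so this is a fixed linear combination of i.i.d.\ $\calN(0,1)$ variables and is therefore Gaussian with mean zero and variance $m^{-1}\sum_{j=1}^m g(z_j)^2$. Next, I would compute the target variance from the definition of $C_m$: again by the reproducing property,
\[
C_m[g] \;=\; \frac{1}{m}\sum_{j=1}^m g(z_j)\, k(\cdot, z_j), \qquad \langle C_m[g], g\rangle_\calH \;=\; \frac{1}{m}\sum_{j=1}^m g(z_j)^2.
\]
The two variances agree for every $g \in \calH$, so the pushforward of the law of $f$ under $\phi_g$ matches the required one-dimensional Gaussian, and $f \sim \gamma'_m$.

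There is no substantive obstacle here: the proposition is essentially a bookkeeping consequence of the reproducing property together with the definition of a Gaussian measure on a Hilbert space. The only mild subtlety is that $C_m$ itself depends on $z_{1:m}$, so the identification must be read sample-pathwise in $z_{1:m}$; both sides of the variance identity inherit this dependence in the same way, so conditioning on $z_{1:m}$ and then invoking independence of $\lambda_{1:m}$ from $z_{1:m}$ yields the claim.
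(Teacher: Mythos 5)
Your argument is correct and is essentially the same as the paper's: both verify the defining property of a Gaussian measure by computing $\langle f, g\rangle_\calH = m^{-1/2}\sum_j \lambda_j g(z_j)$ via the reproducing property and matching its variance to $\langle C_m[g], g\rangle_\calH = m^{-1}\sum_j g(z_j)^2$. Your explicit remark about conditioning on $z_{1:m}$ is a minor clarification the paper leaves implicit, but it does not change the argument.
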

\Cref{alg:gaussian_ekqd} brings together the $\ekqd$ estimator in~\Cref{eq:estimator_ekqd}, and the procedure for sampling from the Gaussian measure in~\Cref{res:sampling_from_gm}. The $\nu$ choice is left up to the user; the uniform $\nu$ remains a default choice. We proceed to analyse the cost.
This estimator has complexity $ \bigo(l \max(nm, m^2, n \log n))$:
$\bigo(l)$ for iterating over directions $i \in \{1, \dots, l\}$;
$\bigo(nm)$ for computing $f_i(x_{1:n})$ and $f_i(y_{1:n})$;
$\bigo(m^2)$ for computing $\| f_i \|_\calH$; and
$\bigo(n \log n)$ for sorting $u_i(x_{1:n})$ and $u_i(y_{1:n})$. For $l \coloneqq \log n$ and $m \coloneqq \log n$, the complexity therefore reduces to $ \bigo(n \log^2 n)$; i.e. near-linear (up to log-terms).

\section{Experiments}
\label{sec:experiments}

\begin{figure*}[t]
    \centering
    \includegraphics[width=\linewidth]{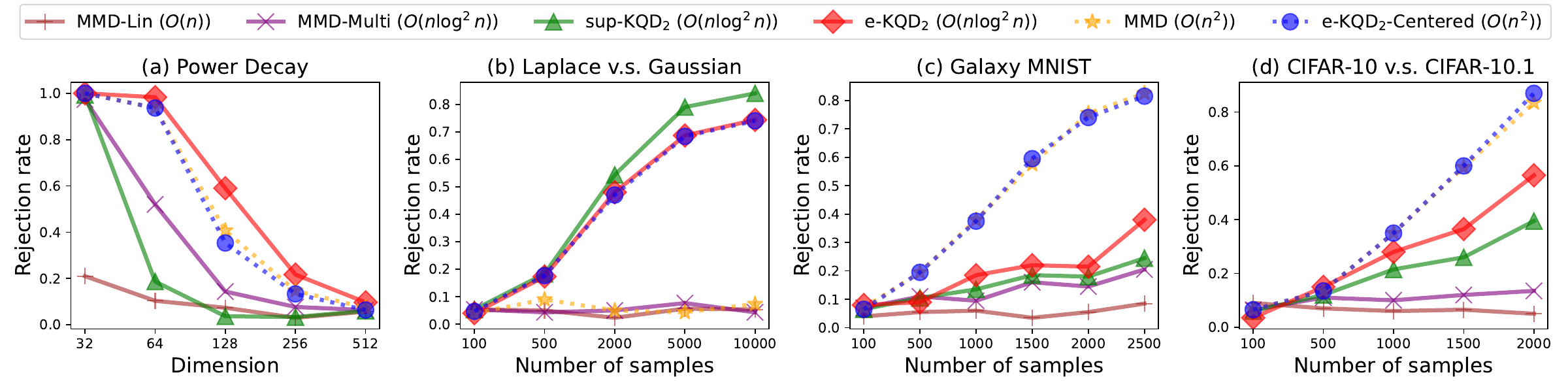}
    \caption{Experimental results comparing our proposed methods with baseline approaches. Methods represented by dotted lines exhibit quadratic complexity for a \textbf{single} computation of the test statistic, while the remaining methods achieve near-linear or linear computational efficiency. A higher rejection rate indicates better performance in distinguishing between distributions. \textbf{Overall, quadratic-time quantile-based estimators perform comparably to quadratic-time MMD estimators, while near-linear time quantile-based estimators often outperform their MMD-based counterparts.}}
    \label{fig: main_experiments}
\end{figure*}

We empirically demonstrate the effectiveness of KQDs for nonparametric two-sample hypothesis testing which aims at determining whether two arbitrary probability distributions, $P$ and $Q$, differ statistically based on their respective i.i.d. samples. Two-sample testing is widely adopted in scientific discovery fields, such as model verification~\citep{gao2024model}, out-of-domain detection~\citep{magesh2023principled}, and comparing epistemic uncertainties~\citep{chau2025credal}. Specifically, we test the null hypothesis $H_0: P = Q$ against the alternative $H_1: P \neq Q$. In such tests, (estimators of) probability metrics are commonly used as test statistics, including the Kolmogorov-Smirnov distances~\citep{Kolmogorov60:Foundations}, Wasserstein distance~\citep{wang_two-sample_2022}, energy-distances~\citep{SzeRiz05,sejdinovic2013equivalence}, and most relevant to our work, the MMD~\citep{gretton2006kernel,gretton2009fast, gretton2012kernel}. For an excellent overview of kernel-based two sample testing, we refer readers to \citet{schrab2025unified}.

Experiments are repeated to calculate the rejection rate, which is the proportion of tests where the null hypothesis is rejected.
A high rejection rate indicates better performance at distinguishing between distributions.
It is equally important to ensure proper control of Type I error, defined as the rejection rate when the null hypothesis $H_0$ is true. Specifically, the Type I error rate should not exceed the specified level. Without controlling for Type I error, an inflated rejection rate might not reflect the estimator’s ability to detect genuine differences but instead indicate the test rejects more often than it should. We consider a significance level $\alpha$ of 0.05 throughout and report on Type I control in~\Cref{appendix:experiments}.

To determine the rejection threshold for each test statistic, we employ a permutation-based approach: for each trial we pool the two samples, randomly reassign labels $300$ times to simulate draws under $H_0$, compute the test statistic on each permuted split, and take the 95th percentile of this empirical null distribution as our threshold. This fully nonparametric thresholding ensures Type I error control without additional distributional assumptions~\citep{lehmann1986testing}.

Our experiments aim to demonstrate that, within a comparable computational budget, statistics computed using quantile-characteristic kernels can deliver results competitive with those of MMD tests based on mean-characteristic kernels. Additionally, we seek to explore the inherent trade-offs of the proposed methods. We focus on the nonparametric two-sample testing problem, as it represents one of the most successful applications of the mean-embedding-based MMD and its variants. The code is available at \url{https://github.com/MashaNaslidnyk/kqe}.

\subsection{Benchmarking}
We consider the following distances as test statistics in our experiments. Detail descriptions of these estimators are provided in \Cref{appendix_sec:estimators}. For KQDs, we take the reference measure $\xi$ (c.f. Proposition~\ref{res:sampling_from_gm}) to be $\nicefrac{1}{2} P_n + \nicefrac{1}{2} Q_n$, where $P_n$ corresponds to the empirical distribution $\nicefrac{1}{n}\sum_{i=1}^n \delta_{x_i}$, analogously for $Q_n$. Such $\xi$ is a general choice that is appropriate in the absence of additional information about the space $\calX$. We take power $p=2$ for all KQD-based discrepancies in our experiments; identical experiments for $p=1$ lead to the same conclusions and are presented for completeness in~\Cref{sec:exp_for_p1}. Other than in the second experiment, we use the RBF kernel $k(x,x') = \exp(-\|x-x'\|^2/2\sigma^2)$ with $\sigma$ the bandwidth chosen using the median heuristic method, i.e. $\sigma = \operatorname{Median}(\{\|x_i - x_j\|_2^2, ~\forall i,j\in 1,\dots, n\})$~\citep{gretton2012kernel}. Due to space constraints, we present all methods on the same plot, regardless of their computational complexity. However, it is important to note that directly comparing test power across methods with varying sampling complexities may be unfair and misleading.
\begin{itemize}[leftmargin=*]
\vspace{-2mm}
\setlength\itemsep{0em}
    \item \textbf{$\ekqd$ (ours).} For $\ekqd$, we set the number of projections to $l = \log n$ and the number of samples drawn from the Gaussian reference to $m=\log n$. Consequently, the overall computational complexity is $ \bigo(n \log^2(n))$.
    \item \textbf{$\ekqd$-centered (ours).} The centered version of $\ekqd$, as discussed in~\Cref{appendix:centered_quantiles}, can be expressed as the sum of an $\ekqd$ term and the classical MMD. While the $\ekqd$ component follows the same sampling configuration as above, the MMD computation is the dominant factor in complexity, leading to an overall cost of $\bigo(n^2)$.
    \item \textbf{$\supkqd$ (ours).} $\supkqd$ adopts the same sampling configuration as $\ekqd$ (thus cost $\bigo(n\log^2(n))$). Instead of averaging over projections, it selects the maximum across all projections. This approach serves as a fast approximation of the kernel max-sliced Wasserstein distance of \citet{wang_two-sample_2022}, where a Riemannian block coordinate descent method is used to optimise an entropic regularised objective at a computational cost of $\bigo(n^3 \log(n))$. In contrast, our approach identifies the largest directional quantile difference across the sampled projections. While we do not claim that this provides an accurate estimate of the true distance, this approach allows for controlled complexity and facilitates comparisons between averaging or taking the supremum.

    \item \textbf{MMD.} The MMD is included as a benchmark to be compared with $\ekqd$-centered and has complexity $\bigo(n^2)$. The MMD is estimated using the U-statistic formulation.

    \item \textbf{MMD-Multi.} A fast MMD approximation based on incomplete U-statistic introduced in \citet{Schrab2022} is included to benchmark against our $\ekqd$ distance. Configurations of MMD-Multi are chosen as to match the complexity of $\ekqd$ for a fair comparison.
    \item \textbf{MMD-Lin.} MMD-Linear from \citet[Lemma 14.]{gretton2012kernel} estimates the MMD with complexity $\bigo(n)$.
\end{itemize}

\subsection{Experimental Setup and Results}
\label{sec:experimental_results}

We conduct four experiments: two using synthetic data, allowing full control over the simulation environment, and two based on high-dimensional image data to showcase the practicality and competitiveness of our proposed methods. Additional experiments are reported in~\Cref{appendix:experiments}, specifically: studying the impact of changing the measures $\nu$ and $\xi$, comparing with sliced Wasserstein distances, and comparing with MMD based on other KME approximations.

\textbf{1. Power-decay experiment.} This experiment investigates the effect of the curse of dimensionality on our tests, following the setup of Experiment A in \citet{wang_two-sample_2022}. Prior work by \citet{ramdas2015decreasing} has shown that MMD-based methods are particularly vulnerable to the curse of dimensionality. Here, we assess whether our quantile-based test statistic exhibits similar limitations.

We fix $n=200$ and take $P$ to be an isotropic Gaussian distribution of dimension $d$. Similarly, we take $Q$ to be a $d$-dimensional Gaussian distribution with a diagonal covariance matrix $\Sigma = \operatorname{diag}(\{4, 4, 4, 1, \dots, 1\})$. As we increase the dimension $d \in [32, 64, 128, 256, 512]$, the testing problem becomes increasingly challenging. Figure~\ref{fig: main_experiments}a presents the results. We observe that $\ekqd$ exhibits the slowest decline in test power among all methods, irrespective of their computational complexity. Notably, it maintains its performance significantly better than its $\bigo(n \log^2(n))$ benchmark, MMD-Multi. These results suggest that quantile-based discrepancies exhibit greater robustness to high-dim data.

\textbf{2. Laplace v.s. Gaussian.} This experiment aims to illustrate~\Cref{res:if_meanchar_then_quantchar} by demonstrating that while a kernel may not be mean-characteristic---meaning it cannot distinguish between two distributions using standard KMEs and MMDs---it can still be quantile-characteristic. In such cases, the distributions can still be effectively distinguished using our KQEs and KQDs. To demonstrate this, we take $P$ to be a standard Gaussian in $d=1$, and $Q$ to be a Laplace distribution with matching first and second moment. We vary $n \in \{100, 500, 2000, 5000, 10000\}$ and select a polynomial kernel of degree $3$, i.e. $k(x,x') = (\langle x,x' \rangle + 1)^3$, for all our methods. This ensures that $k$ cannot distinguish between the two distributions due to their matching first and second moments, which leads to their KMEs being identical.

Figure~\ref{fig: main_experiments}b shows that our KQDs, irrespective of their computational complexity, exhibit increasing test power as the sample size grows. In contrast, MMD-based methods fail entirely to detect any differences between $P$ and $Q$. Notably, although $\ekqd$-centered can be expressed as the sum of an MMD term and an a $\ekqd$ term, the underperformance of the MMD component in this scenario is effectively compensated by the $\ekqd$ term, enabling successful testing.

\textbf{3. Galaxy MNIST.} We examine performance on real-world data through galaxy images~\citep{walmsley2022galaxy} in dimension $d=3\times64\times64=12288$, following the setting from \citet{biggs2024mmd}. These images consist of four classes. $P$ corresponds to images sampled uniformly from the first three classes, while $Q$ consists of samples from the same classes with probability $0.85$ and from the fourth class with probability $0.15$. A Gaussian RBF kernel with bandwidth chosen using the median heuristic method is chosen for all estimators. Sample sizes are chosen from $n\in\{100, 500, 1000, 1500, 2000, 2500\}$.

Figure~\ref{fig: main_experiments}c presents the results.
$\ekqd$-centered and MMD exhibit nearly identical performance, suggesting that the MMD term is dominating in the $\ekqd$-centered estimator.
Among the near-linear time test statistics, $\ekqd$ and $\supkqd$ show a slight advantage over MMD-Multi in distinguishing between the distributions of Galaxy images.

\textbf{4. CIFAR-10 v.s. CIFAR-10.1.} We conclude with an experiment on telling apart the CIFAR-10~\citep{Krizhevsky12:ImageNet} and CIFAR-10.1~\citep{recht2019imagenet} test sets, following again \citet{liu2020learning} and \citet{biggs2024mmd}. The dimension is $d = 3 \times 32 \times 32 = 3072$. This is a challenging task, as CIFAR-10.1 was designed to provide new samples from the CIFAR-10 distribution, making it an alternative test set for models trained on CIFAR-10. We conduct the test by drawing $n$ samples from CIFAR-10, and $n$ samples from CIFAR-10.1, with $n \in \{100, 500, 1000, 1500, 2000\}$.

Figure~\ref{fig: main_experiments}d presents the results. Consistent with previous observations, test statistics with quadratic computational complexity exhibit nearly identical performance. However, our quantile discrepancy estimators with near-linear complexity significantly outperform the fast MMD estimators (MMD-Multi) of the same complexity, highlighting the practical advantages of our methods in real-world testing scenarios where computational efficiency is a critical consideration.

An empirical runtime comparison of all methods is presented in Figure~\ref{fig:run_time}, which shows the time (in seconds) required to complete this experiment. The empirical results align with our complexity analysis: the near-linear estimators exhibit comparable performance, while the quadratic estimators are significantly slower. The proposed near-linear KQD estimator makes it suitable for larger-scale datasets.

\begin{figure}[!t]
    \centering
    \includegraphics[width=0.85\linewidth]{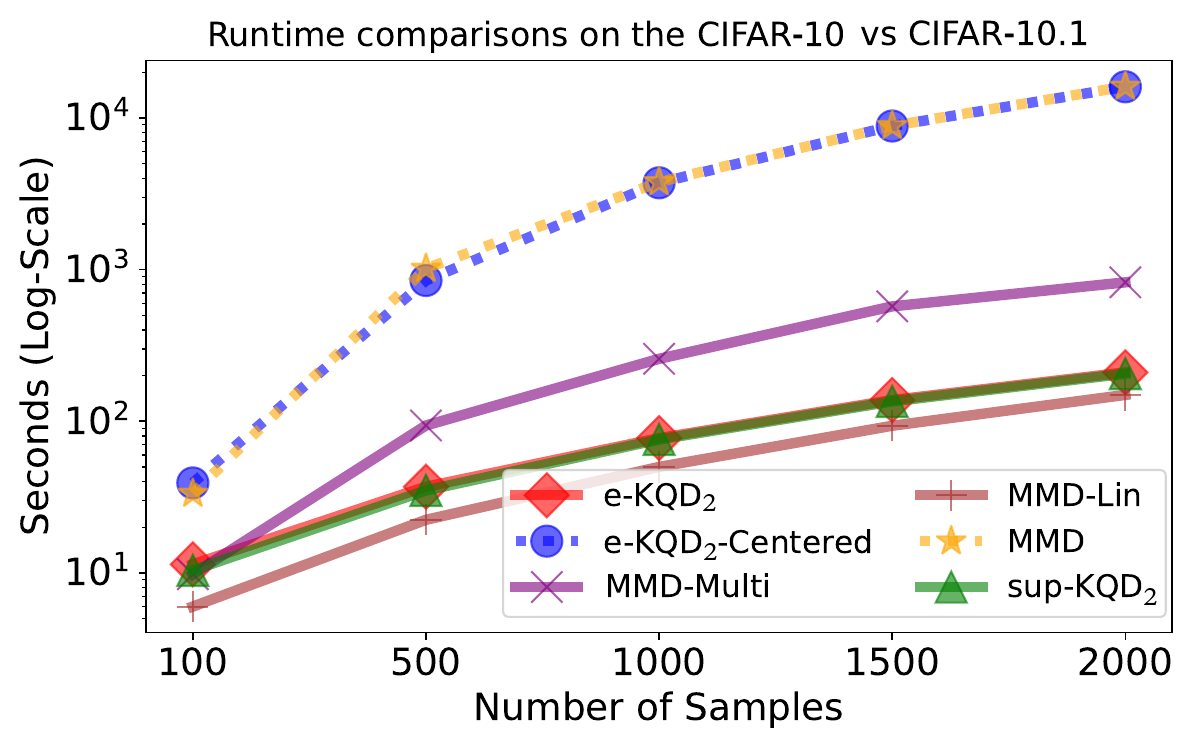}
    \vspace{-3mm}
    \caption{Comparing the time (in seconds) required to complete the CIFAR-10 vs. CIFAR-10.1 experiment, plotted on a logarithmic scale. A shorter time indicates a faster algorithm. These results align with our complexity analysis.}
    \vspace{-1em}
    \label{fig:run_time}
\end{figure}

\section{Discussion and Future Work}

This work explores representations of distributions in a RKHS beyond the mean, using functional quantiles to capture richer distributional characteristics. We introduce kernel quantile embeddings (KQEs) and their associated kernel quantile discrepancies (KQDs), and establish that the conditions required for KQD to define a distance are strictly more general than those needed for MMD to be a distance. Additionally, we propose an efficient estimator for the expected KQD based on Gaussian measures, and demonstrate its effectiveness compared to MMD and its fast approximations through extensive experiments in two-sample testing.
Our findings demonstrates the potential of KQEs as a powerful alternative to traditional mean-based representations.

Several promising avenues remain. Firstly, future work could explore more sophisticated methods for improving the empirical estimates of KQEs.
The study of optimal kernel selection to maximize test power when using KQD for hypothesis testing, analogous to existing work on MMDs~\citep{jitkrittum2020testing,liu2020learning,schrab2023mmd} could also be explored.
Secondly, considering the demonstrated potential of functional quantiles for representing marginal distributions, it is natural to ask whether they could provide a powerful alternative to conditional mean embeddings (CMEs) \citep{Song10:KCOND,Park20:CME}, the Hilbert space representation of conditional distributions. These complementary developments will unlock new avenues for enhancing existing applications of KMEs across a wide range of domains, including not only nonparametric two-sample testing, but also (conditional) independence testing, causal inference, reinforcement learning, learning on distributions, generative modeling, robust parameter estimation, and Bayesian representations of distributions via kernel mean embeddings, as explored in \citet{flaxman2016bayesian,chau2021deconditional,chau2021bayesimp}, among others.

\section*{Acknowledgements}

The authors are grateful to Carlo Ciliberto and Antonin Schrab for fruitful discussions on Gaussian measures and MMD two-sample testing respectively.
MN acknowledges support from the U.K. Research and Innovation under grant number EP/S021566/1, and from the Helmholtz Information \& Data Science Academy (HIDA) for providing financial support enabling a short-term research stay at CISPA (Application No. 14773).

\section*{Impact Statement}

This paper presents work whose goal is to advance the field of Machine Learning. There are many potential societal consequences of our work, none which we feel must be specifically highlighted here.

\bibliography{references.bib}
\clearpage

\onecolumn

\appendix

{\hrule height 1mm}
\vspace*{-0pt}
\section*{\LARGE\bf \centering Supplementary Material
}
\vspace{8pt}
{\hrule height 0.1mm}
\vspace{10pt}

This supplementary material is structured as follows. In \Cref{appendix_sec:estimators}, we recall existing probability metrics, define alternative KQDs, and then describe their respective finite-sample estimators. In \Cref{appendix:proofs}, we provide the proofs of all theoretical results in the paper. In \Cref{appendix:experiments}, we provide additional numerical experiments to complement the main text.

\section{Probability Metrics and Their Estimators }\label{appendix_sec:estimators}

\subsection{Maximum Mean Discrepancy}

We first recall that the MMD is an integral probability metric \citep{muller1997integral} where the supremum can be obtained in closed-form:
\begin{align*}
    \text{MMD}(P,Q) & \coloneqq \sup_{\|f\|_{\calH} \leq 1} \left| \mathbb{E}_{X \sim P}[f(X)] - \mathbb{E}_{Y \sim Q}[f(X)] \right| = \|\mu_P - \mu_Q \|_{\mathcal{H}}
\end{align*}
Using the reproducing property, we can then expressed the squared-MMD as
\begin{align}
\label{eq:mmd}
    \text{MMD}^2(P,Q) & \coloneqq \bE_{X,X' \sim P}[k(X,X')]
    - 2 \bE_{X \sim P,Y \sim Q}[k(X,Y)] + \bE_{Y,Y' \sim Q}[k(Y,Y')]
\end{align}
This section describes the most widely used estimators for this quantity based on i.i.d. samples $x_{1:n} \sim P$ and $y_{1:n}\sim Q$. The first is a biased V-statistic estimator with computational complexity $\bigo(n^2)$ and convergence rate $\bigo(n^{-\nicefrac{1}{2}})$:
\begin{align*}
    \text{MMD}_V^2(P,Q) \coloneqq \frac{1}{n^2}\sum_{i=1}^n \sum_{j=1}^n k(x_i,x_j) - \frac{2}{n^2} \sum_{i=1}^n \sum_{j=1}^n k(x_i,y_j) + \frac{1}{n^2}\sum_{i=1}^n \sum_{j=1}^n k(y_i,y_j)
\end{align*}
Alternatively, an unbiased estimator can be constructed through a U-statistic. Such an estimator also has computational complexity $\bigo(n^{2})$ and convergence rate $\bigo(n^{-\nicefrac{1}{2}})$~\citep[Lemma 6;Corollary 16]{gretton2012kernel}, and is given by
\begin{align*}
    \text{MMD}_U^2(P,Q) \coloneqq \frac{1}{n(n-1)}\sum_{i\neq j} k(x_i,x_j) - \frac{2}{n^2} \sum_{i=1}^n \sum_{j=1}^n k(x_i,y_j) + \frac{1}{n(n-1)}\sum_{i\neq j} k(y_i,y_j)
\end{align*}
A cheaper estimator was proposed in Lemma 14 of \cite{gretton2012kernel}. This estimator has computational complexity $\bigo(n)$ and convergence rate $\bigo(n^{-\nicefrac{1}{2}})$, and we will refer to it as \emph{MMD-Lin}. The estimator is given by:
\begin{align*}
    \text{MMD}_\text{lin}^2(P,Q) \coloneqq \frac{1}{\lfloor n/2\rfloor} \sum_{i=1}^{n/2} k(x_{2i-1},x_{2i})+k(y_{2i-1},y_{2i})-k(x_{2i-1},y_{2i})-k(x_{2i},y_{2i-1})
\end{align*}
Finally, we also studied estimators whose computational complexity are between that of MMD-lin and the U- or V-statistics estimators. These estimators are due to \citet{Schrab2022} and we refer to them as \textit{MMD-Multi}, and takes the following form
\begin{align*}
    \text{MMD}_\text{Multi}^2(P,Q) \coloneqq \frac{2}{r(2n-r-1)}\sum_{j=1}^r\sum_{i=1}^{n-j} k(x_i, x_{i+j}) + k(y_i, y_{i+j}) - k(x_i, y_{i+j}) - k(x_{i+j}, y_i)
\end{align*}
where $r$ is the number of subdiagonal considered. In our experiments, to match the complexity with $\ekqd$, we set $r = \log^2(n)$. They have computational complexity $\bigo(rn)$.

Note that several estimators with faster convergence rates exist \citep{Niu2021,Bharti2023}, but these have computational cost ranging from $\bigo(n^2)$ to $\bigo(n^{3})$ and require more regularity conditions on $k, P$ and $Q$, and we therefore omit them from our benchmark. \citet{Bodenham2023} also introduced an estimator with computational complexity of $\bigo(n \log(n))$ (and convergence $\bigo(n^{-\nicefrac{1}{2}})$) using slices/projections to $d=1$. However, their approach is restrictive in that it can only be used for the Laplace kernel, and we therefore also do not compare to it.

\subsection{Wasserstein Distance}

The $p$-Wasserstein distance \cite{Kantorovich1942,villani2009optimal} is defined as
\begin{align*}
    W_p(P,Q) \coloneqq \left(\inf_{\pi \in \Pi(P,Q)} \bE_{(X,Y)\sim \pi}\left[\rho(X,Y)^p \right]\right)^{\nicefrac{1}{p}}.
\end{align*}
Given samples $x_{1:n} \sim P$ and $y_{1:n} \sim Q$, this distance can be approximated using a plug-in $W_p(\nicefrac{1}{n}\sum_{i=1}^n \delta_{x_i},\nicefrac{1}{n} \sum_{i=1}^n \delta_{y_i})$, which can be computed in closed-form at a cost of $\bigo(n^3)$, but converges to $W_p(P,Q)$ with a convergence rate $\bigo(n^{-\nicefrac{1}{d}})$.

When $\calX \subseteq \mathbb{R}^d$ and $p=1$, we obtain the $1$-Wasserstein distance which, similarly to the MMD, can be written as an integral probability metric \citep{muller1997integral}:
\begin{align*}
    W_1(P,Q) \coloneqq \sup_{\|f\|_\text{Lip} \leq 1} \left| \mathbb{E}_{X\sim P}[f(X)] - \mathbb{E}_{X \sim Q} [f(X)]\right|
\end{align*}
where $\|f\|_{\text{Lip}} = \sup_{x,y \in \calX, x \neq y} |f(x)-f(y)|/\|x-y\|$ denotes the Lipschitz norm.

When $P,Q$ are distributions on a one dimensional space $\calX \subseteq \mathbb{R}$ that have $p$-finite moments, the $p$-Wasserstein distance can be expressed in terms of distance between quantiles of $P$ and $Q$ (see for instance~\citet[Remark 2.30]{peyre2019computational})
\begin{align}
\label{eq:wasserstein_as_quantiles}
    W_p(P,Q) = \left( \int_0^1 |\rho^{\alpha}_P - \rho^{\alpha}_Q|^p \d \alpha \right)^{\nicefrac{1}{p}}
\end{align}
A natural estimator for the Wasserstein distance is therefore based on approximating these one-dimensional quantiles using order statistics. Given $x_{1:n} \sim P$ and $y_{1:n} \sim Q$, denote by $P_n = \nicefrac{1}{n}\sum_{i=1}^n \delta_{x_i}$ and $Q_n = \nicefrac{1}{n} \sum_{i=1}^n \delta_{y_i}$ the corresponding empirical approximations to $P$ and $Q$. Then, the $\left\lceil \nicefrac{j}{n} \right\rceil$-th quantiles of $P_n$ and $Q_n$ are exactly the $j$-th order statistics $[x_{1:n}]_j$ and $[y_{1:n}]_j$, meaning the $j$'th smallest elements of $x_{1:n}$ and $y_{1:n}$ respectively. Then $W_p(P_n, Q_n)$ takes the exact form
\begin{align}
\label{eq:wasserstein_as_order_statistics}
    W_p(P_n, Q_n) = \left(\sum_{j=1}^n \left| [x_{1:n}]_j - [y_{1:n}]_j\right|^p \right)^{\nicefrac{1}{p}},
\end{align}
and is an estimator of $W_p(P, Q)$. This estimator costs $\bigo(n \log(n))$ to compute (due to the cost of sorting $n$ data points), and a convergence rate of $\bigo(n^{-\nicefrac{1}{2}})$ for $p=1$, and minimax convergence rate $\bigo(n^{-\nicefrac{1}{2p}})$ for integer $p>1$ when $P, Q$ have at least $2p$ finite moments. In some cases, the $p>1$ rate can be improved upon to match the $\bigo(n^{-\nicefrac{1}{2}})$ rate of $p=1$: we refer to~\citet{bobkov2019one} for a thorough overview.

\subsection{Sliced Wasserstein}

The \emph{sliced Wasserstein} (SW) distances \citep{rabin2011wasserstein,bonneel2015sliced} between two distributions $P,Q$ on $\mathbb{R}^d$ use one-dimensional projections to reduce computational cost. \paragraph{Expected SW.} For an integer $p \ge 1$, expected SW is defined as
\begin{align*}
    \text{SW}_p(P,Q)
    &\coloneqq \left(\bE_{u \sim \bU(S^{d-1})} [W_p^p(\phi_u\# P, \phi_u\# Q)] \right)^{\nicefrac{1}{p}},
\end{align*}
where $\bU(S^{d-1})$ is the uniform distribution on the unit sphere $S^{d-1}$, the measures $\phi_u\# P$, $\phi_u\# Q$ are pushforwards under the projection operator $\phi_u(x) = \langle u, x \rangle$, and $W_p$ is the one-dimensional $p$-Wasserstein distance as in Equation~\eqref{eq:wasserstein_as_quantiles}. Given $x_{1:n} \sim P$ and $y_{1:n} \sim Q$, the integral over the sphere is approximated by Monte Carlo sampling of $l$ directions $u_{1:l}$, which together with the estimator in~\Cref{eq:wasserstein_as_order_statistics} gives
\begin{align*}
    \widehat{\text{SW}}_p^p(P,Q) = \frac{1}{l}\sum_{i=1}^l W_p^p(\phi_{u_j}\# P_n, \phi_{u_j}\# Q_n) = \frac{1}{ln}\sum_{i=1}^l \sum_{j=1}^n \left(\big[\langle u_i, x_{1:n}\rangle\big]_j - \big[\langle u_i, y_{1:n}\rangle\big]_j \right)^p
\end{align*}
Here, $[\langle u_i, x_{1:n}\rangle]_j$ is the $j$-th order statistics, meaning the $j$-th smallest element of $\langle u_i, x_{1:n}\rangle=[\langle u_i, x_1\rangle, \dots, \langle u_i, x_n\rangle]^\top$. This estimator can be computed in $\bigo(l n\log n)$ time (the cost on sorting $n$ samples, for $l$ directions) and was shown to converge at rate $\bigo(l^{-\nicefrac12} + n^{-\nicefrac12})$ for $p=1$ \citep{nadjahi_statistical_2022}.

\paragraph{Max SW.}
The \emph{max-sliced Wasserstein} (max-SW) distance \citep{Deshpande2018} replaces the average over projections in expected SW with a supremum over directions,
\begin{align*}
    \text{max-SW}_p(P,Q)
    &\coloneqq \left(\sup_{u \in S^{d-1}} W^p_p(\phi_u \# P, \phi_u \# Q) \right)^{\nicefrac{1}{p}},
\end{align*}
where, $\phi_u(x) = \langle u, x \rangle$ is again the projection operator, and $W_p$ is the one-dimensional $p$-Wasserstein distance of~\Cref{eq:wasserstein_as_quantiles}. Max-SW emphasizes the direction of greatest dissimilarity between the two measures.

Given $x_{1:n} \sim P$ and $y_{1:n} \sim P$, max-SW is estimated as $W_p(\phi_{u^*} \# P, \phi_{u^*} \# Q)$, for $u^*$ the projection that maximises $W^p_p(\phi_u \# P_n, \phi_u \# Q_n)$ as given in~\Cref{eq:wasserstein_as_order_statistics}. In~\citep{Deshpande2018}, $u^*$ was approximated by optimising a heuristic, rather than the actual $W^p_p(\phi_u \# P_n, \phi_u \# Q_n)$. Then,~\citet{kolouri_generalized_2022} approached the actual problem of
\begin{align*}
    u^*
    &= \argmax_{\|u\|=1} W^p_p(\phi_u \# P_n, \phi_u \# Q_n).
\end{align*}
by running projected gradient descent on $S^{d-1}$, where each gradient step requires computing the derivative of the 1D Wasserstein distance w.r.t. $u$. Concretely, they initialise $u_1$ randomly and iterate
\begin{align}
\label{eq:max_sw_optimisation}
    u_{t+1} = \mathrm{Proj}_{S^{d-1}}\Big(\mathrm{Optim}\big( \nabla_u W^p_p(\phi_{u_t} \# P, \phi_{u_t} \# Q), u_{1:t}\big)\Big),
\end{align}
where $\mathrm{Proj}_{S^{d-1}}(x) = x/\|x\|$ is the operator projecting onto the unit sphere, and $\mathrm{Optim}$ is an optimiser of choice, such as ADAM. Each evaluation of $W_p$ and its gradient in one dimension costs $\bigo(n\log n)$, so the overall complexity is $\bigo(T n\log n)$ for $T$ gradient steps. It is important to point out the optimisation may be noisy, with the value objective getting worse after some iterations. Indeed, if $z_{t+1}$ is the solution to $\mathrm{Optim}\big( \nabla_u W^p_p(\phi_{u_t} \# P, \phi_{u_t} \# Q), u_{1:t}\big)$, is it an improvement over $u_t$, meaning $W^p_p(\phi_{u_t}\#P, \phi_{u_t}\#Q) \leq W^p_p(\phi_{z_{t+1}}\#P, \phi_{z_{t+1}}\#Q)$. Written out explicitly,
\begin{align*}
    \sum_{j=1}^n \left| [\langle u_t, x_{1:n} \rangle]_j - [\langle u_t, y_{1:n} \rangle]_j\right|^p \leq \sum_{j=1}^n \left| [\langle z_{t+1}, x_{1:n} \rangle]_j - [\langle z_{t+1}, y_{1:n} \rangle]_j\right|^p,
\end{align*}
Then, $u_{t+1} = \mathrm{Proj}_{S^{d-1}}(z_{t+1}) = z_{t+1} / \|z_{t+1} \|$, and it may happen that $W^p_p(\phi_{u_t}\#P, \phi_{u_t}\#Q) > W^p_p(\phi_{u_{t+1}}\#P, \phi_{u_{t+1}}\#Q)$. The desired $W^p_p(\phi_{u_t}\#P, \phi_{u_t}\#Q) \leq W^p_p(\phi_{u_{t+1}}\#P, \phi_{u_{t+1}}\#Q)$ is guaranteed when $\|z_{t+1}\|^p \leq 1$, which may not happen.

\subsection{Generalised Sliced Wasserstein}

The \emph{generalised (max-)sliced Wasserstein} (GSW and max-GSW) distances \citep{kolouri_generalized_2022} extend SW and max-SW by using a family of nonlinear feature maps $\{f_\theta : \mathbb{R}^d \to \mathbb{R}\}_{\theta\in\Theta}$ instead of linear projections. Formally,
\begin{align*}
    \text{GSW}_p(P,Q)
    &\coloneqq \left( \bE_{\theta \sim \mu} W_p^p(f_\theta \# P, f_\theta \# Q ) \right)^{\nicefrac{1}{p}},\qquad \text{max-GSW}_p(P,Q)
    \coloneqq\left( \sup_{\theta \in \Theta} W_p^p(f_\theta \# P, f_\theta \# Q ) \right)^{\nicefrac{1}{p}},
\end{align*}
where $f_\theta \# P$ denotes the pushforward of $P$ by $f_\theta$ and $\mu$ is a probability measure over the parameter space $\Theta$. For $f_\theta(x)=\langle \theta,x\rangle$ and $\Theta=S^{d-1}$ with uniform $\mu$, GSW reduce to the standard SW distances. For expected GSW, sampling $\{\theta_i\}_{i=1}^l\sim\mu$ yields an estimator with the same $\bigo(l n\log n)$ computational complexity as expected SW~\citep{kolouri_generalized_2022}. For max-GSW, the projected gradient descent approach of~\Cref{eq:max_sw_optimisation} applies, at the same complexity of $\bigo(Tn \log n)$ as for max-SW.

Statistical and topological properties of GSW depend completely on the choice of the family $\{f_\theta:\theta \in\Theta\}$.~\citet{kolouri_generalized_2022} consider the specific case of polynomial $f_\theta$, and show GSW is then a metric on probability distributions on $\Re^d$.

\subsection{Kernel Sliced Wasserstein}

A special case of the GSW arises when the feature maps $f_\theta$ are drawn from a reproducing kernel Hilbert space (RKHS). Let $k:\mathbb{R}^d\times\mathbb{R}^d\to\mathbb{R}$ be a positive definite kernel that induces the RKHS $\calH$ with unit sphere $S_\calH$. Then, the \emph{kernel sliced Wasserstein} (KSW) can be introduced as
\begin{align*}
    \text{e-KSW}_p(P,Q)
    &\coloneqq \left( \bE_{u \sim \gamma} W_p^p(u \# P, u \# Q ) \right)^{\nicefrac{1}{p}},\qquad \text{max-KSW}_p(P,Q)
    \coloneqq\left( \sup_{u \in S_\calH} W_p^p(u \# P, u \# Q ) \right)^{\nicefrac{1}{p}},
\end{align*}
where $\gamma$ is some probability measure on $S_\calH$. The expected KSW is a new construct, while max-KSW was introduced in~\citet{wang_two-sample_2022}, and studied further in~\citet{wang2024statistical}; in both papers $k$ was assumed to be universal. Finding the optimal $u^*$ for max-KSW was shown to be NP-hard in~\citet{wang2024statistical}; they propose an estimator at cost $\bigo(T^{3/2}n^2)$. Though still more expensive than computing the V-statistic estimator of MMD, this is an improvement over $\bigo(Tn^3)$ in the original work of~\citet{wang_two-sample_2022}.

As pointed out in the main text, the choice of a uniform $\gamma$ in e-KSW, while seemingly natural, may not be feasible as there is no uniform or Lebesgue measure in infinite dimensional spaces. In the main paper, we propose a practical choice of $\gamma$ that facilitates an efficient estimator, and study computational cost. Further, we establish statistical and topological properties that apply to both expected and max-KSW---and do not assume a universal kernel.

\subsection{Sinkhorn Divergence}

The entropic regularisation of optimal transport leads to the \emph{Sinkhorn divergence}~\citep{cuturi2013sinkhorn, Genevay2019}. For distributions $P,Q$ and regularisation parameter $\varepsilon>0$, the entropic OT cost is defined as
\begin{align*}
    W_{p,\varepsilon}(P,Q)
    &\coloneqq \left(\inf_{\pi\in\Pi(P,Q)} \bE_{(X,Y)\sim\pi}[\|X-Y\|^p]
    +\varepsilon \mathrm{KL}(\pi \| P\otimes Q )\right)^{\nicefrac{1}{p}}.
\end{align*}
The Sinkhorn divergence then corrects for the entropic bias:
\begin{align}
\label{eq:sinkhorn}
    \mathrm{S}_{p,\varepsilon}(P,Q)
    &\coloneqq W_{p,\varepsilon}(P,Q)
    - W_{p,\varepsilon}(P,P)/2
    - W_{p,\varepsilon}(Q,Q)/2.
\end{align}
This quantity interpolates between MMD-like behavior for large $\varepsilon$ and true Wasserstein for $\varepsilon\to0$, and can be computed efficiently via Sinkhorn iterations at cost $\bigo(n^2)$ per iteration~\citep{cuturi2013sinkhorn}.

\subsection{Kernel covariance embeddings}

Kernel covariance (operator) embeddings (KCE,~\citet{makigusa2024two}) represent the distribution $P$ as the second-order moment of the function $k(X, \cdot)$, for $X \sim P$, as an alternative to the first-order moment (the kernel mean embedding). Due to being moments of the same distribution, the two share key positives and drawbacks: KCE for kernel $k$ exists if and only if KME for $k^2$ exists, and the kernel $k$ is covariance characteristic if and only if $k^2$ is mean-characteristic~\citep{bach2022information}. The divergence proposed in~\citet{makigusa2024two} is the distance between the KCE, and is estimated at $\bigo(n^3)$ due to the need to compute full eigendecomposition of the KCE in order to compute the norm. In contrast, our proposed kernel quantile embeddings (KQE) embed quantiles, and therefore the relation to the KCE comes down to matching quantiles (which always exist, and come with an efficient estimator), compared to matching the second moment in the infinite-dimensional RKHS (which may not exist, and requires eigenvalue decomposition).

\subsection{Kernel median embeddings}

The median embedding~\citep{nienkotter2022kernel} of $P$ is the geometric median of $k(X, \cdot)$, $X \sim P$ in the RKHS, meaning the RKHS element which, on average, is $L^1$-closest to the point $k(X, \cdot)$. Explicitly put, it is the function $\mathrm{med}_P \in \calH$ defined through
\begin{equation*}
    \mathrm{med}_P = \argmin_{f \in \mathcal{H}} \int_\calH \| f(\cdot) - k(x, \cdot) \|_\calH P(\d x).
\end{equation*}
The median exists for any separable Hilbert space~\citep{minsker_geometric_2015}. However, even for an empirical $P_n = \nicefrac{1}{n}\sum_{i=1}^n \delta_{x_i}$, there is no closed-form solution to this $L^1$-problem, and the median is typically approximated using iterative algorithms like Weiszfeld’s algorithm. The estimator proposed in~\citet{nienkotter2022kernel} has a computational complexity of $\bigo(n^2)$. The property of being median-characteristic, as far as the authors are aware, has not been explored, and no theoretical guarantees are available.

The connection to 1D-projected quantiles as done in KQE, even specifically the 1D-projected median, is also unclear. Expanding the understanding of geometric median embeddings is an area for future research.

\subsection{Other Related Work}

Kernel methods have also been studied in the context of quantile estimation and regression~\citep{Sheather1990,Li2007}. These methods, however, focus on using either kernel density estimation or kernel ridge regression to estimate univariate quantiles. In contrast, our focus lies in exploring directional quantiles in the RKHS, and using them to estimate distances between distributions. We introduce this idea in the following section.

\section{Connection between Centered and Uncentered Quantiles}
\label{appendix:centered_quantiles}

\begin{proposition}[Centered $\ekqd_2$]
    The $\ekqd_2$ and $\supkqd_2$ correspondence derived based on centered directional quantiles, now expressed as $\widetilde{\ekqd_2}(P, Q; \mu, \gamma)^2$ and $\widetilde{\supkqd_2}(P, Q; \mu, \gamma)^2$ can be expressed as follows,
\begin{align*}
\widetilde{\ekqd_2}(P, Q; \mu, \gamma)^2
&= \ekqd_2(P, Q; \mu, \gamma) + \operatorname{MMD}^2(P, Q) - \mathbb{E}_{u\sim\gamma}[(\mathbb{E}_{X\sim P}[u(X)] - \mathbb{E}_{Y\sim Q}[u(Y)])^2], \\
&\geq \ekqd_2(P, Q; \mu, \gamma) + \operatorname{MMD}^2(P, Q)\\
\widetilde{\supkqd_2}(P, Q; \mu, \gamma)^2
    &= \sup_{u\in S_\calH} \left(\tau_2^2(P, Q, \mu, u) - (\mathbb{E}_{X\sim P}[u(X)] - \mathbb{E}_{Y\sim Q}[u(Y)])^2\right) + \operatorname{MMD}^2(P, Q) \\
    &\geq \supkqd_2(P, Q; \mu, \gamma) + \operatorname{MMD}^2(P, Q)
\end{align*}
\end{proposition}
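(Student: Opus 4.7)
The plan is to reduce both identities to a single pointwise algebraic computation of the squared RKHS norm of the centered quantile difference, then integrate over $\alpha$ and aggregate over $u$.

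First, I would expand $\tilde\rho_P^{\alpha,u} - \tilde\rho_Q^{\alpha,u}$ directly from the definition given just before the proposition. Writing $a := \rho^\alpha_{u\#P} - \rho^\alpha_{u\#Q}$, $h := \mu_P - \mu_Q \in \calH$, and $b := \langle u, h\rangle_\calH$, the difference collapses in $\calH$ to
\[
\tilde\rho_P^{\alpha,u} - \tilde\rho_Q^{\alpha,u} \;=\; (a - b)\,u \;+\; h .
\]
The central computation is then the identity
\[
\|(a-b)u + h\|_\calH^2 \;=\; (a-b)^2 + 2(a-b)\,b + \|h\|_\calH^2 \;=\; a^2 - b^2 + \|h\|_\calH^2,
\]
which uses only $\|u\|_\calH = 1$ and $\langle u, h\rangle_\calH = b$. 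Identifying $a^2 = \|\rho_P^{\alpha,u} - \rho_Q^{\alpha,u}\|_\calH^2$ (again because $\|u\|_\calH=1$), $\|h\|_\calH^2 = \operatorname{MMD}^2(P,Q)$, and, via the reproducing property, $b = \bE_{X\sim P}[u(X)] - \bE_{Y\sim Q}[u(Y)]$, yields the pointwise identity
\[
\|\tilde\rho_P^{\alpha,u} - \tilde\rho_Q^{\alpha,u}\|_\calH^2 \;=\; \|\rho_P^{\alpha,u} - \rho_Q^{\alpha,u}\|_\calH^2 \;-\; (\bE_P[u] - \bE_Q[u])^2 \;+\; \operatorname{MMD}^2(P,Q).
\]

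Next, I would integrate this identity against Lebesgue measure $\mu$ on $[0,1]$. Since the last two terms do not depend on $\alpha$, they pass through unchanged, while the first term becomes $\tau_2^2(P,Q;\mu,u)$. Averaging over $u \sim \gamma$ gives the first equality for $\widetilde{\ekqd_2}(\cdot)^2$. For the supremum version, the $u$-dependent correction $(\bE_P[u]-\bE_Q[u])^2$ stays inside the supremum, while $\operatorname{MMD}^2(P,Q)$ is $u$-independent and factors out; this produces the second equality for $\widetilde{\supkqd_2}(\cdot)^2$.

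Finally, the inequalities follow from Cauchy-Schwarz: since $\|u\|_\calH = 1$, we have $(\bE_P[u]-\bE_Q[u])^2 = \langle u, h\rangle_\calH^2 \leq \|h\|_\calH^2 = \operatorname{MMD}^2(P,Q)$, so the subtracted correction term is non-negative (and bounded above by $\operatorname{MMD}^2$) pointwise in $u$. Applying this bound inside the expectation (resp.\ inside the supremum) yields the stated lower bounds. The main obstacle is really just bookkeeping: the argument is routine Hilbert-space algebra, and the only care required is in separating the $\alpha$-dependent piece $a^2$ from the $u$-dependent, $\alpha$-independent pieces $b^2$ and $\|h\|_\calH^2$ so as to correctly commute the Lebesgue integration with the aggregation in $u$. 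One small auxiliary check is that $\langle u, \mu_P - \mu_Q\rangle_\calH = \bE_{X\sim P}[u(X)] - \bE_{Y\sim Q}[u(Y)]$, which holds by the reproducing property and Fubini under the standard KME integrability assumption $\bE[\sqrt{k(X,X)}] < \infty$.
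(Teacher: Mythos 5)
Your derivation of the two equalities is correct and follows essentially the same route as the paper's proof: both expand $\|\tilde\rho_P^{\alpha,u}-\tilde\rho_Q^{\alpha,u}\|_\calH^2$ via the decomposition $(a-b)u+h$ with $a=\rho^\alpha_{u\#P}-\rho^\alpha_{u\#Q}$, $h=\mu_P-\mu_Q$, $b=\langle u,h\rangle_\calH$, and arrive at $\tau_2^2(P,Q;\mu,u)-b^2+\MMD^2(P,Q)$ after integrating in $\alpha$. Your version is marginally cleaner: you cancel the cross term pointwise by pure algebra, $(a-b)^2+2(a-b)b=a^2-b^2$, whereas the paper keeps the cross term, integrates, and invokes $\int_0^1\rho^\alpha_{u\#P}\,\d\alpha=\bE_{X\sim P}[u(X)]$ twice to simplify; those two uses cancel each other, so your observation that the quantile--mean identity is not actually needed is correct.

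The final step, however, has the inequality pointing the wrong way. You correctly establish that the correction term $(\bE_{X\sim P}[u(X)]-\bE_{Y\sim Q}[u(Y)])^2=\langle u,\mu_P-\mu_Q\rangle_\calH^2$ is non-negative (and at most $\MMD^2$ by Cauchy--Schwarz), but subtracting a non-negative quantity yields an \emph{upper} bound: your identity gives $\widetilde{\ekqd_2}(P,Q;\mu,\gamma)^2\leq\ekqd_2(P,Q;\mu,\gamma)^2+\MMD^2(P,Q)$ and likewise $\widetilde{\supkqd_2}(P,Q;\mu,\gamma)^2\leq\supkqd_2(P,Q;\mu,\gamma)^2+\MMD^2(P,Q)$, the opposite of the ``$\geq$'' displayed in the proposition. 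The sign in the statement is evidently a typo --- the paper's own proof concludes with ``$\leq$'' --- so your reasoning is sound, but the assertion that it ``yields the stated lower bounds'' is a non sequitur; you should have flagged the discrepancy rather than claimed agreement. Incidentally, for the supremum case your pointwise bound $b_u^2\geq 0$ inside the supremum is the correct mechanism; the paper's intermediate step $\sup_u(f_u-g_u)\leq\sup_u f_u-\sup_u g_u$ is false in general, so your route is the one to keep there.
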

\begin{proof}
    Let $P, Q \in \mathcal{P}_\mathcal{X}$ be measures on some instance space $\mathcal{X}$. Further, define $\psi: x\mapsto k(x,\cdot)$, and write $P_\psi = \psi \# P$ and $Q_\psi = \psi \# Q$. Now $P_\psi$ and $Q_\psi$ are measures on the RKHS $\calH_k$. Recall the definition of centered directional quantiles in \Cref{sec:background_quantiles},
    \begin{align*}
        {\tilde \rho_{P_\psi}}^{\alpha, u} = \left(\rho^\alpha_{\phi_u \# P_\psi} - \phi_u(\mathbb{E}_{Y\sim P_\psi}[Y])\right)u + \mathbb{E}_{Y\sim P_\psi}[Y]
    \end{align*}
    Now since we are working in the RKHS $\calH_k$, the expectation term $\mathbb{E}_{Y\sim P_\psi}[Y]$ corresponds to the kernel mean embedding $\mu_{P} := \mathbb{E}_{P}[k(X, \cdot)]$, thus we can rewrite the above expression as,
    \begin{align*}
        {\tilde \rho_{P_\psi}}^{\alpha, u} = \left(\rho_{\phi_u \# P_\psi} - \langle u, \mu_P\rangle\right)u + \mu_P
    \end{align*}
    $\tilde{\rho}^{\alpha, u}_{Q_\psi}$ can be defined analogously. Now consider integrating the difference between the two centered directional quantiles along all quantile levels, leading to
    \begin{align}
        \tilde{\tau}_2(P, Q, \mu, u) = \left(\int_0^1 \|{\tilde \rho_{P_\psi}}^{\alpha, u} - {\tilde \rho_{Q_\psi}}^{\alpha, u}\|_{\calH_k}^2 \mu(d\alpha)\right)^{\frac{1}{2}} \label{eq: centered-quad-u}
    \end{align}
    We now proceed to show $\tilde{\tau}_2^2(P, Q, \mu, u)$ ,where $\mu$ is the Lebesgue measure, can be expressed as a sum between an uncentered $\ekqd_2$ term with the MMD. Starting with expanding the RKHS norm inside the integrand,
    \begin{align}
        \|{\tilde \rho_{P_\psi}}^{\alpha, u} - {\tilde \rho_{Q_\psi}}^{\alpha, u}\|_{\calH_k}^2 &= \|\underbrace{(\rho^{\alpha}_{\phi_u\# P_\psi} - \rho^{\alpha}_{\phi_u\# Q_\psi} - \langle u, \mu_P - \mu_Q \rangle)}_{=:A \in \mathbb{R}} u + \mu_P - \mu_Q\|_{\calH_k}^2 \nonumber \\
        &= \|Au + (\mu_P-\mu_Q)\|_{\calH_k}^2 \nonumber \\
        &= 2\langle Au, \mu_P-\mu_Q\rangle + \|Au\|_{\calH_k}^2 + \|\mu_P-\mu_Q\|_{\calH_k}^2 \nonumber \\
        &= 2A \langle u, \mu_P-\mu_Q\rangle + A^2 + \operatorname{MMD}^2(P, Q) \label{eq: 2-norm-diff-centered-quantiles}
    \end{align}
    Plugging the expression from \Cref{eq: 2-norm-diff-centered-quantiles} into \Cref{eq: centered-quad-u}, we get the following,
    \begin{align}
        \tilde{\tau}_2^2(P, Q, \mu, u) &= \int_0^1 (2A\langle u, \mu_P - \mu_Q\rangle + A^2)\mu(d\alpha) + \operatorname{MMD}^2(P, Q) \nonumber \\
        &= 2\langle u, \mu_P - \mu_Q\rangle\int_0^1 A \mu(d\alpha) + \int_0^1 A^2 \mu(d\alpha) + \operatorname{MMD}^2(P, Q) \label{eq: centered-tau}
    \end{align}
    For the first term on the right hand side, notice that,
    \begin{align}
        \int_0^1 A \mu(d\alpha) = \int_0^1 (\rho^{\alpha}_{\phi_u\# P_\psi} - \rho^{\alpha}_{\phi_u\# Q_\psi} - \langle u, \mu_P - \mu_Q \rangle) \mu(d\alpha) \label{eq: integrating-quantile}
    \end{align}
    Recall standard results from probability theory that integrating the quantile function between $0$ to $1$ with the Lebesgue measure returns you the expectation, specifically, that is,
    \begin{align*}
        \int_0^1 \rho^\alpha_{\phi_u\# P_\psi} \mu(d\alpha) = \mathbb{E}_{X \sim P}[u(X)] = \langle u, \mu_P\rangle.
    \end{align*}
    Using this fact, the terms in \Cref{eq: integrating-quantile} cancels out, leaving $\int_0^1 A\mu(d\alpha) = 0$. Therefore, continuing from \Cref{eq: centered-tau}, we have,
    \begin{align*}
        \tilde{\tau}_2^2(P, Q, \mu, u) &= \int_0^1 A^2 \mu(d\alpha) + \operatorname{MMD}^2(P, Q) \\
        &= \int_0^1 (\rho^{\alpha}_{\phi_u\# P_\psi} - \rho^{\alpha}_{\phi_u\# Q_\psi} - \langle u, \mu_P - \mu_Q \rangle)^2 \mu(d\alpha) + \operatorname{MMD}^2(P,Q) \\
        &= \int_0^1 \|(\rho_{s_{\mu_P,u}\# (\phi_u \# P_\psi)}^{\alpha} - \rho_{s_{\mu_Q,u}\# (\phi_u \# Q_\psi)}^{\alpha})u\|^2 \mu(d\alpha) + \operatorname{MMD^2}(P,Q)
    \end{align*}
    where $s_{\mu_P,u}: \mathbb{R} \to \mathbb{R}$ is a shifting function defined as $s_{\mu_P,u}(r) = r - \langle u, \mu_p \rangle$ for $r\in\mathbb{R}$. Alternatively, after expanding the terms in $A^2$, we can express $\tilde{\tau}_2^2(P, Q, \mu, u)$ as,
    \begin{align*}
        \tilde{\tau}_2^2(P, Q, \mu, u) &= \int_0^1 (\rho_{\phi_u\# P_\psi} - \rho_{\phi_u\#Q_\psi})^2 \mu(d\alpha) - (\mathbb{E}[u(X) - u(Y)])^2 + \operatorname{MMD}^2(P,Q) \\
        &= \tau_2^2(P, Q, \mu, u) + \operatorname{MMD}^2(P, Q) - (\mathbb{E}[u(X) - u(Y)])^2
    \end{align*}
As a result, for $\gamma$ a measure on the unit sphere of $\calH_k$, the centered version of $\ekqd_2$ and $\supkqd_2$, now expressed as $\widetilde{\ekqd_2}$ and $\widetilde{\supkqd_2}$, are given by,
\begin{align*}
    \widetilde{\ekqd_2}(P, Q; \mu, \gamma)^2 &= \mathbb{E}_{u\sim \gamma}\left[\tilde{\tau}_2^2(P, Q; \mu, u)\right] \\
    &= \ekqd_2(P, Q; \mu, \gamma)^2 + \operatorname{MMD}^2(P, Q) - \mathbb{E}_{u\sim\gamma}[(\mathbb{E}_{X\sim P}[u(X)] - \mathbb{E}_{Y\sim Q}[u(Y)])^2], \\
    &\leq \ekqd_2(P, Q; \mu, \gamma)^2 + \operatorname{MMD}^2(P, Q) \\
    \widetilde{\supkqd_2}(P, Q; \mu, \gamma)^2 &= \sup_{u\in S_\calH} \tilde{\tau}_2^2(P, Q; \mu, u) \\
    &= \sup_{u\in S_\calH} \left(\tau_2^2(P, Q, \mu, u) - (\mathbb{E}[u(X)] - \mathbb{E}[u(Y)])^2\right) + \operatorname{MMD}^2(P, Q) \\
    &\leq \sup_{u\in S_\calH}\tau_2^2(P, Q; \mu, u) - \sup_{u\in S_\calH}(\mathbb{E}[u(X)] - \mathbb{E}[u(Y)])^2 + \operatorname{MMD}^2(P, Q) \\
    &\leq \supkqd_2(P, Q; \mu, \gamma)^2 + \operatorname{MMD}^2(P, Q).
\end{align*}

\end{proof}

When $\nu\equiv \mu$ and the connections to Sliced Wasserstein explored in~\Cref{res:connections_slicedwasserstein} and~\Cref{res:connections_maxslicedwasserstein} emerges, the mean-shifting property of Wasserstein distances allows us to express centered KQD as a sum of uncentered KQD, and MMD---a curious interpretation of centering.

\section{Proof of Theoretical Results}\label{appendix:proofs}

This section now provides the proof of all theoretical results in the main text.
\subsection{Proof of~\Cref{res:cramer-wold}}
\label{sec:proof_projections_determine_distribution}

The main result in this section,~\Cref{res:projections_determine_distribution}, shows that the set of $\Re$ measures $\{u \#P : u \in S_\calH \}$ fully determines the distribution $P$. Since quantiles determine the distribution,~\Cref{res:cramer-wold} follows immediately.

Being concerned with the RKHS case specifically allows us to prove the result under mild conditions by using~\emph{characteristic functionals}, an extension of characteristic functions to measures on spaces beyond $\mathbb{R}^d$. Characteristic functionals describe Borel probability measures as operators acting on some function space $\calF: \calX \to \Re$.
\begin{definition}[\citet{vakhania1987probability}, Section IV.2.1]
    The~\emph{characteristic functional} $\varphi_P: \calF \to \mathbb C$ of a Borel probability measure $P$ on $\calX$ is defined as
    \begin{equation*}
        \varphi_P(f) = \int_\calX e^{if(x)} P (\d x).
    \end{equation*}
\end{definition}

Theorem 2.2(a) in~\citet[Chapter 4]{vakhania1987probability} establishes that a $P$-characteristic functional on $\calF$ uniquely determines the distribution $P$---on the smallest $\sigma$-algebra under which all function $f \in \calF$ are measurable. Therefore, when $\calF$ is such that this $\sigma$-algebra coincides with the Borel $\sigma$-algebra, the distribution is fully determined by $P$-characteristic functional on $\calF$. We show that, indeed, this holds in our setting, for $\calF=\calH$.
\begin{lemma}
    Suppose~\Cref{as:input_space,as:kernel} holds. Then, the Borel $\sigma$-algebra $\calB(\calX)$ is the smallest $\sigma$-algebra on $\calX$ under which all functions $f \in \calH$ are measurable.
\end{lemma}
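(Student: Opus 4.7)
The plan is to establish the two inclusions $\sigma(\mathcal{H}) \subseteq \mathcal{B}(\mathcal{X})$ and $\mathcal{B}(\mathcal{X}) \subseteq \sigma(\mathcal{H})$, where $\sigma(\mathcal{H})$ denotes the smallest $\sigma$-algebra on $\mathcal{X}$ making every $f \in \mathcal{H}$ measurable. The first direction is immediate: by the reproducing property, $f(x) = \langle f, k(x, \cdot)\rangle_{\mathcal{H}}$, and continuity of $k$ combined with Cauchy--Schwarz shows that every $f \in \mathcal{H}$ is continuous on $\mathcal{X}$, hence Borel measurable, so $\sigma(\mathcal{H}) \subseteq \mathcal{B}(\mathcal{X})$.

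For the reverse inclusion I would pass through the canonical feature map $\psi: \mathcal{X} \to \mathcal{H}$, $\psi(x) = k(x, \cdot)$, which is continuous (by continuity of $k$) and injective (since $k$ is separating). Writing $f = L_f \circ \psi$ with $L_f(h) \coloneqq \langle f, h\rangle_{\mathcal{H}}$ gives $\sigma(\mathcal{H}) = \psi^{-1}\bigl(\sigma(\{L_f : f \in \mathcal{H}\})\bigr)$. Next I will use that $\mathcal{H}$ is separable---a standard consequence of separability of $\mathcal{X}$ together with continuity of $k$, since $\{k(x_n, \cdot) : n \in \mathbb{N}\}$ for a dense sequence $\{x_n\} \subset \mathcal{X}$ spans a dense subspace of $\mathcal{H}$---to invoke the fact that on a separable Hilbert space the $\sigma$-algebra generated by continuous linear functionals coincides with the norm Borel $\sigma$-algebra. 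This yields $\sigma(\mathcal{H}) = \psi^{-1}(\mathcal{B}(\mathcal{H}))$, reducing the problem to showing $\psi(B) \in \mathcal{B}(\mathcal{H})$ for every $B \in \mathcal{B}(\mathcal{X})$, since injectivity of $\psi$ then gives $B = \psi^{-1}(\psi(B)) \in \sigma(\mathcal{H})$.

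This image step is where $\sigma$-compactness becomes essential, and where I expect the main obstacle to sit: a continuous injection between general topological spaces need not send Borel sets to Borel sets. To overcome this I write $\mathcal{X} = \bigcup_n K_n$ with each $K_n$ compact, and observe that $\psi|_{K_n}: K_n \to \psi(K_n)$ is a continuous bijection from a compact space onto a Hausdorff subspace of $\mathcal{H}$, hence a homeomorphism. Consequently $\psi(B \cap K_n)$ is Borel in $\psi(K_n)$, which is itself compact (and therefore Borel) in $\mathcal{H}$, so $\psi(B) = \bigcup_n \psi(B \cap K_n) \in \mathcal{B}(\mathcal{H})$. The paper's remark about dropping $\sigma$-compactness at the cost of restricting to Radon measures suggests an alternative route via inner regularity by compact sets, which would push the same compact-by-compact argument through one measure at a time rather than at the level of $\sigma$-algebras.
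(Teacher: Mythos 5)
Your proof is correct, but it follows a genuinely different route from the paper's. The paper extracts a countable dense (hence still separating) subspace $\calH_0 \subseteq \calH$ and invokes a cited exercise of Vakhania et al.\ to conclude that every compact subset of $\calX$ lies in the $\sigma$-algebra generated by a countable, continuous, separating family of real functions, then uses $\sigma$-compactness to pass from compact sets to closed sets exactly as you do. You instead work through the feature map $\psi$: you identify $\sigma(\calH)$ with $\psi^{-1}$ of the cylindrical $\sigma$-algebra on $\calH$, use separability of $\calH$ (correctly derived from separability of $\calX$ and continuity of $k$) to equate the cylindrical and norm-Borel $\sigma$-algebras, and then show $\psi$ is a Borel isomorphism onto its image by the compact-to-Hausdorff homeomorphism argument on each piece of the $\sigma$-compact exhaustion. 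Both arguments deploy the hypotheses in the same places---separating $k$ for injectivity, continuity for one inclusion and for the homeomorphism/separability steps, $\sigma$-compactness to globalise from compacta---but yours is self-contained where the paper leans on an external exercise, at the cost of needing the (standard but nontrivial) fact that the $\sigma$-algebra generated by continuous linear functionals on a separable Hilbert space is the full Borel $\sigma$-algebra. Your closing remark about replacing $\sigma$-compactness by inner regularity of Radon measures matches the relaxation the paper discusses after the lemma. All steps check out: $B = \psi^{-1}(\psi(B))$ by injectivity, $\psi(B \cap K_n)$ is Borel in the closed set $\psi(K_n)$ and hence in $\calH$, and a countable union finishes the argument.
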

\begin{proof}
    Denote by $\hat C(\calX, \calH)$ the smallest $\sigma$-algebra on $\calX$ under which all functions $f \in \calH$ are measurable, and recall that the Borel $\sigma$-algebra is the $\sigma$-algebra that contains all closed sets. Therefore, we need to show that $\hat C(\calX, \calH)$ contains every closed set in $\calX$. We split the proof into two parts: (1) show that $\calH$ contains a countable separating subspace, and (2) show that this implies that every closed set lies in $\hat C (\calX, \calH)$.
    \paragraph{$\calH$ contains a countable separating subspace.} Recall that a function space $\calF$ on $\calX$ is said to be separating when for any $x_1 \neq x_2 \in \calX$, there is a function $f \in \calF$ such that $f(x_1) \neq f(x_2)$. Since $k$ is separating, $\calH$ is separating. Since $\calH$ is separable, it contains a countable dense subspace $\calH_0 \subseteq \calH$. By $\calH_0$ being dense in $\calH$, it must also be separating.
    \paragraph{Every closed set lies in $\hat C (\calX, \calH)$.} By~\citet[Section I.1, Exercise 9]{vakhania1987probability}, all compact sets in $\calX$ lie in $\hat C(\calX, \calH_0)$, by $\calH_0$ being countable, continuous, separating space of real-valued functions. By definition, $\hat C(\calX, \calH_0) \subseteq \hat C(\calX, \calH)$, and so $\hat C(\calX, \calH)$ contains all compact sets. We now show this means every closed set must also lie in $\hat C(\calX, \calH)$.
    
    By $\calX$ being $\sigma$-compact, there is a family of compact sets $\{\calX_i\}_{i=1}^\infty$ such that $\calX = \cup_{i=1}^\infty \calX_i$. Take any closed $K \subseteq X$; then, $K = \cup_{i=1}^\infty (\calX_i \cap K)$. Since $\calX_i \cap K$ is compact as the intersection of a compact set and a closed set, and $\sigma$-algebras are closed under countable unions, $K$ must lie in $\hat C(\calX, \calH)$. As this holds for every closed $K$, we conclude $\calB(\calX)=\hat C(\calX, \calH)$.
\end{proof}

We now restate the RKHS-specific version of the Vakhania result here for completeness.

\begin{theorem}[Theorem 2.2(a) in \citet{vakhania1987probability} for RKHS]
\label{res:char_fnal_is_char}
    Suppose~\Cref{as:input_space,as:kernel} holds, and for Borel probability measures $P, Q$ on $\calX$, it holds that $\varphi_P(f) = \varphi_Q(f)$ for every $f \in \calH$. Then, $P=Q$.
\end{theorem}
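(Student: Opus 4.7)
The plan is to derive the theorem as an immediate corollary of the general Vakhania result on characteristic functionals, together with the $\sigma$-algebra identification established in the lemma stated just above. Vakhania's Theorem 2.2(a) says that for any function space $\calF \subseteq \Re^\calX$, if two Borel probability measures $P, Q$ satisfy $\varphi_P(f) = \varphi_Q(f)$ for every $f \in \calF$, then $P$ and $Q$ must agree on $\hat{C}(\calX, \calF)$, the smallest $\sigma$-algebra on $\calX$ under which every $f \in \calF$ is measurable. My strategy is simply to specialise this to $\calF = \calH$ and then transport the conclusion from $\hat{C}(\calX, \calH)$ to the full Borel $\sigma$-algebra.

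First, I would invoke Vakhania's theorem with $\calF = \calH$: by hypothesis $\varphi_P(f) = \varphi_Q(f)$ for all $f \in \calH$, so $P$ and $Q$ coincide on $\hat{C}(\calX, \calH)$. Second, I would cite the preceding lemma, which under~\Cref{as:input_space,as:kernel} identifies $\hat{C}(\calX, \calH)$ with the Borel $\sigma$-algebra $\calB(\calX)$. Chaining these two facts gives $P = Q$ on $\calB(\calX)$, which is the desired conclusion. Two Borel probability measures agreeing on all Borel sets are, by definition, equal.

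The main obstacle has already been handled upstream inside the proof of the lemma: extracting a countable separating subspace $\calH_0 \subseteq \calH$ of continuous functions (using separability of $\calH$ together with the separating property of $k$ in~\Cref{as:kernel}), leveraging countability and separation to show $\hat{C}(\calX, \calH_0)$ contains every compact subset of $\calX$ (via \citet[Section I.1, Exercise 9]{vakhania1987probability}), and finally upgrading from compact to closed sets by writing any closed $K$ as $\cup_i (\calX_i \cap K)$ using $\sigma$-compactness from~\Cref{as:input_space}. With that groundwork in place, the theorem itself is essentially a one-line deduction; I would not expect a further technical difficulty at this final stage. I would also note in passing that the argument extends beyond $\sigma$-compact $\calX$: for Radon probability measures on a Hausdorff completely regular space the same chain of reasoning goes through, which matches the generalisation flagged in the surrounding text.
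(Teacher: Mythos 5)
Your proposal is correct and matches the paper's own treatment exactly: the paper likewise obtains the theorem by specialising Vakhania's Theorem 2.2(a) to $\calF = \calH$ and then invoking the preceding lemma identifying $\hat C(\calX,\calH)$ with $\calB(\calX)$ under \Cref{as:input_space,as:kernel}. Your closing remark about the Radon/Tychonoff generalisation also mirrors the paper's discussion of Theorem 2.2(b).
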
 

We are now ready to prove the distribution of projections uniquely determines the distribution.

\begin{proposition}
\label{res:projections_determine_distribution}
Under~\Cref{as:input_space,as:kernel}, it holds that
\begin{equation*}
    u\#P = u\#Q \ \text{ for all } u \in S_\calH \iff P = Q.
\end{equation*}
\end{proposition}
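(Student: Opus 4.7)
The plan is to leverage the characteristic functional machinery already assembled in the excerpt, namely \Cref{res:char_fnal_is_char}, which states that under \Cref{as:input_space,as:kernel}, if $\varphi_P(f)=\varphi_Q(f)$ for every $f\in\calH$, then $P=Q$. Given this, the proposition reduces to showing that the hypothesis ``$u\#P=u\#Q$ for every $u\in S_\calH$'' forces agreement of the two characteristic functionals on all of $\calH$, not just on the unit sphere.

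The reverse implication $P=Q\Rightarrow u\#P=u\#Q$ is immediate from the definition of pushforward, so I would focus entirely on the forward direction. First I would upgrade the hypothesis from the unit sphere $S_\calH$ to the whole of $\calH$: for any nonzero $f\in\calH$, write $f=\|f\|_\calH\,u$ with $u=f/\|f\|_\calH\in S_\calH$; then for any Borel $B\subseteq\Re$,
\begin{equation*}
(f\#P)(B)=P(\{x:\|f\|_\calH u(x)\in B\})=(u\#P)(B/\|f\|_\calH),
\end{equation*}
and the same identity holds with $Q$ in place of $P$. Since $u\#P=u\#Q$ by hypothesis, this gives $f\#P=f\#Q$. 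The case $f\equiv 0$ is trivial because both pushforwards equal $\delta_0$.

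Next, for any $f\in\calH$, observe
\begin{equation*}
\varphi_P(f)=\int_\calX e^{if(x)}P(\d x)=\int_\Re e^{it}\,(f\#P)(\d t),
\end{equation*}
so $\varphi_P(f)$ depends on $P$ only through $f\#P$. From the previous step, $f\#P=f\#Q$, and hence $\varphi_P(f)=\varphi_Q(f)$ for every $f\in\calH$. Applying \Cref{res:char_fnal_is_char} yields $P=Q$, which closes the forward direction.

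There is no genuine obstacle once \Cref{res:char_fnal_is_char} is in hand; the only point that needs a line of care is the scaling argument that lifts pushforward equality from $S_\calH$ to all of $\calH$, and noting that the assumptions of \Cref{res:char_fnal_is_char} (continuity and separation of $k$, plus $\sigma$-compactness, separability and Hausdorffness of $\calX$) are exactly those carried forward as \Cref{as:input_space,as:kernel}. In particular, no additional moment or regularity hypothesis on $P,Q$ is required, which is what allows the RKHS Cram\'er--Wold conclusion to be reached under such mild assumptions.
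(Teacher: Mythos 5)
Your proposal is correct and follows essentially the same route as the paper: both reduce the statement to equality of the characteristic functionals $\varphi_P=\varphi_Q$ on all of $\calH$ via the rescaling $f=\|f\|_\calH\,u$, and then invoke \Cref{res:char_fnal_is_char}. The only cosmetic difference is that you establish $f\#P=f\#Q$ directly by a set-level scaling argument before evaluating the functional, whereas the paper passes through the one-dimensional characteristic functions $t\mapsto\int_\Re e^{itz}\,u\#P(\d z)$; these are equivalent formulations of the same step.
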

\begin{proof}
    The main idea of the proof is to show that equality of $u\#P$ and $u\#Q$ implies equality of characteristic functionals, $\varphi_P(f)=\varphi_Q(f)$ for all $f \in \calH$ such that $f(x)=tu(x)$ for some $t \in \Re$ and $u$ in the unit sphere. Since such $f$ form the entire $\calH$, the result immediately follows.

    First, recall that $u\#P = u\#Q$ for all $u$ if and only if their characteristic functions coincide, meaning
    \begin{equation}
    \label{eq:equality_of_char_functions}
        \int_\Re e^{itz} u\#P (\d z) = \int_\Re e^{itz} u\#Q (\d z) \quad \forall u \in S_\calH,\forall t \in \Re.
    \end{equation}
    Notice that the measure $u\#P$ is a pushforward of $P$ under the map $x \to u(x)$. Then, for any measurable $g$ it holds that
    \begin{equation}
    \label{eq:pushforward_integration}
        \int_\calX g(u(x)) P(\d x) = \int_\Re g(z) u\#P(\d z) \quad \forall u \in S_\calH.
    \end{equation}
    Take $g(z) = e^{itz}$, for some $t \in \Re$. Then, for all $u$ it holds that $\int_\Re e^{itz} u\#P(\d z)= \int_\Re e^{itz} u\#Q(\d z)$, and consequently by~\eqref{eq:equality_of_char_functions} we have that
    \begin{equation}
    \label{eq:equality_of_functional_over_sphere}
        \int_\calX e^{itu(x)} P(\d x) = \int_\calX e^{itu(x)} Q(\d x)\quad \forall u \in S_\calH,\forall t \in \Re.
    \end{equation}
    Finally, let us pick an $f \in \calH$ and show that $\varphi_P(f) = \varphi_Q(f)$. Define $u = f/\|f\|$, and $t=\|f\|$; then,
    \begin{equation*}
        \varphi_P(f) = \int_\calX e^{if(x)} P(\d x) = \int_\calX e^{itu(x)} P(\d x),
    \end{equation*}
    and by~\eqref{eq:equality_of_functional_over_sphere}, we arrive at the equality of characteristic functionals, $\varphi_P(f) = \varphi_Q(f)$. By~\Cref{res:char_fnal_is_char} characteristic functionals uniquely determine the underlying distribution, meaning $P = Q$.
\end{proof}

For the sake of clarity, we give the proof of the original result.

\begin{proof}[Proof of~\Cref{res:cramer-wold}]
Suppose $\{\rho_P^{\alpha,u} : \alpha \in [0, 1], u \in S_\calH\} = \{\rho_Q^{\alpha,u} : \alpha \in [0, 1], u \in S_\calH\}$ for some Borel probability measures $P, Q$. For any fixed $u$, since every quantile of of $u\#P$ and $u\#Q$ coincide, the measures coincide as well, $u\#P=u\#Q$. As that holds for every $u$, by~\Cref{res:projections_determine_distribution}, $P=Q$.
\end{proof}

Lastly, we point out~\Cref{as:input_space} may be relaxed. Provided $\calX$ is a Tychonoff space---meaning, a completely regular Hausdorff space---part (b) of Theorem 2.2 in~\citet{vakhania1987probability} says the following.
\begin{theorem}[Theorem 2.2(b) in \citet{vakhania1987probability} for RKHS]
\label{res:char_fnal_is_char_b}
    Suppose $\calX$ is Tychonoff,~\Cref{as:kernel} holds, and for Radon probability measures $P, Q$ on $\calX$, it holds that $\varphi_P(f) = \varphi_Q(f)$ for every $f \in \calH$. Then, $P=Q$.
\end{theorem}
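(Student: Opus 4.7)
The plan is to treat this as a direct corollary of Vakhania's Theorem 2.2(b), which asserts that a Radon probability measure on a Tychonoff space is uniquely determined by its characteristic functional on $\calF$ whenever $\calF$ is a separating family of continuous real-valued functions. Under our hypotheses, the real work reduces to verifying that $\calH$ is such a family on $\calX$; Vakhania's theorem then delivers $P = Q$ directly, exactly as in the quotation of part (a) used in the proof of~\Cref{res:cramer-wold}.

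For continuity, continuity of $k$ implies that the canonical feature map $\psi\colon \calX \to \calH$, $\psi(x) = k(x, \cdot)$, is continuous, since $\|\psi(x) - \psi(y)\|_{\calH}^2 = k(x,x) - 2k(x,y) + k(y,y) \to 0$ as $y \to x$. Any $f \in \calH$ equals $\langle f, \psi(\cdot)\rangle_\calH$ by the reproducing property, and the inner product is continuous in its second argument, so $f$ is continuous on $\calX$. For separation, the hypothesis $k(x, \cdot) \neq k(y, \cdot)$ for $x \neq y$ yields some $z \in \calX$ with $k(x, z) \neq k(y, z)$; then $g := k(z, \cdot) \in \calH$ satisfies $g(x) \neq g(y)$, so $\calH$ separates points.

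The main obstacle, relative to the $\sigma$-compact separable setting treated in the preceding lemma, is that we can no longer promote the $\sigma$-algebra generated by $\calH$ to the full Borel $\sigma$-algebra via a countability argument of the type used for~\Cref{res:cramer-wold} (Exercise 9 in Vakhania's Section I.1 explicitly requires countability of the separating family). Vakhania's proof handles this via the Radon hypothesis instead: a Radon probability measure on a Tychonoff space is uniquely determined by its values on the Baire $\sigma$-algebra, and a continuous separating family, once combined with the exponential-of-linear operations built into the characteristic functional, generates a $\sigma$-algebra rich enough to pin the Radon measure down. Since we invoke Vakhania's theorem as a black box, this delicate step is absorbed into the reference, and our responsibility is limited to the RKHS-specific verification above together with observing that Tychonoff plus \Cref{as:kernel} puts us squarely in the scope of that theorem.
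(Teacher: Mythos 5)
Your proposal is correct and takes essentially the same route as the paper, which states this result as a direct instantiation of Vakhania's Theorem 2.2(b) without further argument; your verification that \Cref{as:kernel} makes $\calH$ a continuous, point-separating family (and your observation that the Radon hypothesis replaces the countability/$\sigma$-compactness machinery used for part (a)) is exactly the RKHS-specific content implicit in the paper's citation.
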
 
Therefore, when~\Cref{as:input_space} is replaced with $\calX$ being Tychonoff,~\Cref{res:cramer-wold} continues to hold---but only for Radon $P,Q$, not any Borel $P, Q$. Radon probability measures can be intuitively seen as the "non-pathological" Borel measures---a restriction employed in order to drop the regularity assumptions of $\calX$ being separable and $\sigma$-compact.

\subsection{Proof of \Cref{res:if_meanchar_then_quantchar}}
\label{appendix:proof_if_meanchar_then_quantchar}
We prove that every mean-characteristic kernel is quantile-characteristic, and give an example quantile-characteristic kernel that is not mean-characteristic.

\paragraph{mean-characteristic $\Rightarrow$ quantile-characteristic.} Suppose $k$ on $\calX$ is mean-characteristic, and $P \neq Q$ are any probability measures on $\calX$. We will identify a unit-norm $u$ for which the sets of quantiles of $u \# P$ and $u \# Q$ differ.

Since $k$ is mean characteristic, $\mu_P \neq \mu_Q$, and $\MMD^2(P, Q) = \|\mu_P - \mu_Q \|^2_\calH > 0$. Recall that MMD can be expressed as
\begin{equation*}
        \MMD^2(P, Q) = \sup_{u \in \calH, \|u\|_\calH \leq 1}\left| \bE_{X \sim P} u(X) - \bE_{Y \sim Q} u(Y)\right|,
\end{equation*}
and the supremum is attained at $u^* = (\mu_P - \mu_Q) / \|\mu_P - \mu_Q \|_\calH$~\citep{gretton2012kernel}. In other words, $\bE_{X \sim P} u^*(X) \neq \bE_{Y \sim Q} u^*(Y)$---the means of $u^*\#P$ and $u^*\#Q$ don't coincide. Therefore, the measures $u^*\#P$ and $u^*\#Q$ don't coincide, or equivalently $\{\rho_{u^*\#P}^\alpha : \alpha \in [0,1]\} \neq \{\rho_{u^*\#Q}^\alpha : \alpha \in [0,1]\}$. Then, $\{\rho_P^{u, \alpha} : \alpha \in [0,1], u \in S_\calH\} \neq \{\rho_Q^\alpha : \alpha \in [0,1], u \in S_\calH\}$. And since this holds for any arbitrary $P \neq Q$, the kernel $k$ is quantile-characteristic.

\paragraph{quantile-characteristic $\not\Rightarrow$ mean-characteristic.}

To show the converse implication does not hold, we provide an example when $k$ is quantile-characteristic but not mean-characteristic. Take $\calX = \Re^d$, and let $k$ be a degree $T$ polynomial kernel, $k(x, x') = (x^\top x' + 1)^T$. Since~\Cref{as:input_space,as:kernel} hold---$\Re^d$ is Polish, and $k$ is trivially continuous and separating---by~\Cref{res:cramer-wold} the kernel $k$ is quantile-characteristic.

Now, we show $k$ is not mean-characteristic. Suppose $P$ and $Q$ are such that $\bE_{X \sim P} X^i = \bE_{Y \sim P} Y^i$ for $i \in \{1, \dots, T\}$---for example, the Gaussian and Laplace distribution with matching expectation and variance and $T=2$, as is done in~\Cref{sec:experimental_results}. Then, $\bE_{X \sim P} (X^\top x')^i = \bE_{Y \sim P} (Y^\top x')^i$ for any $x' \in \Re^d$, and since
\begin{align*}
    \mu_P(x') &\coloneqq \bE_{X\sim P} k(X, x') = \bE_{X\sim P} [(X^\top x' + 1)^T] = \bE_{X\sim P} \left[\sum_{i=0}^T \begin{pmatrix} T \\ i \end{pmatrix} (X^\top x')^i \right] = \sum_{i=0}^T \begin{pmatrix} T \\ i \end{pmatrix} \bE_{X\sim P} \left[(X^\top x')^i \right],
\end{align*}
it holds that $\mu_P=\mu_Q$. The kernel is not mean-characteristic.

\subsection{Proof of \Cref{res:consistency_KQE}}\label{appendix:proof_consistency_KQE}
By the Theorem in~\citet[Section 2.3.2]{serfling2009approximation}, for any $\varepsilon>0$ it holds that
\begin{equation*}
    P(| \rho^\alpha_{u \# P_n} - \rho^\alpha_{u \# P}| > \varepsilon) \leq 2 e^{-2n \delta_\varepsilon^2}, \qquad \text{for} \qquad
    \delta_\varepsilon \coloneqq \min\left\{\int_{\rho^\alpha_{u \# P}}^{\rho^\alpha_{u \# P} + \varepsilon} f_{u \# P}(t) \d t, \int_{\rho^\alpha_{u \# P}-\varepsilon}^{\rho^\alpha_{u \# P}} f_{u \# P}(t) \d t\right\}.
\end{equation*}
Since it was assumed $f_{u\#P}(x) \geq c_u > 0$, it holds that $\delta_\varepsilon \geq c_u \varepsilon$, and $P(| \rho^\alpha_{u \# P_n} - \rho^\alpha_{u \# P}| > \varepsilon) \leq 2 e^{-2 n c_u^2 \varepsilon^2}$, or equivalently,
\begin{equation*}
    P(| \rho^\alpha_{u \# P_n} - \rho^\alpha_{u \# P}| \leq \varepsilon)
    \geq 1 - 2 e^{-2 n c_u^2 \varepsilon^2}.
\end{equation*}
Take $\delta \coloneqq 2 e^{-2 n c_u^2 \varepsilon^2}$. Then,
\begin{equation*}
    P(| \rho^\alpha_{u \# P_n} - \rho^\alpha_{u \# P}| \leq C(\delta, u) n^{-1/2})
    \geq 1 - \delta, \qquad \text{for} \qquad C(\delta, u)=\sqrt{\frac{\log(2/\delta)}{2 c_u^2}}.
\end{equation*}
Since $\| \rho^{\alpha,u}_{P_n} - \rho^{\alpha,u}_{P}\|_\calH=| \rho^\alpha_{u \# P_n} - \rho^\alpha_{u \# P}|$, the proof is complete.

\subsection{Proof of \Cref{res:KQE_characterise_dists}}\label{appendix:proof_quantile_characteristic}
We prove $\ekqd$ and $\supkqd$, defined in~\Cref{eq:general_distances} as
\begin{equation*}
\begin{split}
    \ekqd_p(P, Q; \nu, \gamma) &= \left(\bE_{u \sim \gamma} \tau_p^p\left(P, Q; \nu, u \right) \right)^{\nicefrac{1}{p}},\\
    \supkqd_p(P, Q; \nu) &= \big(\sup_{u \in S_\calH} \tau_p^p\left(P, Q; \nu, u \right)\big)^{\nicefrac{1}{p}},
\end{split}
\end{equation*}
are probability metrics on the set of Borel probability measures on $\calX$. Symmetry and non-negativity hold trivially.
\paragraph{Triangle inequality.} By Minkowski inequality, for any $P, P', Q$,
\begin{align*}
    \int_0^1 \big|\rho_P^\alpha - \rho_{P'}^\alpha \big| ^p \nu(\d \alpha) &\leq \Bigg(\left(\int_0^1 \big|\rho_P^\alpha - \rho_Q^\alpha \big| ^p \nu(\d \alpha) \right)^{1/p} \\
    &\hspace{2cm}+ \left(\int_0^1 \big|\rho_Q^\alpha - \rho_{P'}^\alpha \big| ^p \nu(\d \alpha) \right)^{1/p} \Bigg)^p.
\end{align*}
Plugging this in and using Minkowski inequality again on the outermost integral, we get
\begin{align*}
    \ekqd_p(P, P'; \nu, \gamma)
    &= \left(\bE_{u \sim \gamma} \int_0^1 \big|\rho_P^\alpha - \rho_{P'}^\alpha \big| ^p \nu(\d \alpha) \right)^{1/p} \\
    &\leq \Bigg(\bE_{u \sim \gamma} \Bigg(\left(\int_0^1 \big|\rho_P^\alpha - \rho_Q^\alpha \big| ^p \nu(\d \alpha) \right)^{1/p} \\
    &\hspace{2cm}+ \left(\int_0^1 \big|\rho_Q^\alpha - \rho_{P'}^\alpha \big| ^p \nu(\d \alpha) \right)^{1/p} \Bigg)^p \Bigg)^{1/p} \\
    &\leq \Bigg(\bE_{u \sim \gamma} \int_0^1 \big|\rho_P^\alpha - \rho_Q^\alpha \big| ^p \nu(\d \alpha) \Bigg)^{1/p} \\
    &\hspace{2cm}+ \Bigg( \bE_{u \sim \gamma} \int_0^1 \big|\rho_Q^\alpha - \rho_{P'}^\alpha \big| ^p \nu(\d \alpha) \Bigg)^{1/p} \\
    &= \ekqd_p(P, Q; \nu, \gamma) + \ekqd_p(Q, P'; \nu, \gamma).
\end{align*}
Similarly, since $\sup_x f^p(x) = (\sup_x |f(x)|)^p$ for any $f$,
\begin{align*}
    \supkqd_p(P, P'; \nu, \gamma)
    &= \left(\sup_{u \in S_\calH} \int_0^1 \big|\rho_P^\alpha - \rho_{P'}^\alpha \big| ^p \nu(\d \alpha) \right)^{1/p} \\
    &\leq \Bigg(\sup_{u \in S_\calH} \Bigg(\left(\int_0^1 \big|\rho_P^\alpha - \rho_Q^\alpha \big| ^p \nu(\d \alpha) \right)^{1/p} \\
    &\hspace{2cm}+ \left(\int_0^1 \big|\rho_Q^\alpha - \rho_{P'}^\alpha \big| ^p \nu(\d \alpha) \right)^{1/p} \Bigg)^p \Bigg)^{1/p} \\
    &= \left( \sup_{u \in S_\calH} \int_0^1 \big|\rho_P^\alpha - \rho_Q^\alpha \big| ^p \nu(\d \alpha) \right)^{1/p} \\
    &\hspace{2cm}+ \left( \sup_{u \in S_\calH} \int_0^1 \big|\rho_Q^\alpha - \rho_{P'}^\alpha \big| ^p \nu(\d \alpha) \right)^{1/p} \\
    &= \supkqd_p(P, Q; \nu, \gamma) + \supkqd_p(Q, P'; \nu, \gamma).
\end{align*}

\paragraph{Identity of indiscernibles.} In the rest of this section, we show that
\begin{equation*}
    \ekqd_p(P, Q; \nu, \gamma) = 0 \iff P = Q; \qquad \text{and} \qquad
    \supkqd_p(P, Q; \nu, \gamma) = 0 \iff P = Q.
\end{equation*}
Necessity (meaning the $\Leftarrow$ direction) holds trivially---quantiles of identical measure are identical. To prove sufficiency, we only need to show that both discrepancies aggregate the directions in a way that preserves injectivity, meaning
\begin{equation*}
    \ekqd_p(P, Q)=0 \Rightarrow \rho_P^{\alpha,u}=\rho_Q^{\alpha,u} \text{ for all } \alpha, u; \qquad\text{and}\qquad \supkqd_p(P, Q)=0 \Rightarrow \rho_P^{\alpha,u}=\rho_Q^{\alpha,u} \text{ for all } \alpha, u.
\end{equation*}
Together with~\Cref{res:cramer-wold}, this will complete the proof of sufficiency.

First, we show that for any pair of probability measures, a $\nu$-aggregation over the quantiles is injective.

\begin{lemma}
\label{res:nu_aggregation_is_injective}
Let $\nu$ have full support, meaning $\nu(A)>0$ for any open $A \subset [0, 1]$. For any Borel probability measures $P', Q'$,
\begin{equation*}
    \int_0^1 | \rho_{P'}^\alpha - \rho_{Q'}^\alpha |^2 \nu(\d \alpha) = 0 \qquad \Rightarrow \qquad \rho_{P'}^\alpha = \rho_{Q'}^\alpha \quad \text{for all} \quad \alpha \in [0, 1].
\end{equation*}
\end{lemma}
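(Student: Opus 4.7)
The plan is a two-step argument: first use the vanishing integral together with the full-support hypothesis on $\nu$ to show $\rho_{P'}^\alpha = \rho_{Q'}^\alpha$ on a dense subset of $[0,1]$, then upgrade this dense equality to equality on all of $[0,1]$ by exploiting monotonicity and left-continuity of the one-dimensional quantile function.

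For the first step, since the integrand $|\rho_{P'}^\alpha - \rho_{Q'}^\alpha|^2$ is non-negative and $\nu$-integrates to zero, there is a $\nu$-null set $N \subseteq [0,1]$ with $\rho_{P'}^\alpha = \rho_{Q'}^\alpha$ for every $\alpha \in [0,1] \setminus N$. If $[0,1] \setminus N$ were not dense in $[0,1]$, then $N$ would contain a non-empty open interval $I$, which would force $\nu(I) > 0$ by full support, contradicting $\nu(N) = 0$. Hence the equality set is dense in $[0,1]$.

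For the second step, I will use that $\alpha \mapsto \rho_P^\alpha = \inf\{y : \mathrm{Pr}_{Y \sim P}[Y \leq y] \geq \alpha\}$ is non-decreasing and left-continuous in $\alpha$; both properties are classical and follow directly from the infimum definition. Fix any $\alpha \in (0,1]$ and pick a sequence $\alpha_n \in [0,1] \setminus N$ with $\alpha_n \uparrow \alpha$, which is possible by density. Left-continuity then yields $\rho_{P'}^{\alpha_n} \to \rho_{P'}^\alpha$ and $\rho_{Q'}^{\alpha_n} \to \rho_{Q'}^\alpha$, and since the two sequences coincide termwise, so do their limits.

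The only subtlety I expect to face is the boundary $\alpha = 0$, where the left-approach argument has nothing to work with. Under the paper's infimum convention, however, $\rho_{P'}^0 = \inf\{y : F_{P'}(y) \geq 0\}$ takes the same value for every probability measure (namely $-\infty$, since $F_{P'}(y) \geq 0$ is vacuously true for all $y$), so $\rho_{P'}^0 = \rho_{Q'}^0$ holds automatically; alternatively, one can simply note that for the downstream use in \Cref{res:KQE_characterise_dists} a measure on $\mathbb{R}$ is already determined by its quantile function on $(0,1]$, making the endpoint irrelevant. Everything else is a routine combination of density with one-sided continuity of monotone functions; no novel ideas are needed beyond linking the full-support property of $\nu$ to topological density.
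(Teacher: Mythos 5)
Your proof is correct and rests on the same two ingredients as the paper's: full support of $\nu$ (so that open intervals have positive measure) and left-continuity of the quantile function $\alpha \mapsto \rho_P^\alpha$. The paper merely packages the argument as a contradiction---a single point of inequality propagates by left-continuity to an open interval of inequality, which must have positive $\nu$-measure---whereas you run the equivalent direct argument via $\nu$-a.e.\ equality, density, and left limits; this is an organizational difference, not a different route.
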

\begin{proof}
    Suppose $\int_0^1 | \rho_{P'}^\alpha - \rho_{Q'}^\alpha |^2 \nu(\d \alpha) = 0$, but there is an $\alpha_0$ such that $\rho_{P'}^{\alpha_0} = \rho_{Q'}^{\alpha_0}$. We will show that this implies the existence of an open set (containing $\alpha_0$) over which $|\rho_{P'}^{\alpha_0} - \rho_{Q'}^{\alpha_0}|^2 > 0$---which will contradict $\nu$ having full support.
    
    Since $|\rho_{P'}^{\alpha_0} - \rho_{Q'}^{\alpha_0}|^2 > 0$ and the quantile function $\alpha \mapsto q_P^\alpha$ is left-continuous (by definition) for any probability measure $P$, there is a $\alpha_1<\alpha_0$ such that $|\rho_{P'}^{\alpha} - \rho_{Q'}^{\alpha}|^2 > 0$ for all $\alpha \in (\alpha_1, \alpha_0]$. Take some $\alpha_2 \in (\alpha_1, \alpha_0)$. Then, for all $\alpha \in (\alpha_1, \alpha_2)$, we have $|\rho_{P'}^{\alpha} - \rho_{Q'}^{\alpha}|^2 > 0$. We arrive at a contradiction. Such $\alpha_0$ cannot exist, and therefore $\rho_{P'}^\alpha = \rho_{Q'}^\alpha$ for all $\alpha \in [0, 1]$.
\end{proof}
This result applies directly to the directional differences $\tau_p$. Provided $\nu$ has full support,
\begin{align*}
    \tau_p(P, Q;\nu, u) = 0 \ \qquad \Rightarrow \qquad \rho_{P'}^\alpha = \rho_{Q'}^\alpha \quad \text{for all} \quad \alpha \in [0, 1].
\end{align*}
Since supremum aggregation simply considers $u$ that corresponds to the largest $\tau^p_p(P, Q;\nu, u)$, this concludes the proof for $\supkqd$.
Expectation aggregation over the directions $u$ needs an extra result, given below.
\begin{lemma}
Let $\gamma$ have full support on $S_\calH$, and $\nu$ have full support on $[0, 1]$. For any Borel probability measures $P, Q$ on $\calX$,
\begin{equation*}
    \bE_{u \sim \gamma} \tau^p_p(P, Q;\nu, u) = 0 \qquad \Rightarrow \qquad P=Q.
\end{equation*}
\end{lemma}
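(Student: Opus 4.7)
The plan is to proceed in three stages: reduce the vanishing expectation to a pointwise statement on a $\gamma$-large set of directions $u$, upgrade that set to all of $S_\calH$ using the full support of $\gamma$ together with a closedness argument, and then invoke the Cram\'er--Wold-type result (Proposition~\ref{res:projections_determine_distribution}) to conclude $P=Q$.

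Concretely, since $\tau_p^p(P,Q;\nu,u)\geq 0$, the hypothesis $\bE_{u\sim\gamma}\tau_p^p(P,Q;\nu,u)=0$ forces $\tau_p^p(P,Q;\nu,u)=0$ for $\gamma$-a.e.\ $u\in S_\calH$. By the preceding lemma (\Cref{res:nu_aggregation_is_injective}) applied to the pushforwards $u\#P$ and $u\#Q$ on $\Re$, full support of $\nu$ then yields $\rho_{u\#P}^\alpha=\rho_{u\#Q}^\alpha$ for every $\alpha\in[0,1]$, which is equivalent to $u\#P=u\#Q$. Define
\begin{equation*}
    A \;=\; \{u\in S_\calH : u\#P = u\#Q\}.
\end{equation*}
So far we know $\gamma(A)=1$, and the goal is to conclude $A=S_\calH$.

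The key step is to show $A$ is closed in the RKHS-norm topology on $S_\calH$. If $u_n\to u$ in $\calH$, then by the reproducing property and Cauchy--Schwarz, $|u_n(x)-u(x)|\leq \|u_n-u\|_\calH \sqrt{k(x,x)}\to 0$ pointwise in $x$. For every bounded continuous $f:\Re\to\Re$, dominated convergence (with dominating constant $\|f\|_\infty$) gives
\begin{equation*}
    \int f\,\d(u_n\#P) \;=\; \bE_{X\sim P}[f(u_n(X))] \;\longrightarrow\; \bE_{X\sim P}[f(u(X))] \;=\; \int f\,\d(u\#P),
\end{equation*}
and analogously for $Q$, so $u_n\#P\to u\#P$ and $u_n\#Q\to u\#Q$ weakly. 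Because $u_n\#P=u_n\#Q$ for each $n$, the weak limits coincide, giving $u\in A$. Hence $A$ is closed, $A^c$ is open, and the assumption of full support of $\gamma$ on $S_\calH$ together with $\gamma(A^c)=0$ forces $A^c=\emptyset$, i.e.\ $A=S_\calH$.

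With $u\#P=u\#Q$ for every $u\in S_\calH$, \Cref{res:projections_determine_distribution} delivers $P=Q$, completing the proof. The only delicate point is the closedness of $A$; the argument above works cleanly because the reproducing property turns RKHS-norm convergence of $u_n$ into pointwise convergence of the projection maps, which is exactly what is needed to pass weak convergence through the pushforwards without extra moment assumptions on $P$ or $Q$.
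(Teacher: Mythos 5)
Your proof is correct, and it takes a recognisably different route from the paper's, even though both hinge on the same topological use of full support. The paper argues by contradiction on the \emph{bad} set: it considers the characteristic functionals $\varphi_P,\varphi_Q$ directly, shows that $\{f\in\calH\setminus\{0\}:\varphi_P(f)\neq\varphi_Q(f)\}$ is open by continuity of $f\mapsto\varphi_P(f)-\varphi_Q(f)$, pushes this set onto $S_\calH$ using openness of the radial projection $f\mapsto f/\|f\|_\calH$, shows $\tau_p^p>0$ on the resulting open set of directions, and concludes that this set must be empty, whence the characteristic functionals agree and \Cref{res:char_fnal_is_char} gives $P=Q$. You instead work with the \emph{good} set $A=\{u:u\#P=u\#Q\}$, prove it is closed by passing RKHS-norm convergence of directions through the reproducing property to pointwise convergence and then to weak convergence of the pushforwards, and use full support to force $A^c=\emptyset$ before handing everything to \Cref{res:projections_determine_distribution}. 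The underlying analytic fact is the same in both cases (norm convergence of $u_n$ gives convergence of $\bE_P[g(u_n(X))]$ for bounded continuous $g$; the paper simply tests against $g=e^{it\cdot}$), but your packaging is more modular: it avoids the openness-of-the-projection-map step, which the paper asserts without proof, and it defers all characteristic-functional machinery to the already-established Cram\'er--Wold proposition rather than re-deriving the equality $\varphi_P=\varphi_Q$ inline. One small point worth making explicit if you write this up: measurability of $u\mapsto\tau_p^p(P,Q;\nu,u)$ is implicitly needed to pass from a vanishing expectation to $\gamma$-a.e.\ vanishing, but this is already required for the hypothesis to be well posed, and $A$ itself is measurable because you prove it closed.
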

\begin{proof}
    Same as in the proof~\Cref{res:cramer-wold}, we will use the technique of characteristic functionals $\varphi_{P}, \varphi_{Q}$, to carefully prove equality almost everywhere with respect to a full support measure $\gamma$ implies full equality. Consider the function
    \begin{equation*}
        f \mapsto \varphi_{P}(f) - \varphi_{Q}(f),
    \end{equation*}
    which is continuous by continuity of characteristic functionals. Define $f_0 \equiv 0$, the zero function in $\calH$. The set
    \begin{equation*}
        \calH^{\setminus 0} \coloneqq \{f \in \calH \setminus \{f_0\}: \varphi_{P}(f) - \varphi_{Q}(f) \in \Re \setminus \{0\}\} = \{f \in \calH \setminus \{f_0\}: \varphi_{P}(f) \neq \varphi_{Q}(f)\}
    \end{equation*}
    is open, as a preimage of an open set $\Re \setminus \{0\}$, intersected with an open set $\{\calH \setminus \{f_0\}\}$. Since the projection map $f \mapsto f/\|f\|_\calH$ is open on $\calH \setminus \{f_0\}$, the projection of $\calH_{\setminus 0}$ onto $S_\calH$ is open. In other words, the set
    \begin{equation*}
        S_\calH^{\setminus 0} \coloneqq \{u \in S_\calH: \varphi_{P}(t_u u) \neq \varphi_{Q}(t_u u) \text{ for some } t_u \in \Re \}
    \end{equation*}
    is open in $S_\calH$. Then, by definition of characteristic functionals, for $u \in S_\calH^{\setminus 0}$ it holds that
    \begin{equation*}
        \varphi_{u\#P}(t_u) = \varphi_{P}(t_u u) \neq \varphi_{Q}(t_u u) = \varphi_{u\#Q}(t_u),
    \end{equation*}
    meaning the characteristic functions of $u\#P$ and $u\#Q$ are not identical, and therefore $u\#P \neq u\#Q$. Since $\nu$ has full support on $[0, 1]$, it follows that
    \begin{equation*}
        \tau^p_p(P, Q;\nu, u)=\int_0^1 | \rho_{u\#P}^\alpha - \rho_{u\#Q}^\alpha |^p \nu(\d \alpha) > 0,\qquad \text{ for all } u \in S_\calH^{\setminus 0}
    \end{equation*}
    We arrive at a contradiction: since $\gamma$ has full support on $S_\calH$ and $S_\calH^{\setminus 0} \subseteq S_\calH$ was shown to be an open set, it holds that
    \begin{equation*}
        \bE_{u \sim \gamma} \tau^p_p(P, Q;\nu, u) \geq \int_{S_\calH^{\setminus 0}} \tau^p_p(P, Q;\nu, u) \gamma(\d u) > 0.
    \end{equation*}
    Therefore, for $\bE_{u \sim \gamma} \tau^p_p(P, Q;\nu, u)$ to be zero, $S_\calH^{\setminus 0}$ must be empty---which, by construction, can only happen when $\calH^{\setminus 0}$ is empty, i.e. $\varphi_{P}(f) = \varphi_{Q}(f)$ for all $f \in \calH \setminus f_0$, where $f_0 \equiv 0$. Since $\varphi_{P}(f_0) = \varphi_{Q}(f_0)$ holds trivially for any $P, Q$, the characteristic functionals of $P$ and $Q$ are identical. By~\Cref{res:char_fnal_is_char}, $P=Q$. This concludes the proof.
\end{proof}

\subsection{Proof of \Cref{res:consistency_KQD}}\label{appendix:proof_consistency_KQD}
We start with two auxiliary lemmas that, when combined, bound $\ekqd$ approximation error due to replacing $P, Q$ with $P_n, Q_n$ in $n^{-1/2}$. This will be crucial in showing convergence of the approximate e-KQD to the true e-KQD.
\begin{lemma}
\label{res:lemma_for_consistency}
For any measure $\nu$ on $[0, 1]$ and any measure $\gamma$ on $S_\calH$, it holds that
\begin{align*}
    |\ekqd_1(P_n, Q_n; \nu, \gamma) - \ekqd_1(P, Q; \nu, \gamma)|
    \leq \ekqd_1(P_n, P; \nu, \gamma) + \ekqd_1(Q_n, Q; \nu, \gamma)).
\end{align*}
\end{lemma}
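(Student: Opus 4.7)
The approach is to recognise this as a reverse-triangle-type inequality that follows from the triangle inequality already established for $\ekqd_1$ in the proof of Theorem~\ref{res:KQE_characterise_dists}. Crucially, the triangle-inequality portion of that proof relies only on Minkowski's inequality applied first to $\int_0^1 |\rho_P^\alpha - \rho_{P'}^\alpha|^p \nu(\d\alpha)$ and then to the expectation over $\gamma$; it does not invoke the full-support hypotheses on $\nu$ or $\gamma$. Hence the triangle inequality is available for $\ekqd_1(\cdot,\cdot;\nu,\gamma)$ without any extra assumptions, and may be applied to $P, Q, P_n, Q_n$ freely (viewing $P_n, Q_n$ simply as Borel probability measures on $\calX$).

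With the triangle inequality in hand, the plan is to apply it twice. First, inserting $P$ and then $Q$ as intermediate measures yields
\begin{equation*}
\ekqd_1(P_n, Q_n; \nu, \gamma) \leq \ekqd_1(P_n, P; \nu, \gamma) + \ekqd_1(P, Q; \nu, \gamma) + \ekqd_1(Q, Q_n; \nu, \gamma),
\end{equation*}
which rearranges to the one-sided bound
\begin{equation*}
\ekqd_1(P_n, Q_n; \nu, \gamma) - \ekqd_1(P, Q; \nu, \gamma) \leq \ekqd_1(P_n, P; \nu, \gamma) + \ekqd_1(Q, Q_n; \nu, \gamma).
\end{equation*}
Second, swapping the roles of $(P_n, Q_n)$ and $(P, Q)$ in the same argument yields the reverse one-sided bound.

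Combining the two bounds and using symmetry of $\ekqd_1$ (immediate from the definition, since $|\rho_{u\#P}^\alpha - \rho_{u\#Q}^\alpha| = |\rho_{u\#Q}^\alpha - \rho_{u\#P}^\alpha|$) gives the absolute-value inequality in the statement. There is no real obstacle here: the only thing worth flagging is the need to confirm that the triangle inequality from Theorem~\ref{res:KQE_characterise_dists} is unconditional in $\nu,\gamma$, so that it can be used in the setting of Theorem~\ref{res:consistency_KQD} where no support conditions are assumed.
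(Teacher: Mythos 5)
Your proof is correct, but it takes a different route from the paper's. The paper works directly at the level of the integrand: it writes the difference of the two $\ekqd_1$ values as a single expectation of a difference (using the linearity available because $p=1$ removes the outer root), pulls the absolute value inside via Jensen's inequality, and then applies the reverse triangle inequality followed by the triangle inequality pointwise to the scalar quantities $|\rho^{\alpha}_{u\#P_n}-\rho^{\alpha}_{u\#Q_n}|$ and $|\rho^{\alpha}_{u\#P}-\rho^{\alpha}_{u\#Q}|$. You instead prove the statement as the standard quadrilateral inequality $|d(a,b)-d(c,d)|\le d(a,c)+d(b,d)$, valid in any pseudometric space, using the triangle inequality for $\ekqd_1$ established via Minkowski in the proof of Theorem~\ref{res:KQE_characterise_dists}; your observation that this part of that proof nowhere uses the full-support hypotheses on $\nu$ or $\gamma$ is accurate, so the dependency is legitimate. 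The two arguments are morally the same (both are ultimately the triangle inequality), but yours is cleaner and has the advantage of generalising verbatim to any $p\ge 1$, whereas the paper's Jensen step as written exploits the $p=1$ linearity; the paper's version has the advantage of being self-contained and of not requiring the rearrangement step $a\le b+c+d\Rightarrow a-c\le b+d$, which implicitly needs $\ekqd_1(P,Q;\nu,\gamma)<\infty$ (harmless here, and guaranteed under the moment assumptions of Theorem~\ref{res:consistency_KQD}, but worth a word if the lemma is read in full generality).
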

\begin{proof}
By the definition of $\ekqd_1$ and Jensen inequality for the absolute value,
\begin{align*}
    |\ekqd_1(P_n, Q_n; \nu, \gamma) - \ekqd_1(P, Q; \nu, \gamma)|
    &= \left|\bE_{u \sim \gamma} \left[ \int_0^1 \left( |\rho^\alpha_{u\#P_n} - \rho^\alpha_{u\#Q_n}| - |\rho^\alpha_{u\#P} - \rho^\alpha_{u\#Q}| \right) \d \alpha\right]\right| \\
    &\leq \bE_{u \sim \gamma} \left[ \int_0^1 \left| |\rho^\alpha_{u\#P_n} - \rho^\alpha_{u\#Q_n}| - |\rho^\alpha_{u\#P} - \rho^\alpha_{u\#Q}| \right| \d \alpha\right]
\end{align*}
By the reverse triangle inequality followed by the triangle inequality,
\begin{equation}
\label{eq:rev_tr_then_tr}
\begin{split}
    \left|| \rho_{u\#P_n}^\alpha - \rho_{u\#Q_n}^\alpha | - | \rho_{u\#P}^\alpha - \rho_{u\#Q}^\alpha | \right|
    &\leq |\rho_{u\#P_n}^\alpha - \rho_{u\#P}^\alpha + \rho_{u\#Q}^\alpha - \rho_{u\#Q_n}^\alpha| \\
    &\leq |\rho_{u\#P_n}^\alpha - \rho_{u\#P}^\alpha|+|\rho_{u\#Q_n}^\alpha - \rho_{u\#Q}^\alpha|,
\end{split}
\end{equation}
and the statement of the lemma follows.
\end{proof}
\begin{lemma}
\label{res:lemma2_for_consistency}
Let $\nu$ be a measure on $[0, 1]$ with density $f_\nu$ bounded above by $C_\nu>0$.
With probability at least $1-\delta/4$, for $C'(\delta) = 2 C_\nu \sqrt{\log(8/\delta)/2}$, it holds that
\begin{align*}
    \ekqd_1(P_n, P; \nu, \gamma) \leq \frac{C'(\delta)}{2} n^{-1/2}
\end{align*}
\end{lemma}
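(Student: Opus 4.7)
The plan is to reduce $\ekqd_1(P_n, P; \nu, \gamma)$ to a quantity controlled by the Dvoretzky--Kiefer--Wolfowitz (DKW) inequality, which gives an $n^{-1/2}$ sup-norm rate for the empirical CDF without any assumptions on the underlying distribution.

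First, for a fixed direction $u \in S_\calH$, I would convert the inner weighted quantile integral into a CDF integral via the Fubini-type identity
\begin{align*}
    \int_0^1 \big|\rho^\alpha_{u\#P_n} - \rho^\alpha_{u\#P}\big|\, f_\nu(\alpha)\, \d\alpha = \int_\Re \big|H(F_{u\#P_n}(t)) - H(F_{u\#P}(t))\big|\, \d t,
\end{align*}
where $H(y) = \nu([0, y])$ is the CDF of $\nu$. To see this identity, I would write $|F^{-1}(\alpha) - G^{-1}(\alpha)|$ as the Lebesgue measure of the open interval between $F^{-1}(\alpha)$ and $G^{-1}(\alpha)$, invoke Fubini, and use the equivalence $F^{-1}(\alpha) \leq t \iff \alpha \leq F(t)$; the same argument appears implicitly in the identity used to prove \Cref{res:connections_slicedwasserstein}. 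Since $f_\nu \leq C_\nu$ implies $H$ is $C_\nu$-Lipschitz on $[0,1]$, this yields the pointwise-in-$u$ bound
\begin{align*}
    \int_0^1 \big|\rho^\alpha_{u\#P_n} - \rho^\alpha_{u\#P}\big|\, f_\nu(\alpha)\, \d\alpha \leq C_\nu \int_\Re \big|F_{u\#P_n}(t) - F_{u\#P}(t)\big|\, \d t.
\end{align*}

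Next, I would apply the DKW inequality to the one-dimensional real samples $u(x_{1:n})$: for any fixed $u$, with probability at least $1 - \delta/4$,
$\sup_{t \in \Re} \big|F_{u\#P_n}(t) - F_{u\#P}(t)\big| \leq \sqrt{\log(8/\delta)/(2n)}$.
Since this bound is uniform in $u$ (it is a deterministic function of the sample $x_{1:n}$ alone, given $u$), I would then apply Fubini to exchange $\bE_{u\sim\gamma}$ with the $t$-integral and push the supremum bound through, obtaining the claimed bound $C_\nu\sqrt{\log(8/\delta)/(2n)} = C'(\delta) n^{-1/2}/2$ on $\ekqd_1(P_n, P; \nu, \gamma)$, on the same $1-\delta/4$-probability event.

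The main obstacle is passing from the sup-norm DKW control to the $L^1(\Re)$-in-$t$ bound appearing in the Lipschitz step above, since the $t$-integration is over an unbounded real line. This is where the implicit assumption $\bE_{X\sim P}\sqrt{k(X,X)} < \infty$ from \Cref{res:consistency_KQD} enters: by $\|u\|_\calH = 1$ and the reproducing property, $|u(X)| \leq \sqrt{k(X,X)}$, so $u\# P$ has finite first moment uniformly in $u$, and the tails of $|F_{u\#P_n}(t) - F_{u\#P}(t)|$ decay fast enough to make the $t$-integral finite. The technical core of the proof will therefore be the standard split of $\int_\Re |F_{u\#P_n}(t) - F_{u\#P}(t)|\, \d t$ into a central region of width controlled by DKW and tails absorbed into the $n^{-1/2}$ rate via the first-moment control, so that nothing distribution-specific survives beyond the $C_\nu$ factor.
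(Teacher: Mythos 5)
Your reduction of the inner integral to $C_\nu \int_\Re |F_{u\#P_n}(t)-F_{u\#P}(t)|\,\d t$ via the Lipschitz CDF of $\nu$ matches the paper's first step, and DKW is indeed the engine of the paper's proof. However, there are two genuine gaps. First, your final step asserts that the bound transfers to $\bE_{u\sim\gamma}[\cdot]$ ``on the same $1-\delta/4$-probability event.'' It does not: the DKW event $\{\sup_t|F_{u\#P_n}(t)-F_{u\#P}(t)|\le \sqrt{\log(8/\delta)/(2n)}\}$ is a different subset of $\calX^n$ for each $u$, and one cannot intersect uncountably many such events while retaining probability $1-\delta/4$. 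The paper closes exactly this gap by a detour through moments: for each fixed $u$ the random variable $\tau_1(P_n,P;\nu,u)$ is sub-Gaussian with a $u$-independent constant; the equivalent moment characterisation $\bE[\tau_1^p]\le 2C_\tau^p\Gamma(p/2+1)$ is an expectation statement, so by Jensen and Fubini it passes to the $\gamma$-mixture $\bE_{u\sim\gamma}\tau_1(P_n,P;\nu,u)$, which is therefore itself sub-Gaussian with the same constant, and the tail bound is then applied once to the mixture. Some device of this kind is unavoidable; your Fubini exchange alone does not supply it.

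Second, your proposed fix for the unboundedness of the $t$-integral would fail as described. Finite first moment of $u\#P$ (from $\bE_{X\sim P}\sqrt{k(X,X)}<\infty$ and $|u(X)|\le\sqrt{k(X,X)}$) gives only that $\int_\Re|F_{u\#P_n}(t)-F_{u\#P}(t)|\,\d t = W_1(u\#P_n,u\#P)$ is finite and tends to zero; it does not give the $n^{-1/2}$ rate. The classical result (cf.\ the paper's own \Cref{res:wasserstein_convergence} specialised to $p=1$) is that $\bE\, W_1(u\#P_n,u\#P)=\bigo(n^{-1/2})$ if and only if $J_1(u\#P)=\int\sqrt{F_{u\#P}(t)(1-F_{u\#P}(t))}\,\d t<\infty$, a condition strictly stronger than a finite first moment; and even when it holds, the resulting constant depends on the distribution through $J_1$, contradicting your claim that nothing distribution-specific survives beyond $C_\nu$. (The paper itself sidesteps this by bounding $\int_{u(\calX)}|F_{u\#P_n}-F_{u\#P}|\,\d t$ directly by the supremum, which implicitly treats $u(\calX)$ as having Lebesgue measure at most one---so your instinct that this step needs care is sound, but your resolution does not repair it.)
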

\begin{proof}
Recall that
\begin{equation*}
    \ekqd_1(P_n, P; \nu, \gamma) = \bE_{u \sim \gamma} \left[\tau_1(P_n, P; \nu, u) \right], \qquad \tau_1(P_n, P; \nu, u) = \int_0^1 |\rho_{u\#P_n}^\alpha - \rho_{u\#P}^\alpha| \nu(\d \alpha) .
\end{equation*}
Let $F_{u\#P}$ and $F_{u\#P_n}$ be the CDFs of $u\#P$ and $u\#P_n$ respectively. Then,
\begin{align*}
    \int_0^1 |\rho_{u\#P_n}^\alpha - \rho_{u\#P}^\alpha| \nu(\d \alpha) \leq C_\nu \int_0^1 |\rho_{u\#P_n}^\alpha - \rho_{u\#P}^\alpha| \d \alpha
    &= C_\nu \int_{u(\calX)} |F_{u\#P_n}(t) - F_{u\#P}(t) | \d t \\
    &\leq C_\nu \sup_{t \in u(\calX)} |F_{u\#P_n}(t) - F_{u\#P}(t) |,
\end{align*}
where the last equality is the well known fact that integrated difference between quantiles is equal to integrated difference between CDFs (see, for instance,~\citet[Theorem 2.9]{bobkov2019one}). By the Dvoretzky-Kiefer-Wolfowitz inequality, with probability at least $1-\delta/4$ it holds that,
\begin{equation*}
    \sup |F_{u\#P_n}(t) - F_{u\#P}(t)| < \sqrt{ \log(8/\delta) / 2} n^{-1/2},
\end{equation*}
and therefore, with probability at least $1-\delta/4$ for $C'(\delta) = 2 C_\nu \sqrt{ \log(8/\delta) / 2}$,
\begin{equation*}
    \tau_1(P_n, P; \nu, u) = \int_0^1 |\rho_{u\#P_n}^\alpha - \rho_{u\#P}^\alpha| \nu(\d \alpha) \leq \frac{C'(\delta)}{2} n^{-1/2}.
\end{equation*}
In other words, the random variable $\tau_1(P_n, P; \nu, u)$ is sub-Gaussian with sub-Gaussian constant $C_\tau \coloneqq C_\nu^2/(2n)$, meaning
\begin{equation*}
    \mathrm{Pr} \left[\tau_1(P_n, P; \nu, u) \geq \varepsilon\right] \leq 2 \exp\{- \varepsilon^2 / C_\tau^2\}
\end{equation*}
One of the equivalent definitions for a sub-Gaussian random variable is the moment condition: for any $p \geq 1$,
\begin{equation*}
    \bE_{x_{1:n}} \left[\tau_1(P_n, P; \nu, u)^p \right] \leq 2 C_\tau^p \Gamma(p/2 + 1).
\end{equation*}
An application of Jensen inequality and Fubini's theorem shows that the moment condition holds for $\bE_{u \sim \gamma} \tau_1(P_n, P; \nu, u)$,
\begin{equation*}
    \bE_{x_{1:n}} \left[ \left(\bE_{u \sim \gamma} \tau_1(P_n, P; \nu, u)\right)^p \right] \leq \bE_{x_{1:n}} \bE_{u \sim \gamma} \left[\tau_1(P_n, P; \nu, u)^p \right]= \bE_{u \sim \gamma} \bE_{x_{1:n}} \left[\tau_1(P_n, P; \nu, u)^p \right] \leq 2 C_\tau^p \Gamma(p/2 + 1).
\end{equation*}
Therefore, $\bE_{u \sim \gamma} \tau_1(P_n, P; \nu, u)$ is sub-Gaussian with constant $C_\tau = C_\nu^2/(2n)$, meaning it holds with probability at least $1 - \delta/4$ that
\begin{equation*}
    \ekqd_1(P_n, P; \nu, \gamma) = \bE_{u \sim \gamma} \tau_1(P_n, P; \nu, u) \leq \frac{C'(\delta)}{2} n^{-1/2}.
\end{equation*}
\end{proof}

We are now ready to prove the full result.
\begin{proof}[Proof of~\Cref{res:consistency_KQD}]
Let $C_\nu$ be an upper bound on the density of $\nu$.
By triangle inequality, the full error can be upper bounded by $R_l$, the error due to approximation of $\gamma$ with $\gamma_l$, plus $R_n$, the error due to approximation of $P, Q$ with $P_n, Q_n$,
\begin{align*}
    | \ekqd_1(P_n, Q_n;\nu, \gamma_l) - \ekqd_1(P, Q;\nu, \gamma) | &\leq | \ekqd_1(P_n, Q_n;\nu, \gamma_l) - \ekqd_1(P_n, Q_n;\nu, \gamma) | \\
    &\hspace{2cm}+ | \ekqd_1(P_n, Q_n;\nu, \gamma) - \ekqd_1(P, Q;\nu, \gamma) | \\
    &\eqcolon R_l + R_n.
\end{align*}
We bound $R_l$ in $l^{-1/2}$, and $R_n$ in $n^{-1/2}$, with high probability.

\paragraph{Bounding $R_l$.}
Recall that $\ekqd_1(P_n, Q_n; \nu, \gamma) = \bE_{u \sim \gamma} \left[ \int_0^1 | \rho_{u\#P}^\alpha - \rho_{u\#Q}^\alpha | \nu(\d \alpha) \right]$. Therefore, we may apply McDiarmid's inequality provided for any $u, u' \in S_\calH$ we upper bound the difference
\begin{equation*}
    \left| \int_0^1 \left| \rho_{u\#P}^\alpha - \rho_{u\#Q}^\alpha \right| - \left| \rho_{u'\#P}^\alpha - \rho_{u'\#Q}^\alpha \right| \nu(\d \alpha) \right|.
\end{equation*}
We have that
\begin{align*}
    \left| \int_0^1 \left| \rho_{u\#P}^\alpha - \rho_{u\#Q}^\alpha \right| - \left| \rho_{u'\#P}^\alpha - \rho_{u'\#Q}^\alpha \right| \nu(\d \alpha) \right|
    &\stackrel{(A)}{\leq} \int_0^1 \left| \rho_{u\#P}^\alpha - \rho_{u\#Q}^\alpha \right| \nu(\d \alpha) + \int_0^1 \left| \rho_{u'\#P}^\alpha - \rho_{u'\#Q}^\alpha \right| \nu(\d \alpha) \\
    &\stackrel{(B)}{\leq} 2 C_\nu \sup_{u \in S_\calH} W_1(u\#P, u\#Q) \\
    &\stackrel{(C)}{\leq} 2 C_\nu \sup_{u \in S_\calH} \bE_{X \sim P} \bE_{Y \sim Q} |u(X) - u(Y)|
    \\
    &\stackrel{(D)}{\leq} 2 C_\nu \bE_{X \sim P} \bE_{Y \sim Q} \sqrt{k(X, X) - 2 k(X, Y) + k(Y, Y)}
\end{align*}
where $(A)$ holds by Jensen's and triangle inequalities; $(B)$ uses boundedness of the density of $\nu$ by $C_\nu$ and the property of the Wasserstein distance in $\Re$ from~\Cref{eq:wasserstein_as_quantiles}; $(C)$ uses the infimum definition of the Wasserstein distance; and $(D)$ holds by the reasoning we employed multiple times through the paper, via reproducing property, Cauchy-Schwarz, and having $u,u' \in S_\calH$.
So we arrive at a bound
\begin{equation*}
    \left| \int_0^1 \left| \rho_{u\#P}^\alpha - \rho_{u\#Q}^\alpha \right| - \left| \rho_{u'\#P}^\alpha - \rho_{u'\#Q}^\alpha \right| \nu(\d \alpha) \right| \leq 2 C_\nu \bE_{X \sim P} \bE_{Y \sim Q} \sqrt{k(X, X) - 2 k(X, Y) + k(Y, Y)} \eqcolon 2 C_\nu C_k.
\end{equation*}
Now that boundedness of the difference has been established, by McDiarmid's inequality, with probability at least $1 - \delta/2$ and for $C''(\delta) = \sqrt{2 C_\nu C_k \log(4 / \delta)}$ it holds that
\begin{equation*}
    | \ekqd_1(P_n, Q_n;\nu, \gamma_l) - \ekqd_1(P_n, Q_n;\nu, \gamma)| \leq C''(\delta)l^{-1/2}.
\end{equation*}
\paragraph{Bounding $R_n$.} By~\Cref{res:lemma_for_consistency},
\begin{align*}
    |\ekqd_1(P_n, Q_n; \nu, \gamma) - \ekqd_1(P, Q; \nu, \gamma)|
    \leq \ekqd_1(P_n, P; \nu, \gamma) + \ekqd_1(Q_n, Q; \nu, \gamma))
\end{align*}
By~\Cref{res:lemma2_for_consistency} and the union bound, with probability at least $1-\delta/2$ and for $C'(\delta) = 2C_\nu \sqrt{\log(8/\delta)/2}$, it holds that
\begin{align*}
    R_n = |\ekqd_1(P_n, Q_n; \nu, \gamma) - \ekqd_1(P, Q; \nu, \gamma)| \leq C'(\delta) n^{-1/2}.
\end{align*}
\paragraph{Combining bounds.} By applying the union bound again, to $R_l + R_n$, we get that, with probability at least $1-\delta$,
\begin{equation*}
    | \ekqd_1(P_n, Q_n;\nu, \gamma_l) - \ekqd_1(P, Q;\nu, \gamma) | \leq R_l + R_n \leq C''(\delta)l^{-1/2} + C'(\delta) n^{-1/2} \leq C(\delta)(l^{-1/2} + n^{-1/2}),
\end{equation*}
for $C(\delta) = \max\{C'(\delta), C''(\delta)\} = \bigo(\sqrt{\log(1/\delta)})$. This completes the proof
\end{proof}
As pointed out in the main text, $\bE_{X \sim P} \bE_{Y \sim Q} \sqrt{k(X, X) - 2 k(X, Y) + k(Y, Y)}<\infty$ holds immediately when $\bE_{X \sim P}\sqrt{k(X, X)}$ and $\bE_{X \sim Q}\sqrt{k(X, X)}$ are finite, and even more specifically, when the kernel $k$ is bounded. Unbounded $k$ and finite expectations, for example, happens when the tails of both $P$ and $Q$ decay fast enough to "compensate" for the growth of $k(x, x)$. For instance, when $k$ is a polynomial kernel of any order (which is unbounded), and $P$ and $Q$ are laws of sub-exponential random variables. For clarity, note that $\bE_{X \sim P} \bE_{Y \sim Q} \sqrt{k(X, X) - 2 k(X, Y) + k(Y, Y)}$ does not compare to MMD, which integrates $k(X, X')$ rather than $k(X, X)$ (see~\Cref{eq:mmd}).

For integer $p>1$, proving the $n^{-1/2}$ convergence rate is feasible if more involved---primarily because we can no longer reduce the problem to convergence of empirical CDFs to true CDFs. In general, for $p>1$,
\begin{align*}
    \int_0^1 |\rho_{u\#P_n}^\alpha - \rho_{u\#P}^\alpha|^p \d \alpha \neq \int_{u(\calX)} |F_{u\#P_n}(t) - F_{u\#P}(t) |^p \d t.
\end{align*}
The following result, restated in our notation, makes the added complexity explicit.
\begin{lemma}[Theorem 5.3 in~\citet{bobkov2019one}]
\label{res:wasserstein_convergence}
Suppose $k: \Re^d \times \Re^d \to \Re$ is a bounded kernel, and $\nu$ has a density $0<c_\nu \leq f_\nu \leq C_\nu$ on $[0,1]$. Then, for any $u \in S_\calH$, and for any $p \geq 1$ and $n \geq 1$,
\begin{equation*}
    \bE_{x_{1:n} \sim P} \left[ \tau_p^p(P_n, P; \nu, u) \right] \leq \left(\frac{5p C_\nu}{\sqrt{n+2}}\right)^p J_p(u\#P), \quad \text{ for } \quad J_p(u\#P) = \int_{u(\calX)} \frac{\left( F_{u\#P}(t) (1 - F_{u\#P}(t)) \right)^{p/2}}{f^{p-1}_{u\#P}(x)}\d t.
\end{equation*}
Further, it holds that $\bE_{x_{1:n} \sim P} \left[\tau_p^p(P_n, P; \nu, u)=\bigo(n^{-p/2}) \right]$ if and only if $J_p(u\#P) < \infty$.
\end{lemma}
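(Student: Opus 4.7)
The plan is to reduce the RKHS statement to a scalar statement about the empirical quantile process of $u\#P$ on $\Re$, and then invoke a sharp binomial moment bound for the empirical CDF. First, since $\|u\|_\calH=1$, the definition of the KQE in \eqref{eq:quemb_for_kernels} gives $\|\rho_{P_n}^{\alpha,u} - \rho_P^{\alpha,u}\|_\calH = |\rho^\alpha_{u\#P_n} - \rho^\alpha_{u\#P}|$, and the density bound $f_\nu \leq C_\nu$ yields
\begin{equation*}
\tau_p^p(P_n,P;\nu,u) \;\leq\; C_\nu \int_0^1 |F^{-1}_{u\#P_n}(\alpha) - F^{-1}_{u\#P}(\alpha)|^p\, d\alpha \;=\; C_\nu\, W_p^p(u\#P_n, u\#P),
\end{equation*}
where the right-hand identity is the classical quantile representation of the one-dimensional $p$-Wasserstein distance.

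Next, I would prove the one-dimensional bound $\bE[W_p^p(G_n,G)] \leq (5p/\sqrt{n+2})^p J_p(G)$ for any probability measure $G$ on $\Re$ with positive density $f_G$. The argument has two ingredients. (i) A change of variables $\alpha = F_G(t)$ in the quantile-difference integral, combined with Fubini, rewrites the expected $W_p^p$ as an integral over $t \in \Re$ of $\bE[|F_{G_n}(t) - F_G(t)|^p]$ weighted by $1/f_G(t)^{p-1}$. (ii) Since $nF_{G_n}(t) \sim \mathrm{Bin}(n, F_G(t))$, the $p$th central moment of $F_{G_n}(t)$ admits the sharp Khintchine/Rosenthal-type bound $\bE[|F_{G_n}(t) - F_G(t)|^p]^{1/p} \leq C_p \sqrt{F_G(t)(1-F_G(t))/n}$, where $C_p$ is an explicit constant that, after tracking powers, delivers the factor $(5p/\sqrt{n+2})^p$. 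Substituting (ii) into (i) produces exactly $(5p/\sqrt{n+2})^p J_p(G)$.

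Combining the two steps with $G = u\#P$ gives
\begin{equation*}
\bE[\tau_p^p(P_n,P;\nu,u)] \;\leq\; C_\nu \left(\frac{5p}{\sqrt{n+2}}\right)^p J_p(u\#P) \;\leq\; \left(\frac{5p C_\nu}{\sqrt{n+2}}\right)^p J_p(u\#P),
\end{equation*}
where the last step absorbs the extra factor (valid whenever $C_\nu \geq 1$, with a trivial constant adjustment otherwise). For the ``if and only if'' characterization, the forward direction is immediate from the upper bound; the converse needs a matching lower bound, for which I would use that the variance of $Q_n(\alpha) = F^{-1}_{G_n}(\alpha)$ is asymptotically $\alpha(1-\alpha)/(n f_G(Q(\alpha))^2)$, so a lower bound of order $n^{-p/2}$ times $\int (\alpha(1-\alpha))^{p/2}/f_G(Q(\alpha))^p d\alpha = J_p(G)$ follows after changing variables and using $f_\nu \geq c_\nu$.

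The main obstacle is Step (ii) of the 1D argument, specifically the behaviour at the tails $\alpha \to 0, 1$ in the quantile-to-CDF change of variables: a naive Bahadur-type linearisation has a residual term that blows up near the extremes. One instead uses an exact, non-asymptotic route: the uniform order statistics $U_{(1)} \leq \dots \leq U_{(n)}$ are Beta-distributed with known moments, and transporting them back via $F_G^{-1}$ gives distribution-free control of $Q_n(\alpha) - Q(\alpha)$ at every level. This is precisely the route used by Bobkov and Ledoux; once it is in place, the RKHS-to-scalar reduction of Step 1 and the combination step are essentially formal.
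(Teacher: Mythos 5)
The paper does not actually prove this lemma: it is imported verbatim (modulo notation) as Theorem 5.3 of \citet{bobkov2019one}, so the only content the paper adds is the translation from $W_p^p(u\#P_n,\,u\#P)$ to $\tau_p^p(P_n,P;\nu,u)$. Your Step 1 supplies exactly that translation, and it is correct: $\|u\|_\calH=1$ gives $\|\rho^{\alpha,u}_{P_n}-\rho^{\alpha,u}_{P}\|_\calH=|\rho^\alpha_{u\#P_n}-\rho^\alpha_{u\#P}|$, the bound $f_\nu\le C_\nu$ gives $\tau_p^p\le C_\nu\,W_p^p$, and $C_\nu\ge 1$ (forced by $f_\nu$ being a density on $[0,1]$, so the parenthetical about ``otherwise'' is vacuous) lets you absorb $C_\nu$ into $C_\nu^p$ --- your bound is in fact slightly sharper than the stated one. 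Your reconstruction of the one-dimensional bound then follows the Bobkov--Ledoux route, as you acknowledge. One imprecision worth flagging: in step (i) you describe the passage from $\int_0^1|F_{G_n}^{-1}(\alpha)-F_G^{-1}(\alpha)|^p\,\d\alpha$ to $\int|F_{G_n}(t)-F_G(t)|^p f_G(t)^{1-p}\,\d t$ as a change of variables that ``rewrites'' the integral, i.e.\ as an identity. For $p>1$ this is only an inequality, $W_p^p(G_n,G)\le\int|F_{G_n}-F_G|^p/f_G^{p-1}$, and the paper explicitly warns in the sentence immediately preceding the lemma that the naive identity fails for $p>1$. The inequality direction is all you need for the upper bound, so nothing breaks there; but it means the ``only if'' half of the final claim genuinely requires the separate lower-bound argument you sketch (the order-statistics/beta treatment of Bobkov--Ledoux), together with $f_\nu\ge c_\nu>0$ to pass a lower bound from $W_p^p$ back to $\tau_p^p$. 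You mention both ingredients, so the outline is complete, though the asymptotic-variance heuristic for the converse would need the non-asymptotic treatment you allude to in order to be rigorous.
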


We now state a likely result for $p>1$ as a conjecture, and outline the proof.
\begin{conjecture}[\textbf{Finite-Sample Consistency for Empirical KQDs for $p>1$}]
\label{res:consistency_KQD_p}
    Let $\calX \subseteq \Re^d$, $\nu$ have a density, $P, Q$ be measures on $\calX$ with densities bounded away from zero, $f_P(x) \geq c_P>0$ and $f_P(x) \geq c_Q>0$. Suppose $\bE_{X \sim P} [k(X, X)^{\nicefrac{p}{2}}]<\infty$ and $\bE_{X \sim Q} [ k(X, X)^{\nicefrac{p}{2}}]<\infty$, and $x_{1:n} \sim P, y_{1:n} \sim Q$. Then,
    \begin{align*}
        \bE_{\substack{x_{1:n}\sim P \\ y_{1:n}\sim Q}}| \ekqd_p(P_n, Q_n;\nu, \gamma_l) - \ekqd_p(P, Q;\nu, \gamma) | =\bigo(l^{-\nicefrac{1}{2}} + n^{-\nicefrac{1}{2}}).
    \end{align*}
\end{conjecture}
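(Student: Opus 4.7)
The proof will parallel the scaffolding of~\Cref{res:consistency_KQD}, with the Dvoretzky--Kiefer--Wolfowitz step replaced by the sharper one-dimensional $p$-th moment bound of~\Cref{res:wasserstein_convergence} (Bobkov--Ledoux). First, I would split the error via the triangle inequality into
\begin{align*}
|\ekqd_p(P_n, Q_n;\nu, \gamma_l) - \ekqd_p(P, Q;\nu, \gamma)| \leq R_l + R_n,
\end{align*}
where $R_l \coloneqq |\ekqd_p(P_n, Q_n;\nu, \gamma_l) - \ekqd_p(P_n, Q_n;\nu, \gamma)|$ captures the Monte Carlo error from sampling $l$ directions, and $R_n \coloneqq |\ekqd_p(P_n, Q_n;\nu, \gamma) - \ekqd_p(P, Q;\nu, \gamma)|$ captures the error from using empirical measures in place of $P, Q$.

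For $R_n$, I would reuse the Minkowski-type triangle inequality for $\ekqd_p$ proved in~\Cref{appendix:proof_quantile_characteristic} to obtain $R_n \leq \ekqd_p(P_n, P;\nu,\gamma) + \ekqd_p(Q_n, Q;\nu,\gamma)$. Then Jensen's inequality (concavity of $x \mapsto x^{1/p}$), Fubini, and~\Cref{res:wasserstein_convergence} applied inside give
\begin{align*}
\bE_{x_{1:n}}[\ekqd_p(P_n, P;\nu,\gamma)] \leq \left(\bE_{u\sim\gamma}\bE_{x_{1:n}}[\tau_p^p(P_n,P;\nu,u)]\right)^{\nicefrac{1}{p}} \leq \frac{5pC_\nu}{\sqrt{n+2}}\left(\bE_{u\sim\gamma}[J_p(u\#P)]\right)^{\nicefrac{1}{p}},
\end{align*}
which is $\bigo(n^{-\nicefrac{1}{2}})$ provided $\bE_{u\sim\gamma}[J_p(u\#P)]<\infty$; the term for $Q$ is handled identically.

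For $R_l$, I would use $\ekqd_p^p = \bE_{u\sim\gamma}[\tau_p^p]$ to write $R_l = |M_l^{\nicefrac{1}{p}} - \mu^{\nicefrac{1}{p}}|$ with $M_l = l^{-1}\sum_{i=1}^l \tau_p^p(P_n,Q_n;\nu,u_i)$ and $\mu = \ekqd_p(P_n,Q_n;\nu,\gamma)^p$, and control it by a conditional Delta-method argument given $P_n, Q_n$. A first-order Taylor expansion of $x \mapsto x^{\nicefrac{1}{p}}$ around $\mu$, combined with the $L^2$ Monte Carlo bound $(\bE_u|M_l - \mu|^2)^{\nicefrac{1}{2}} = \bigo(l^{-\nicefrac{1}{2}})$ (valid once $\tau_p^p(P_n,Q_n;\nu,u)$ has bounded variance under $\gamma$, ensured by the Gaussian construction of $\gamma$ in \Cref{sec:estimator} and the moment hypothesis on $k$), propagates the $l^{-\nicefrac{1}{2}}$ rate whenever $\mu$ is bounded below. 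On the event $\mu \to \ekqd_p(P,Q;\nu,\gamma)^p > 0$, which occurs with probability tending to one as $n\to\infty$ by the consistency arguments in the $R_n$ step, the conditional rate is $\bigo(l^{-\nicefrac{1}{2}})$; on the complementary rare event, the crude bound $|a^{\nicefrac{1}{p}}-b^{\nicefrac{1}{p}}|\leq |a-b|^{\nicefrac{1}{p}}$ yields $\bigo(l^{-\nicefrac{1}{2p}})$, which, once weighted by the small probability of the event, can be absorbed into the overall rate.

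The main obstacle will be verifying $\bE_{u\sim\gamma}[J_p(u\#P)]<\infty$ from the hypotheses $f_P\geq c_P$ and $\bE_{X\sim P}[k(X,X)^{\nicefrac{p}{2}}]<\infty$. Since $J_p$ features $f_{u\#P}^{1-p}$ in the denominator, one has to translate the lower bound on the joint density of $P$ on $\calX$ into an effective lower bound on the one-dimensional pushforward densities $f_{u\#P}$ on $u(\calX)$, and use the moment condition together with $|u(X)|\leq \sqrt{k(X,X)}$ to control tails. This translation is clean for Euclidean $\calX$ and linear $k$ via a change of variables, but for a general RKHS may require either a more delicate projection-density analysis (for example, bounding $f_{u\#P}$ from below on level sets via a co-area formula) or a strengthening of the hypotheses on $P, Q$ and on the reference measure underlying $\gamma$. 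A secondary obstacle is the small-$\mu$ bookkeeping in the Delta-method step, which will need an auxiliary concentration estimate showing that $\mu$ is bounded away from zero with probability at least $1 - \bigo(n^{-\nicefrac{1}{2}})$ whenever $P \neq Q$.
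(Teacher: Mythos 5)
Your proposal follows essentially the same route as the paper's own argument for \Cref{res:consistency_KQD_p} — which, note, is stated as a \emph{conjecture} with only a sketch proof, precisely because the argument is not closed. You use the same $R_l+R_n$ decomposition, the same reduction of $R_n$ via the Minkowski-type triangle inequality, Jensen, and the Bobkov--Ledoux bound of \Cref{res:wasserstein_convergence} to the integrability condition $\bE_{u\sim\gamma}[J_p(u\#P)]<\infty$, and you correctly single out that integrability as the central obstacle, even reaching for the co-area formula — exactly the tool the paper's sketch deploys (the paper goes one step further, invoking genericity of Morse functions and the Morse lemma to control the level-set integrals $\int_{u^{-1}(t)}|\nabla u|^{-1}\,H^{d-1}(\d x)$ near critical points of $u$, but still leaves the Hessian-eigenvalue analysis open). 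The one place you genuinely diverge is the Monte Carlo term $R_l$: the paper simply asserts McDiarmid's inequality as in the $p=1$ proof, which silently sweeps the outer $p$-th root under the rug, whereas your delta-method treatment of $x\mapsto x^{1/p}$ makes that issue explicit — a real improvement in candour, though your own handling of the small-$\mu$ regime is incomplete (under $P=Q$ one has $\mu\to 0$, the linearisation fails, and the crude bound $|a^{1/p}-b^{1/p}|\leq|a-b|^{1/p}$ only yields $l^{-1/(2p)}$; neither you nor the paper resolves this, and the conjecture as stated does not exclude $P=Q$). In short: same approach, same open gaps; your sketch is a faithful match for the paper's sketch rather than a proof of the statement.
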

\begin{proof}[Sketch proof]
    Analogously to the proof of~\Cref{res:consistency_KQD}, we can decompose the term of interest as
    \begin{align*}
        &\bE_{\substack{x_{1:n}\sim P \\ y_{1:n}\sim Q}}| \ekqd_p(P_n, Q_n;\nu, \gamma_l) - \ekqd_p(P, Q;\nu, \gamma) | \\
        &\hspace{2cm}\leq 
        \bE_{\substack{x_{1:n}\sim P \\ y_{1:n}\sim Q}}| \ekqd_p(P_n, Q_n;\nu, \gamma_l) - \ekqd_p(P_n, Q_n;\nu, \gamma) |\\
        & \hspace{3cm} +\left(\bE_{x_{1:n}\sim P} \ekqd^p_p(P_n, P; \nu, \gamma) \right)^{\nicefrac{1}{p}} + \left(\bE_{y_{1:n}\sim Q}\ekqd^p_p(Q_n, Q; \nu, \gamma))\right)^{\nicefrac{1}{p}}
    \end{align*}
    The first term can be, same as in the proof of~\Cref{res:consistency_KQD}, bounded by McDiarmid's inequality. The second term (to the power $p$) takes the form 
    \begin{equation*}
        \bE_{x_{1:n}\sim P} \ekqd^p_p(P_n, P; \nu, \gamma) = \bE_{x_{1:n}\sim P} \bE_{u\sim \gamma} \tau_p^p(P_n, P; \nu, u).
    \end{equation*}
    Then, by~\Cref{res:wasserstein_convergence} (possibly modified to account for an extra expectation), to get the result we will need to show that $\bE_{u \sim \gamma} J_p(u\#P) < \infty$,
    \begin{equation*}
        \bE_{u \sim \gamma} J_p(u\#P) = \bE_{u \sim \gamma} \left[\int_{u(\calX)} \frac{\left( F_{u\#P}(t) (1 - F_{u\#P}(t)) \right)^{p/2}}{f^{p-1}_{u\#P}(x)}\d t \right] < \infty
    \end{equation*}
    The nominator is upper bounded by $2^{-p}$. The denominator may get arbitrarily small without the nominator getting arbitrarily small: when the PDF $f^{p-1}_{u\#P}(x)$ is small, the CDF $F_{u\#P}(x)$ need not be close to zero or one. Therefore, it is necessary and sufficient to show
    \begin{equation}
    \label{eq:one_over_density}
        \bE_{u \sim \gamma} \left[\int_{u(\calX)} \frac{1}{f^{p-1}_{u\#P}(x)}\d t\right] < \infty.
    \end{equation}
    We proceed to outline key elements of the proof of such result, and leave a rigorous proof for future work. By the coarea formula, and since $f_P(x) \geq c_P > 0$,
    \begin{equation*}
        f_{u\#P}(t) = \int_{u^{-1}(t)} \frac{f_P(x)}{|\nabla u(x)|} H^{d-1}(\d x) \geq c_0 \int_{u^{-1}(t)} \frac{1}{|\nabla u(x)|} H^{d-1}(\d x), \qquad \text{for} \qquad |\nabla u(x)| = \sqrt{\sum_{i=1}^d \left(\frac{\partial u(x)}{\partial x_i} \right)^2}
    \end{equation*}
    where $u^{-1}(t)=\{x \in \calX : u(x)=t\}$, and $H^{d-1}$ is the $d-1$-dimensional Hausdorff measure, which within $\calX \subseteq \Re^d$ is equal to $d-1$ dimensional Lebesgue measure, scaled by a constant that only depends on $d-1$.
    
    Therefore, the integral in~\Cref{eq:one_over_density} may diverge if the integral
    \begin{equation}
    \label{eq:volume_of_level_set}
        \int_{u^{-1}(t)} \frac{1}{|\nabla u(x)|} H^{d-1}(\d x)
    \end{equation}
    gets very small over "large" parts of $u(\calX)$---on average over $u \sim \gamma$. Trivially, if $u$ is constant over some interval---or more generally, $u$ has infinitely many critical points---the integral diverges. Fortunately, the more general condition is easy to control: if $u$ is a \emph{Morse function} and $\calX$ is compact, then $u$ has only a finite number of critical points. It is a classic result (see, for instance,~\citet[Theorem 1.2]{Hirsch1976}) that Morse functions form a dense open subset of twice differentiable real-valued functions on $R^d$, denoted $C^2(\Re^d)$. Therefore, if $\calH \subset C^2(\calX)$ (which can be reduced to smoothness of the kernel $k$---it holds for instance, for the Mat\'ern-5/2 kernel), we get that $u \sim \gamma$ has a finite number of critical points almost surely under mild regularity assumptions on $\gamma$.
    
    The final ingredient is to use the Morse lemma to lower bound~\Cref{eq:volume_of_level_set} in the epsilon-ball of each critical point. Morse lemma says $u$ is quadratic around each critical point---which yields bounds on both the volume of $u^{-1}(t)$, and $1/|\nabla u(x) |$ in terms of the eigenvalues of the Hessian. Careful analysis of the eigenvalues will be needed to ensure the expectation with respect to $u \sim \gamma$ is finite.
\end{proof}

\subsection{Proof of \Cref{res:connections_slicedwasserstein,res:connections_maxslicedwasserstein}}
\label{appendix:proof_connections_slicedwasserstein}

The equality in~\Cref{eq:wasserstein_as_quantiles} immediately gives the connection of $\ekqd$ and $\supkqd$ to the expected-SW and max-SW respectively---previously only defined on $\calX=\Re^d$.

Further, for $\calX=\Re^d$, viewing $x \mapsto k(x, \cdot)$ as a transformation on $\calX$ reveals a connection to Generalised Sliced Wasserstein (GSW, \citet{kolouri_generalized_2022}). In particular, the polynomial kernel $k(x, x') = (x^\top x' + 1)^T$ of odd degree $T$ recovers the polynomial transformation for which GSW was proven to be a probability metric. Outside of the case of the polynomial case, proving that GSW is a metric is highly challenging. This is easier under the kernel framework, as we showed in~\Cref{res:KQE_characterise_dists}.
In~\citet{kolouri_generalized_2022}, the authors investigate learning transformations with neural networks (NNs). An interesting direction for future work is the relationship between said NNs and the kernels they induce.

\subsection{Proof of~\Cref{res:sampling_from_gm}}
\label{sec:proof_sampling_from_gm}
Recall that by definition of Gaussian measures in Hilbert spaces~\citep{kukush2020gaussian}, a random element $f \in \calH$ has the law of a Gaussian measure $\calN(0, C_m)$ on $\calH$ when for any $g \in \calH$,
\begin{equation}
\label{eq:defn_of_gm}
    \langle f, g \rangle_\calH \sim \calN(0, \langle C_m[g], g\rangle).
\end{equation}
Since $C_m[g](x) = \nicefrac{1}{m} \sum_{j=1}^m g(z_j) k(z_j, x)$, by the reproducing property,
\begin{equation}
\label{eq:empirical_integral_op_explicit}
    \langle C_m[g], g\rangle = \frac{1}{m}\sum_{j=1}^m g(z_i)^2.
\end{equation}
Take $f(x) = \nicefrac{1}{\sqrt m} \sum_{j=1}^m \lambda_j k(z_j, x)$, for $\lambda_1, \dots, \lambda_m \sim \calN(0, \Id)$. Then, for any $g \in \calH$, by the reproducing property it holds that
\begin{equation*}
    \langle f, g \rangle_\calH = \frac{1}{\sqrt m} \sum_{j=1}^m \lambda_j g(z_j) \sim \calN\left(0, \frac{1}{m} \sum_{i=1}^m g(z_i)^2\right),
\end{equation*}
which is exactly the Gaussian measure with covariance operator $C_m$, as per~\Cref{eq:defn_of_gm,eq:empirical_integral_op_explicit}.

\section{Additional Numerical Results}\label{appendix:experiments}

\subsection{Type I control}

We report the Type I control experiments for the CIFAR-10 v.s. CIFAR-10.1 experiment. Results are shown in Figure \ref{fig:type-1-result}.

\begin{figure}
    \centering
    \includegraphics[width=0.5\linewidth]{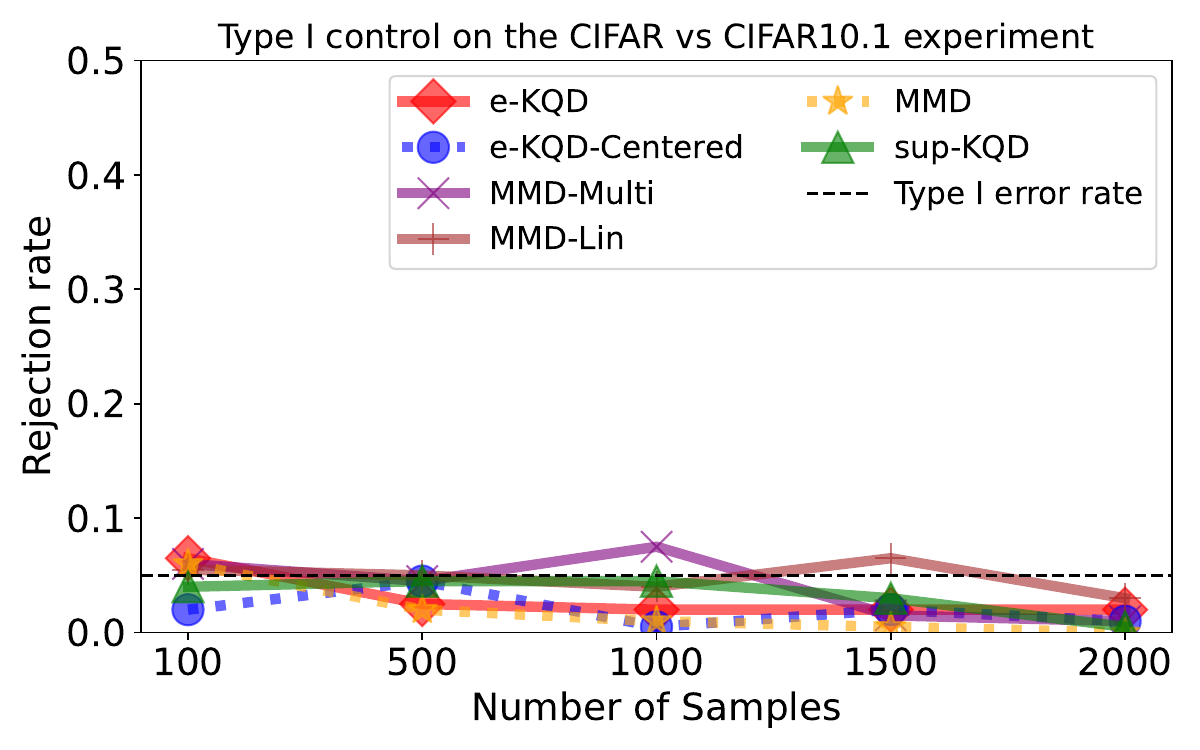}
    \caption{Type I control results for our experiment on CIFAR-10 v.s. CIFAR-10.1. We see all methods control their Type I error around or below the specified Type I error rate $0.05$, thus confirming our tests in the main text are valid testing procedures.}
    \label{fig:type-1-result}
\end{figure}

\subsection{\Cref{fig: main_experiments} for $\ekqd_1$}
\label{sec:exp_for_p1}

It is common in power $p$-parametrised methods to select $p=2$, to balance out sensitivity to outliers (which is higher for larger $p$, to the point of methods becoming brittle for $p > 2$), and robustness (which tends to be highest for $p=1$); this trade-off, for instance, inspired the introduction of the Huber loss~\citep{huber1964robust}. However, for completeness, we now repeat experiments in the main paper for $p=1$. The relationship to baseline approaches---MMD, MMD-Multi, and MMD-Lin---remains the same as observed for $p=2$. However, it is evident that $\ekqd_1$ performed better than $\ekqd_2$ at the power decay and galaxy MNIST experiments, but the centered $\ekqd_1$ performed worse than centered $\ekqd_2$ at the Laplace v.s. Gaussian experiment. The implications of choosing $p$ warrants a deeper investigation, left to future work.

\begin{figure*}[t]
    \centering
    \includegraphics[width=\linewidth]{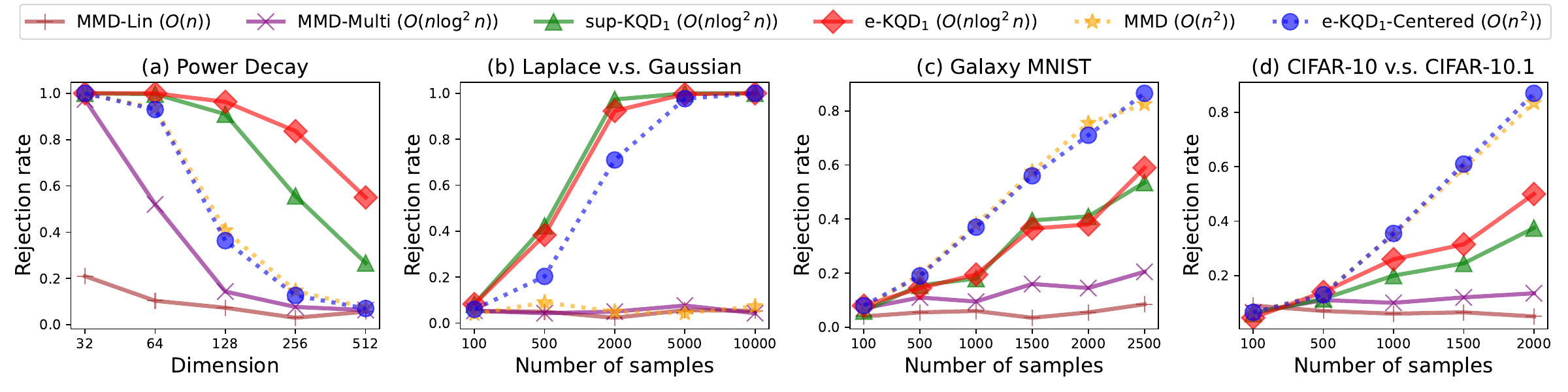}
    \caption{The experiments in~\Cref{fig: main_experiments} repeated for $p=1$. Experimental results comparing our proposed methods with baseline approaches. A higher rejection rate indicates better performance in distinguishing between distributions. \textbf{Same as for $p=2$, quadratic-time quantile-based estimators perform comparably to quadratic-time MMD estimators, while near-linear time quantile-based estimators often outperform their MMD-based counterparts.}}
    \label{fig: main_experiments_for_p1}
\end{figure*}

\subsection{Comparison of weighting measures}

The Gaussian Kernel Quantile Discrepancy introduced in~\Cref{sec:estimator} has multiple weighting measures that determine properties of the distance: the measure $\nu$ on the quantile levels, the measure $\xi$ within the covariance operator, and the measure $\gamma$ on the unit sphere $S_\calH$. We investigate the impact of varying these.

\paragraph{Varying $\nu$.} We conducted the following experiment using the Galaxy MNIST and CIFAR datasets. We varied $\nu$, from assigning more weight to the extreme quantiles to down-weighting them. The results are presented in~\Cref{fig:varying_combined}, where the reverse triangle $\setminus/$ stands for up-weighing extreme quantiles, and the triangle $/\setminus$ stands for down-weighing them. We observed some improvement over the uniform $\nu$: for Galaxy MNIST, test power improved when $\nu$ assigned less weight to extremes, whereas for CIFAR, the opposite was true, with higher test power when more weight was given to extremes. Uniform weighting of the quantiles remained a good choice. This suggests that tuning $\nu$ beyond the uniform is problem-dependent and can enhance performance. The difference likely arises from the nature of the problems: CIFAR datasets, where samples are expected to be similar, benefit from emphasising extremes, while Galaxy MNIST, which contains fundamentally different galaxy images, performs better when “robustified,” i.e., focusing on differences away from the tails. Exploring this further presents an exciting avenue for future work.

\begin{figure*}[t]
    \centering
    \includegraphics[width=0.8\linewidth]{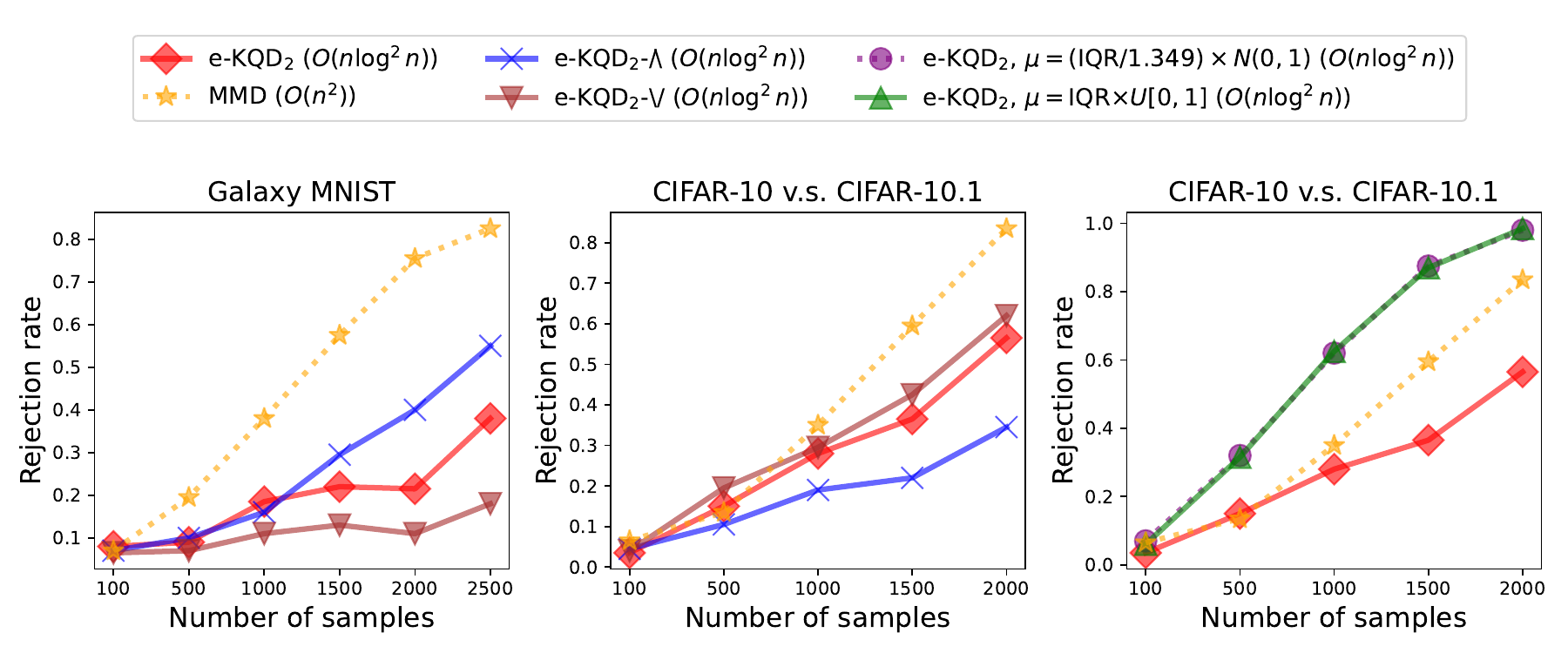}
    \caption{Gaussian KQD test power under different weighting measures. \textit{Left, middle:} Varying measure $\nu$: down-weighing ($/\setminus$) extremes boosts power on Galaxy MNIST, while up-weighing ($\setminus/$) them helps on CIFAR. Uniform weighting remains a strong default, with optimal $\nu$ depending on the dataset.
    \textit{Right:} Varying measure $\xi$: using an IQR-scaled Gaussian or uniform default reference measure $\xi$ both outperform MMD---indicating potential advantage of a "default" $\xi$ over the problem-based $\xi=(P_n + Q_n)/2$.}
    \label{fig:varying_combined}
\end{figure*}

\paragraph{Varying $\xi$.} The reference measure $\xi$ in the covariance operator $C$ serves to "cover the input space" and is typically set to a "default" measure on the space---for $\Re^d$, the standard Gaussian measure. The choice $(P_n + Q_n)/2$ made in the main body of the paper is aiming to adhere to the most general setting, when no default measure may be available---only $P_n$ and $Q_n$.

We report a comparison on performance when the reference measure is: (1) $(P_n + Q_n)/2$; (2) a standard Gaussian measure, scaled by IQR/1.349 to match the spread of the data, where IQR is the interquantile range of $P_n + Q_n$, and 1.349 is the interquantile range of the standard Gaussian; and
(3) a uniform measure on $[-1, 1]^d$, scaled by IQR.

The results, presented in~\Cref{fig:varying_combined}, show performance superior to MMD for the standard/uniform $\xi$. This indicates value in picking a "default" measure when one is available.

\paragraph{Varying $\gamma$} Varying the measure on the sphere beyond a Gaussian is extremely challenging in infinite-dimensional spaces due to the complexity of both its theoretical definition and practical sampling. Since no practically relevant alternative has been proposed, we leave this direction unexplored.

\subsection{Comparison to sliced Wasserstein distances}

We extend the power decay experiment to include sliced Wasserstein and max-sliced Wasserstein distances, with directions (1) sampled uniformly on the sphere, and (2) sampled from $(P_n + Q_n)/2$ and projected onto the sphere. The results are plotted in~\Cref{fig:sw_and_kme_approx}, and show that sliced Wasserstein distances perform significantly worse than $\ekqd$. This outcome is expected---as noted in~\Cref{res:connections_slicedwasserstein,res:connections_maxslicedwasserstein}, sliced Wasserstein is equivalent to $\ekqd$ with the linear kernel, which is less expressive than the Gaussian kernel.

\begin{figure*}[t]
    \centering
    \includegraphics[width=0.8\linewidth]{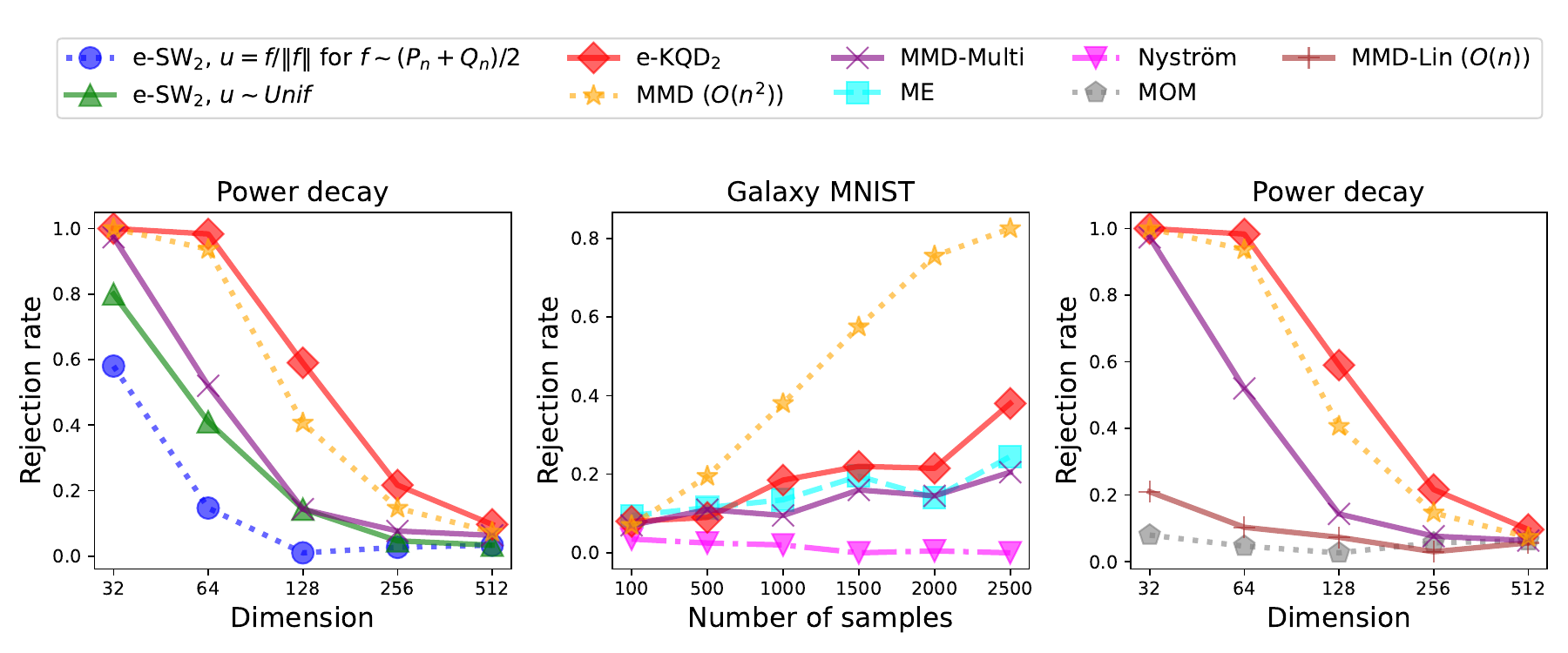}
    \caption{All methods are cost $\bigo(n \log^2 n)$ unless specified otherwise. \textit{Left:} Gaussian KQD compared with sliced Wasserstein with uniform or data-driven directions, on the power decay problem. Sliced Wasserstein fall well below KQD---consistent with their equivalence to KQD using a less expressive linear kernel.
    \textit{Middle:} Comparison with alternative approximate KME methods, at matching cost. ME matches MMD-multi power, while Nyström-MMD suffers high Type II error.
    \textit{Right:} Comparison with Median-of-Means (MOM) KME approximation, at matching cost. MOM is primarily a robustness-enforcing method, not a cheap-approximation method, and doesn't perform well at set cost of $\bigo(n \log^2 n)$.}
    \label{fig:sw_and_kme_approx}
\end{figure*}

\subsection{Comparison with MMD based on Other KME Approximations}

There are several efficient kernel mean embedding methods available in the literature, and no single approach has emerged as definitively superior. To complement experiments in the main body of the paper, we compare the $\ekqd$ (at matching cost) with (1) The Mean Embedding (ME) approximation of MMD of~\citet{chwialkowski2015fast}, which was identified as the best-performing method in their numerical study; (2) the Nystr\"om-MMD method of~\citet{chatalic2022nystrom}, and (3) the Median-of-Means (MOM) approximation of~\citet{lerasle2019monk}, specifically, their faster method (MONK BCD-Fast) that achieves matching cost to our e-KQD at the number of blocks $Q=n/\log n$.

The results are presented in~\Cref{fig:sw_and_kme_approx}. ME performs at the level of MMD-multi, while Nystr\"om has extremely high Type II error, likely due to sensitivity to hyperparameters. Due to Median-of-Means still being considerably slower than e-KQD (with the number of optimiser iterations set to $T=100$), we apply it to a cheaper Power Decay problem (rather than the larger and more complicated Galaxy MNIST), where it performs at the level of the linear approximation of MMD. This may be due to MOM primarily being robustness-enforcing method, rather than a method aiming to build an efficient approximation of MMD.

\end{document}